%% file: main.tex
\documentclass[sigconf]{acmart}
\setcopyright{none}
\renewcommand\footnotetextcopyrightpermission[1]{}
\acmConference{}{}{}

\usepackage{amsmath,amsthm}
\usepackage{graphicx}
\usepackage{booktabs,tabularx,multirow,longtable}
\usepackage{enumitem}
\usepackage{algorithm}
\usepackage{algpseudocode}
\usepackage{subcaption}
\usepackage{wrapfig}
\usepackage{afterpage}
\usepackage{dblfloatfix}
\usepackage{extdash}
\usepackage{pifont}
\usepackage[dvipsnames]{xcolor}

\theoremstyle{plain} % italic body, bold heading
\newtheorem{theorem}{Theorem}[section]
\newtheorem{proposition}{Proposition}
\theoremstyle{definition} % upright body, bold heading
\newtheorem{definition}{Definition}[section]
\definecolor{revision}{rgb}{0,0,0}

\AtBeginDocument{%
  }

\begin{document}

%%
%% The "title" command has an optional parameter,
%% allowing the author to define a "short title" to be used in page headers.
\title{ExDBSCAN: Explaining DBSCAN with \texorpdfstring{\\}{} Counterfactual Reasoning - Additional Material}

%%
%% The "author" command and its associated commands are used to define
%% the authors and their affiliations.
%% Of note is the shared affiliation of the first two authors, and the
%% "authornote" and "authornotemark" commands
%% used to denote shared contribution to the research.
\author{Pernille Matthews}
\orcid{0000-0001-8694-5803}
\authornote{Authors contributed equally to this research.}
\email{matthews@cs.au.dk}
%\orcid{1234-5678-9012}
\affiliation{%
\institution{Aarhus University}
\department{Department of Computer Science}
  \city{Aarhus}
  \country{Denmark}
}

\author{Lena Krieger}
\orcid{0009-0005-1894-7539}
\authornotemark[1]
\email{l.krieger@fz-juelich.de}
\affiliation{\institution{IAS-8, Forschungszentrum Jülich}
\city{Jülich}
\country{Germany}
}
  \affiliation{
  \institution{LMU Munich, MCML}
  \city{Munich}
  \country{Germany}}
  %\affiliation{\institution{LMU Munich}
  %\city{Munich}
  %\country{Germany}}

\author{Tommaso Amico}
\orcid{0009-0000-4652-2793}
\authornotemark[1]
\email{tomam@cs.au.dk}
%\orcid{1234-5678-9012}

\affiliation{%
\institution{Aarhus University}
\department{Department of Computer Science}
  \city{Aarhus}
  \country{Denmark}
}

\author{Arthur Zimek}
\orcid{0000-0001-7713-4208}
\email{zimek@imada.sdu.dk}
\affiliation{%
  \institution{University of Southern Denmark}
  \department{Department of Computer Science and Mathematics}
  \city{Odense}
  \country{Denmark}
}

\author{Thomas Seidl}
\orcid{0000-0002-4861-1412}
\email{seidl@dbs.ifi.lmu.de}
\affiliation{%
  \institution{LMU Munich, DBS, MCML}
 % \department{Database Systems and Data Mining}
  \city{Munich}
  \country{Germany}
}

\author{Ira Assent}
\orcid{0000-0002-1091-9948}
 \email{ira@cs.au.dk}
\affiliation{%
 \institution{Aarhus University}
 \department{Department of Computer Science}
 \city{Aarhus}
 \country{Denmark}
 }
 \affiliation{
 \institution{IAS-8, Forschungszentrum Jülich}
 \city{Jülich}
 \country{Germany}}

%%
%% By default, the full list of authors will be used in the page
%% headers. Often, this list is too long, and will overlap
%% other information printed in the page headers. This command allows
%% the author to define a more concise list
%% of authors' names for this purpose.
\renewcommand{\shortauthors}{Matthews, Krieger, Amico et al.}

%%
%% The abstract is a short summary of the work to be presented in the
%% article.
\begin{abstract}
 Clustering is an unsupervised technique for grouping data points by similarity. While explainability methods exist for supervised machine learning, they are not directly applicable to clustering, making it challenging to understand cluster assignments. This interpretability gap is particularly evident in the popular density-based method DBSCAN, which assigns points as inliers (cluster members in dense regions) or outliers (noise points in sparse regions). DBSCAN does not provide insight into why a particular point receives its assignment or whether its assignment is robust to small changes in the data. To address the lack of explainability, we introduce ExDBSCAN, a density-aware, post-hoc explanation method. ExDBSCAN offers actionable counterfactual explanations, with theoretical guarantees for validity. It generates multiple counterfactuals using a density-connected weighted graph, adopting a physics-inspired model that repels counterfactual candidates from one another (diversity), while pulling them toward the instance to explain (proximity). Empirical evaluation on $30$ tabular datasets comparing against four baselines shows that ExDBSCAN outperforms all baselines while attaining perfect validity and retrieving diverse, proximal counterfactuals.
\end{abstract}

%%
%% The code below is generated by the tool at http://dl.acm.org/ccs.cfm.
%% Please copy and paste the code instead of the example below.
%%
\begin{CCSXML}
<ccs2012>
<concept>
<concept_id>10010147.10010257.10010258.10010260.10003697</concept_id>
<concept_desc>Computing methodologies~Cluster analysis</concept_desc>
<concept_significance>500</concept_significance>
</concept>
<concept>
<concept_id>10002950.10003648.10003662.10003665</concept_id>
<concept_desc>Mathematics of computing~Computing most probable explanation</concept_desc>
<concept_significance>500</concept_significance>
</concept>
</ccs2012>
\end{CCSXML}
\ccsdesc[500]{Computing methodologies~Cluster analysis}
\ccsdesc[500]{Mathematics of computing~Computing most probable explanation}

%%
%% Keywords. The author(s) should pick words that accurately describe
%% the work being presented. Separate the keywords with commas.
\keywords{XAI, Counterfactuals, Explainability, Clustering, DBSCAN}
%% A "teaser" image appears between the author and affiliation
%% information and the body of the document, and typically spans the
%% page.

\received{20 February 2007}
\received[revised]{12 March 2009}
\received[accepted]{5 June 2009}

%%
%% This command processes the author and affiliation and title
%% information and builds the first part of the formatted document.
\maketitle

\input{Sections/introduction}
\input{Sections/relatedWorks}
\input{Sections/methods}
\input{Sections/experiments}
\input{Sections/conclusions}
\begin{acks}
This work was partially funded by the \grantsponsor{1}{Pioneer Centre for AI}{}, DNRF grant number \grantnum[]{1}{P1}, partially supported by project \grantnum[]{2}{W2/W3-108} Initiative and Networking Fund of the \grantsponsor{2}{Helmholtz Association}{} and part-funded by the Marie Skłodowska-Curie Doctoral Network RELAX-DN, funded by \grantsponsor{3}{EU under Horizon Europe 2021-2027}{} FP Grant Agreement nr. \grantnum[]{3}{101072456} (www.relax-dn.eu/).
\end{acks}

\clearpage
\bibliographystyle{ACM-Reference-Format}
\bibliography{ref}
\appendix
\input{Sections/appendix}
\end{document}

%% file: Sections/introduction.tex
\section{Introduction}
    Clustering is a cornerstone of unsupervised machine learning, enabling discovery of patterns and groupings in data without relying on predefined labels. Our work focuses on Density-Based Spatial Clustering of Applications with Noise (DBSCAN)~\citep{Ester1996density}, an algorithm known for its ability to find clusters of arbitrary shape and to identify outliers (also denoted as noise) in the process. DBSCAN is popular across domains ranging from computer vision~\citep{figueiredo2024analyzing} to bioinformatics~\citep{francis2011simulation} and fraud detection~\citep{luo2024ai}. Nevertheless, a limitation of DBSCAN (and clustering methods generally) is their lack of explanation, i.e., the absence of information on \emph{why} a particular point belongs to a particular cluster or not, which is crucial for interpretability, user trust, and actionable insights~\citep{miller2019explanation, molnar2020interpretable}.
    %While statistical analysis may be helpful to interpret Gaussian-shaped clusterings, this is not the case for arbitrary-shaped clusters, that can generally not be described by their convex hull.

    In supervised learning, counterfactual explanations (CEs) have emerged as an impactful tool, identifying minimal changes to an input that would alter the model's prediction, offering intuitive ``what-if'' scenarios~\citep{guidotti2024counterfactual,poyiadzi2020FACE,karimi2020algorithmic,mothilal2020dice}. Counterfactual reasoning builds user trust in the model and reflects how human stakeholders formulate explanations, making counterfactuals easy to interpret~\citep{miller2019explanation}. For classification tasks, a typical CE shows how a rejected loan application could be altered to become approved.

    %We claim that CEs for clustering bring important benefits. Beyond explaining individual points, multiple counterfactuals can reveal alternative ways in which a point could transition between clusters, offering complementary perspectives on the underlying structure. Furthermore, analysing how the set of counterfactual changes across several explained points enables the possibility of highlighting which features are most influential for cluster membership. For instance, we can identify if outliers systematically need to increase on a particular feature to become inliers, or if points from one cluster need to change a specific attribute to belong to another. These patterns not only enhance interpretability but also enable practitioners to make informed decisions about data preprocessing, feature engineering, and outlier treatment based on a deeper understanding of the clustering structure. 

    CEs are beneficial for clustering. Beyond explaining individual points, multiple counterfactuals can reveal alternative ways in which a point could transition between clusters, offering complementary perspectives on the underlying structure. Furthermore, analysing how counterfactuals change across several explained points can suggest which features are most relevant to cluster membership. These patterns not only enhance interpretability but also have the potential to help practitioners make more informed decisions about data preprocessing, feature engineering, and outlier treatment. While a formal user study is beyond the scope of this work, we illustrate this interpretive potential through concrete counterfactual examples on tabular datasets.

    Although DBSCAN provides transparent mechanisms including interpretable rules, they do not answer the fundamental question of \emph{what should I change}. For noise points specifically, DBSCAN identifies isolation but cannot indicate which features lead to isolation or how to resolve it.
    Devising CEs for DBSCAN presents major challenges. First, the cluster assignment process is not based on a differentiable loss function, preventing the counterfactual method from having access to model gradients, often used to guide counterfactual search~\citep{mothilal2020dice, watcher2017counterfactual}. Moreover, while classification algorithms often produce class probabilities, a membership probability, used by several model-agnostic counterfactual methods~\citep{mothilal2020dice, yang2022mace}, is difficult to define in hard clustering contexts. Furthermore, DBSCAN can label isolated points as noise, which can be far from each other, scattered across the whole feature space and share few similarities, giving them a boundary very different from classes in a classification scenario. 

    Model-agnostic Bayesian and genetic counterfactual methods for classification generally do not need access to gradients or class probabilities. Their application, however, is mostly computationally expensive~\citep{romashov22baycon}. Furthermore, different clustering algorithms rely on different concepts of similarity. This can impact counterfactual search, for example when ensuring diversity, i.e.,\ making sure that different counterfactuals for the same input are not repetitive but instead add meaningful actionable alternatives. In the DBSCAN case, two samples could be close in the traditional Euclidean distance leveraged by most model-agnostic methods, while at the same time, being connected only by a long density-connected path, making the two points effectively distant.

    In this work, we address these challenges by leveraging counterfactual reasoning to explain density-based clustering. We propose \textbf{Ex}plaining \textbf{DBSCAN} with counterfactual reasoning (ExDBSCAN), the first method tailored to generate CEs for DBSCAN. Given a specific point, DBSCAN's assignments, and its parameter settings, ExDBSCAN generates counterfactuals that are both proximal (close to the original instance for realistic changes) and diverse (providing multiple non-redundant alternatives), while accounting for DBSCAN's similarity definition. To achieve this, ExDBSCAN models the problem as a physical optimisation system, defining an undirected weighted graph that captures DBSCAN's density-based similarity structure. The system treats candidate counterfactuals as charged-like particles that repel each other to ensure diversity, and uses spring-like forces to maintain proximity to the explained instance. Our approach provably provides perfect validity, i.e.,\ every counterfactual attains the correct cluster assignment, for both noise-to-cluster and cluster-to-cluster transitions. Furthermore, ExDBSCAN respects feature constraints by handling non-actionable attributes, e.g.,\ the number of previous hospital visits in a tabular dataset of clinical analysis, where changing these attributes would make the explanation unrealistic, with the goal of ensuring explanations remain practical and applicable. 
    
    Our contributions include:
    \begin{itemize}%[nosep,leftmargin=10pt]
        \item We introduce ExDBSCAN, the first method to generate counterfactual explanations designed for DBSCAN, covering both noise-to-cluster and cluster-to-cluster transitions.

        \item ExDBSCAN generates diverse counterfactuals under our novel physics-inspired model that repels candidate counterfactuals from one another (diversity) while pulling them toward the point to explain (proximity), while also accounting for cluster structure captured in our graph representation.

      %  \item We provide theoretical guarantees that every counterfactual is valid, which extends to non-actionable features for cases where a feasible counterfactual exists.

        \item On $30$ OpenML tabular datasets, ExDBSCAN outperforms all baselines by obtaining  lower proximity, higher multi-counterfactual diversity and perfect validity.
        %Performances remain superior when a subset of the features is deemed non-actionable.
    \end{itemize}

%% file: Sections/relatedWorks.tex
\section{Related Work}\label{sec:related_work}
    % In the following related work, we outline supervised model-agnostic CE methods, followed by explainable clustering analysis and finally, counterfactual explanations for clustering and outlier detection.
    %We review supervised, model-agnostic counterfactuals; explainable clustering; and counterfactuals for clustering and outliers.
    %
%
    \textbf{Supervised Counterfactual Explanations.} In supervised learning, \citet{watcher2017counterfactual} formally introduce CEs as answers to ``what is the smallest change to input $x$ that would alter the model's prediction?'', and subsequent work optimises criteria such as \emph{proximity}, \emph{validity}, \emph{diversity}, and \emph{actionability}~\citep{guidotti2024counterfactual,mehdiyev2024counterfactual,yuan2024multigranular}. The method by~\citet{watcher2017counterfactual} makes use of gradients to guide the counterfactual search and can thus only be applied to differentiable methods.
       
   DiCE~\citep{mothilal2020dice}, extending this concepts to account for \emph{diversity}, returns a set of diverse CEs. While DiCE was originally designed for differentiable models only, it has since been adapted to be model-agnostic. FACE~\citep{poyiadzi2020FACE} follows data-supported feasible paths on the empirical manifold and returns a plausible path together with a counterfactual endpoint that alters model prediction. MACE~\citep{yang2022mace} uses a reinforcement-learning policy and returns one or more counterfactuals with minimal changes. The model-agnostic Growing Spheres algorithm~\citep{laugel2023achieving} explores the input space by expanding hyperspheres until the model decision boundary is reached.

   Another model-agnostic counterfactual method is BayCon~\citep{romashov22baycon}. Unlike gradient-based methods, BayCon uses Bayesian optimisation with probabilistic feature sampling to explore the feature space without requiring differentiable model outputs. It treats the clustering algorithm as a black box, repeatedly querying it with candidate counterfactuals and using the discrete cluster assignments to guide its search. However, it does not exploit DBSCAN's specific density-connectivity structure that ExDBSCAN leverages for efficient and theoretically grounded counterfactual generation.

   All methods need to repeatedly query a supervised model for class probabilities at candidate points and then choose based on closeness to the target class. By contrast, DBSCAN provides only discrete, connectivity-based memberships (core, border, noise) defined by $\varepsilon$ and $minPts$, with no continuous quantity to optimise against. A workaround could be to fit a surrogate classifier to predict cluster memberships and then apply supervised CE methods; however, the surrogate’s decision regions would then reflect its own similarity notion rather than density-reachability.

    \textbf{Explainable Clustering.} 
        %The need to understand and explain clustering results has motivated various approaches to clustering interpretability. Unlike supervised learning, where explanations often focus on decision boundaries, clustering explanations must describe why groups form and what distinguishes them. 
    Global explainable clustering methods aim to summarise entire clusterings or cluster characteristics in human-understandable forms. \cite{moshkovitz2020explainable} replace $k$-means/$k$-medians clusters with compact interpretable decision trees. Similarly, \cite{bandyapadhyay2023find} search for small trees that match a given partitioning. Prototype-based methods characterise each cluster by representative examples or “criticism” points, e.g., \cite{kim2016mmdcritic} select prototypical instances and outliers for each cluster. Such global explanations convey what defines clusters overall, but they do not explain individual assignments.

    In contrast, local explainability focuses on specific data points. Agnostic attribution methods like FACT~\citep{scholbeck2023fact} quantify feature influence on a point's cluster assignment by perturbing features and observing changes in the clustering. Interactive tools like ClusterVision~\citep{kwon2018clustervision} visualise effects of input parameters on clusters and outliers. 

    While global summaries, prototypes, feature attributions, and visual analytics improve understanding of clustering results, they do not provide actionable local changes for a given point. No existing method can explain how a particular DBSCAN inlier or outlier could change assignment through minimal feature changes, with guarantees of valid membership under DBSCAN’s density-reachability. This gap motivates ExDBSCAN.%, and focuses on local actionability which generates concrete perturbations to an individual point guaranteeing a change in its DBSCAN assignment.
        %a capability that prior clustering explanation methods lack.
        % take away 
        %In DBSCAN, clusters can form complex non-convex shapes where membership depends on density-reachability, a point may satisfy all tree conditions yet be labelled as noise if it lacks sufficient neighbours, or belong to a cluster through a chain of density-connected points that no single threshold can capture. While these provide intuitive cluster summaries, prototype centrality fundamentally misaligns with DBSCAN's mechanisms as membership depends on local neighbour counts and connectivity paths rather than similarity to central points. Such attributions are useful globally, yet for density-based clustering, they do not provide actionable, point-wise instructions that guarantee a change of membership under DBSCAN's $\varepsilon$-reachability criterion. These systems aid understanding and model selection but do not answer the counterfactual question ``what minimal change moves this specific point into another cluster?'' nor do they provide DBSCAN-validity guarantees. These approaches provide a concise global structure but rely on threshold-based rules that poorly capture DBSCAN's behaviour.

    \textbf{Counterfactuals for Clustering or Outlier Detection.} 
        Some recent work studies CEs for clustering. For prototype/mixture-style clustering, \citet{vardakas2025cfkmeans} provide the first analytic formulations: for $k$-means, they derive a closed-form move that crosses the Voronoi boundary into a target cluster, and for Gaussian clustering, devise and solve a non-linear formulation. % (under full/diagonal/spherical covariances)
        These methods leverage geometry and thus do not apply to density-based algorithms, where membership is defined by density-reachability rather than distances to prototypes.
        \citet{spagnol2024counterfactual} build on BayCon and propose model-specific scores: a distance-based score for $k$-means (scaled distance to the target centroid) and a membership-probability score for HDBSCAN* (using the algorithm’s soft-clustering outputs). These scores guide a zero-order Bayesian optimisation loop to find CEs. While this makes BayCon-style search applicable to HDBSCAN*, it uses HDBSCAN*'s probabilistic model, unavailable for DBSCAN. Moreover, it provides no density-reachability guarantees and does not explain DBSCAN’s core/border/noise status.

        Beyond clustering, counterfactuals have been explored for unsupervised outlier detection. \citet{zhou2025eace} propose EACE which for a given outlier generates one or more counterfactual samples that an anomaly detection model would classify as an inlier. \citet{Angiulli2023counterfactuals} similarly "repair" an outlier via a subset of features and adjusted values (a ``mask'') that would turn it into an inlier. 
        Both EACE and the density-contrastive “repair” methods target outlier detection models that expose a continuous anomaly score (e.g., one-class SVM, kernel density estimators, autoencoders, LOF-style scores). Their goal is to decrease the anomaly score so that a point crosses a detector-specific decision threshold and becomes ``normal,'' suitable for detectors where ``inlierness'' is defined by that score. DBSCAN does not provide such a score, soft memberships, or gradients; it yields discrete core/border/noise assignments based on $\varepsilon$-reachability and $minPts$. Consequently, these counterfactual outlier detection methods optimise an objective that is orthogonal to ours, since they can make a point ``less anomalous'' in a detector’s sense without guaranteeing that it becomes density-connected to a DBSCAN cluster. In contrast, ExDBSCAN includes both cluster-to-cluster and noise-to-cluster transitions.

%% file: Sections/methods.tex
\section{Counterfactual Explanations for DBSCAN}\label{sec:method}
   % We describe DBSCAN and counterfactual explanations (Sect.~\ref{subsec:dbscan}), before  introducing density-based counterfactuals (Sect.~\ref{subsec:cec}) and our novel method, ExDBSCAN (Sect.~\ref{subsec:cfexp}). 
    %An overview of notation is provided in App.~\ref{app:notation}.
%
 %   \subsection{Preliminaries}\label{subsec:dbscan}
        Consider dataset $X=(x_1, \dots, x_l), x_i \in \mathbb{R}^n$ and cluster assignment from points $\mathcal{C}: X \rightarrow \{-1, 0,\dots, {m-1}\}$ to one of $m$ cluster labels, or to noise ($-1$). 
        The popular density-based clustering approach DBSCAN~\citep{Ester1996density} groups points based on a similarity notion modelled as density-reachability. Clusters are maximal areas of density-connected points, separated from other clusters through sparse areas. It uses two parameters: $\varepsilon$, which defines the radius of local density assessment, and $minPts$, which specifies the minimum number of points required to form a dense region. The \emph{$\varepsilon$-neighbourhood} are the points within radius $\varepsilon$ of a point $p$: $\mathcal{N}_{\varepsilon}(p) = \{ q \mid \|p-q\|_2\leq\varepsilon, \emph{ with } p,q \in X \}$. 
        DBSCAN distinguishes between dense \emph{core points},  \emph{border points} close to core points, and \emph{noise points} or outliers, that do not belong to any cluster: 
        \begin{definition}[Core, border, and noise points]
            The set of \emph{core points} $Q$ are points $p$ that have at least $minPts$ points in their $\varepsilon$-neighbourhood:
             \(%begin{equation}\label{eq:corepoint}
                    Q = \{ p \in X \mid \vert\mathcal{N}_{\varepsilon}(p)\vert \geq minPts \}.\,
            \)%end{equation}
          Points in $X \setminus Q$ are \emph{border points} $B$, if they have at least one core point in their neighbourhood, else they are \emph{noise}.
        \end{definition}

        \paragraph{Counterfactual Explanations (CEs).} CEs were first proposed as an optimisation problem by~\citet{watcher2017counterfactual} for classifiers:
        \begin{definition}[Counterfactual Explanations]\label{def:cfexp} 
                A set of $k$ CEs for a point $p$ under a classification model $f$, denoted as $C_k(p)$ minimises a cost function for which the classification outcome changes: 
            \begin{equation}\label{eq:cfexp}
                  C_k(p) = \underset{p_1',\dots,p_k'}{\mathrm{argmin}} \emph{ cost}(p, p') \emph{ subject to } f(p) \neq f(p')
            \end{equation}
        \end{definition}
        Counterfactual generation methods vary in the choice of cost function, typically balancing between desired properties, e.g., the distance between the original point $p$ and the counterfactual $p'$ or the pairwise distance between counterfactuals. 
        In the context of density-based clustering, the cost function in Definition~\ref{def:cfexp} takes into account the clustering algorithm's concept of similarity. 
            
        \subsection{Characteristics of Counterfactuals}\label{subsec:cec}
        Counterfactual explanations provide value for clustering tasks by addressing the interpretability challenges in unsupervised learning. Unlike supervised learning where model decisions can be evaluated against known labels, clustering methods function without predefined classes. This leaves the task of understanding why certain groups are formed or why points are assigned to a particular cluster. CEs bridge this gap by revealing the minimal changes required to alter a point's cluster assignment, thus explaining group differences and features that may drive cluster membership. Density-based clustering finds clusters of arbitrary shapes and can identify noise points. These capabilities can make the resulting clusterings difficult to interpret because the complex, non-convex cluster boundaries and the distinction between core, border, and noise points are not immediately obvious. Users cannot easily identify which specific features or density thresholds have led to the assignments. Noise-to-cluster and cluster-to-cluster CEs reveal key features that determine density-based cluster membership. % and explain noise-to-cluster and cluster-to-cluster counterfactuals.

        \textbf{Cluster Assignment.} 
            Counterfactual generation requires assigning out-of-dataset points so we can test whether a candidate lies in the target cluster. Because DBSCAN lacks a native assignment for new points, we keep the clustering fixed and define $\mathcal{C}$: %a point belongs to cluster $i$ if it falls within the $\varepsilon$-neighbourhood of any core point $q$ with $\mathcal{C}(q)=i$. $p$ belongs to cluster $i$ \textit{iff} there exists a core point $q$ with $\mathcal{C}(q) = i$ such that $p \in \mathcal{N}_{\varepsilon}(q)$.
            point $p$ belongs to cluster $i$ \textit{iff} there exists a core point $q$ with $\mathcal{C}(q) = i$ within the $\varepsilon$-neighbourhood, such that $q \in \mathcal{N}_{\varepsilon}(p)$.

%$$\mathcal{C}(p)=i \iff \exists q \in \mathcal{N}_{\varepsilon}(p), \text{s.t. } q \in  {C}_i \wedge \vert\mathcal{N}_{\varepsilon}(q)\vert \geq minPts $$

        %The counterfactual generation process requires defining cluster assignments for points outside the original dataset, such that we can determine whether a counterfactual candidate indeed is part of the target cluster. 
        %Unlike classifiers, DBSCAN lacks a native assignment function for new points. To ensure that we explain a given clustering without inadvertently manipulating it, and avoid having to re-cluster, we keep the original clustering fixed and define a cluster assignment function $\mathcal{C}$, where a point belongs to cluster $i$ if it falls within the $\varepsilon$-neighbourhood of any of the cluster's core points $q$:

        We restrict membership to core-point neighbourhoods, as placing a CE near a border does not ensure density-connectivity (and may violate DBSCAN's membership definition). Fixing the clustering also avoids the computational and conceptual pitfalls of re-clustering (e.g., merges or new clusters), ensuring we explain the original structure rather than one altered by the CEs. 

        %We restrict membership to neighbourhoods of core points, as placing a counterfactual only within the neighbourhood of a border point does not guarantee density-connectivity to the cluster (and thus may fail to satisfy DBSCAN's definition of cluster membership). Keeping a fixed clustering structure also avoids the computational and conceptual complications of re-clustering (e.g., potential cluster merging or formation of new clusters), ensuring that we explain the original clustering structure, and not the result of counterfactual explanations themselves.

        Generating multiple diverse CEs, rather than a single instance, enhances explanatory value by probing cluster-assignment boundaries from different directions, giving stakeholders a thorough view of the clustering algorithm's behaviour. Multiple examples help surface explanatory trends (e.g., features repeatedly changed likely matter for assignment) and reduce perceived arbitrariness, improving trust~\citep{miller2019explanation}. Regarding actionability (feasibility of the proposed change), options differ by context, e.g., in a medical setting, changing the respiratory rate vs.\ the oxygen saturation may be more or less realistic for a given patient; thus, offering several paths raises the chance that at least one is realistic and acceptable~\citep{stepin2021survey}. 
        
        %Offering several options increases the likelihood that at least one constitutes a realistic, acceptable intervention, enabling feasible and actionable paths to change the outcome of the model~\citep{stepin2021survey}.
        %e.g., adjusting respirayory rate vs.\ oxygen saturdation in clinical   
        %Generating multiple diverse CEs rather than a single instance enhances explanatory value by exploring cluster assignment boundaries from various directions, providing stakeholders with a more comprehensive understanding of the clustering algorithm's behaviour. Multiple examples helps in spotting explanatory trends, e.g., a feature which is heavily changed in several CEs likely carries high importance in the clustering method's assignment. From a cognitive stand point, not stopping with one CE increases user trust with the explanation looking less arbitrary and more robust as it is corroborated by multiple alternatives~\citep{miller2019explanation}. Regarding actionability, i.e.,\ how feasible the proposed counterfactual change is, different paths offer different feasibilities, which largely depend on the application scenario. For example in a medical setting, in the hypothetical scenario of changing the respiratory rate versus the oxygen saturation may be more or less realistic for a given patient. Offering several options increases the likelihood that at least one constitutes a realistic, acceptable intervention, enabling feasible and actionable paths to change the outcome of the model~\citep{stepin2021survey}.

        It is thus crucial that CEs demonstrate diversity~\citep{laugel2023achieving}. CEs that differ only marginally are redundant and fail to highlight differences among clusters~\citep{mothilal2020dice}. High redundancy adds no valuable information, for example, two counterfactuals that vary by only $0.1$ breaths per minute in respiratory rate are effectively indistinguishable and convey the same information.
        While diversity is essential, CEs that stray too far from the explained point risk suggesting unrealistic changes. If a CE bears little in common to the point being explained, human stakeholders may struggle to relate the two and reason about their differences. Proximity therefore remains fundamental to counterfactual reasoning, as it increases both the realism and feasibility while reducing cognitive load for stakeholders, as there are fewer overall changes~\citep{miller2019explanation}.
        %While noise points can be reassigned to clusters, we deliberately do not include the case of converting inliers to noise. 
 %
\subsection{ExDBSCAN Counterfactuals}\label{subsec:cfexp}
         %In the following, we now address the scenario where a specific target cluster is specified, slight modification of our algorithm in the setting we there is not a target cluster are discussed in Appendix X.
         ExDBSCAN addresses these requirements by generating valid sets of counterfactuals that balance proximity and diversity. Users can specify their number and the target cluster to meet their needs.

            \textbf{Inter-Cluster Concept of Distance.} 
            In CE literature, density is used as a proxy for the ease of moving between points or feasibility~\citep{poyiadzi2020FACE, zhou2025eace, yamao2024distribution, kanamori2020dace}. In DBSCAN, cluster members are density-connected, so a counterfactual that moves within a cluster stays in dense regions with similar neighbours, enabling realistic paths. However, because DBSCAN discovers non-convex shapes, two points can be close in Euclidean~\footnote{In this paper, we use the Euclidean distance as it is the most popular metric. A discussion on different metrics is in App.~\ref{subsec:metricchoice}} yet require a long density-connected path; straight-line distance can therefore overstate similarity. 

            %\footnote{In this paper, we use the Euclidean distance as it is the most popular metric. A discussion on different metrics is in App.~\ref{subsec:metricchoice}} 
            
            %In CE literature, density is used as a proxy for the ease of moving between points or feasibility~\citep{poyiadzi2020FACE, zhou2025eace, yamao2024distribution, kanamori2020dace}. In DBSCAN, points belonging to a cluster are density-connected. Thus, a counterfactual moving within a cluster remains in a dense region with similar points nearby, enabling movement through a realistic and feasible path between two points in a cluster. However, as DBSCAN finds arbitrary cluster shapes, a practical problem appears when two points in a cluster are close in Euclidean (straight-line) distance but still require a long density-connected path to reach one another. In other words, the straight-line distance may suggest high similarity, yet DBSCAN considers the points far apart because many intermediate steps are required to connect them.

            CEs that look redundant under Euclidean distance, i.e., proposals close in straight-line distance, may still be meaningfully diverse once DBSCAN's density structure is considered. Euclidean close points can lie in different density-reachable regions and thus represent distinct alternatives. Relying on Euclidean distance risks ``cutting through'' clusters and ignoring connectivity, yielding explanations that misstate feasibility. We illustrate this in Appendix~\ref{app:euclideanLimitation}.
            
            %Similarly, counterfactuals that appear redundant under Euclidean distance, that is, proposals located close together in straight-line and therefore seemingly offering little new information, may still provide meaningful diversity once DBSCAN's density-based structure is taken into account. Two points that are Euclidean close can belong to very different regions of a cluster when judged by density-reachability, and thus represent distinct alternatives. Hence, relying on Euclidean distance risks "cutting through" clusters and ignoring connectivity, which may result in explanations that misrepresent the true feasibility of moving within the cluster. We illustrate this concept Appendix~\ref{subsec:euclideanLimitation}.

            To address this challenge and incorporate DBSCAN's notion of similarity, we define proximity as the length of the weighted density-connected path. For each cluster, we build an undirected weighted graph $G(V,E)$ whose vertices $V$ are core points; edges connect two vertices if they are directly density-reachable (distance $< \varepsilon$) and edge weight is their distance. The distance between any two points is the weighted shortest-path in $G$. Thus, instances are ``close'' only if an easy path connects them, allowing evaluation of diversity while respecting the cluster's internal structure. %Ignoring this structure can misrepresent separations and, thereby, diversity. 

            %To address the challenges and incorporate DBSCAN's notion of similarity, we define proximity within a cluster as the length of the weighted density-connected path between points. For a given cluster, we construct an undirected weighted graph $G(V,E)$ with the cluster's core points as vertices $V$. Two vertices connect with an edge $E_i$ if they are directly density-reachable (distance $< \varepsilon$), with the distance between points as the edge weight. We then measure the distance between any two points in a cluster as the weighted shortest path distance in graph $G$. Through the weighted path distance, two instances will be considered close only if the path connecting the them is easily traversable. This approach allows us to evaluate diverse proposals while considering the cluster's inner structure. Diversity is evaluated and captured through pairwise distances, not accounting for the cluster's structure risks misrepresenting the distance between instances and consequently diversity.

        \textbf{Reference Core Point.} Given our definition of the assignment function, a CE is assigned to the target cluster iff it falls within the $\varepsilon$-neighbourhood of a cluster's core points. When selecting a point in the space as a CE, we naturally associate it with its nearest core point. To properly leverage the weighted graph $G$ that captures the cluster's structure, we identify the most appropriate reference core point for each proposed CE and generate the CE example within its $\varepsilon$-neighbourhood. Hence, a set of ExDBSCAN CEs corresponds directly to a set of vertices in graph $G$, namely the cluster's core points. We denote the set of CEs for point $p$ as $C_k(p)$, while $C_k'(p)$ represents the corresponding set of reference core points.

        \textbf{Proximity Through Attraction.} By definition, the most proximal CEs are the ones closest to the point to explain $p$. Given $k$ CEs, the set of core points $C'(p)$ maximising proximity will thus be composed of the $k$ nearest core points of $p$. As $p$ and candidate core points are not density connected, the distance metric used is the one employed for fitting DBSCAN.

        \textbf{Diversity Through Repulsion.} 
        In CEs, a set is diverse when its members are mutually dissimilar~\citep{mothilal2020dice}. One class of methods enforces this by maximising pairwise distances; variants of the maximum diversity problem (MDP)~\citep{laugel2023achieving, ley2022diverse}. Another models diversity via repulsion, minimising similarity between selected instances; determinantal point processes are common realisations, probabilistically favouring diverse subsets~\citep{mothilal2020dice, kulesza2012determinantal, afrabandpey2022feasible}.

        %In counterfactual literature, a set of points is diverse when they exhibit dissimilarity to one another~\citep{mothilal2020dice}. Various algorithms achieve this through different mathematical formulations. One class of algorithms captures diversity by maximising the sum or average pairwise distance between CEs~\citep{laugel2023achieving, ley2022diverse}, thereby solving variations of the maximum diversity problem (MDP): selecting a subset that maximises the sum of pairwise distances from a given set of elements~\citep{marti2013heuristics}. An alternative class of algorithms models diversity through repulsion-like interactions, effectively minimising similarity between chosen instances~\citep{mothilal2020dice, kulesza2012determinantal, afrabandpey2022feasible}. This latter family implements variations of determinantal point processes, which probabilistically favours the selection of diverse items through minimal similarity~\citep{kulesza2012determinantal}.

        %Euclidean distance alone is inadequate for DBSCAN, whose similarity is density-connectivity. We therefore adopt a repulsion-based approach under this notion. As candidates converge, repulsive forces grow, preventing redundant CEs. Unlike pure distance maximisation, which can place several near-

        We observe that simply adopting (Euclidean) distance falls short for DBSCAN: its very notion of similarity is density-connectivity. Thus, the challenge is to model proximity and diversity using density-connectivity. We adopt a repulsion-based approach that prevents redundant counterfactuals, as repulsion forces increase substantially as points converge. In contrast, maximising pairwise distances remains more forgiving toward closely positioned instances, provided that other point pairs maintain sufficient separation to preserve overall diversity. Also, repulsion interactions naturally produce more evenly distributed selections across the solution space. In our density-based setting, we implement this diversity measure using the weighted shortest path distance defined by graph $G$ to evaluate pairwise distances in a cluster. This ensures that our diversity metric respects the cluster's density-connectivity structure rather than relying on simple Euclidean distances that could cut through the cluster and misrepresent the actual relationships between points.

        \textbf{Balancing Proximity and Diversity.}   
        ExDBSCAN generates multiple CEs that ensure both proximity and diversity; an inherent trade-off since closer alternatives are often less diverse. We resolve this by modelling a physical system and selecting its equilibrium (the minimum-energy configuration), where the proximity-diversity balance is optimal. 

        %For multiple counterfactuals, ExDBSCAN ensures both proximity and diversity. Their balance constitutes an inherent trade-off as proximal alternatives are often not diverse and vice-versa. To solve this multi-objective optimisation, we reformulate the problem as a physical system for which we find the equilibrium, i.e., the configuration that minimises the total energy of the system. At equilibrium, the trade-off between diversity and proximal is optimal.
%
        For diversity, we treat candidate core points as like-charged particles with Coulomb's electrostatic law~\citep{halliday2013fundamentals}, pushing already selected cores apart so the minimum-energy set is maximally spread; making the minimum energy configuration the most diverse. 
        
        %For diversity, we treat the elements of the set of core points as charged-like particles repulsing each other. The repulsive force scales as the inverse of the squared distance, according to Coulomb's electrostatic law~\citep{halliday2013fundamentals}. This pushes the selected core points to be as distant among each other as possible, making the minimum energy configuration the most diverse. 
%
        For proximity, we bias chosen core points to be close to the original point, by connecting the latter with a spring to candidate core points. The further the candidate CE, the stronger the force pulling it closer to the original instance, growing linearly with the weighted path distance according to Hooke's law~\citep{halliday2013fundamentals}. Prop.~\ref{prop:knn} states how the minimum energy configuration of spring-like interactions  selects the closest core points, accounting for proximity.

        \begin{figure*}[tbp]
            \centering
            \includegraphics[width=1\linewidth]{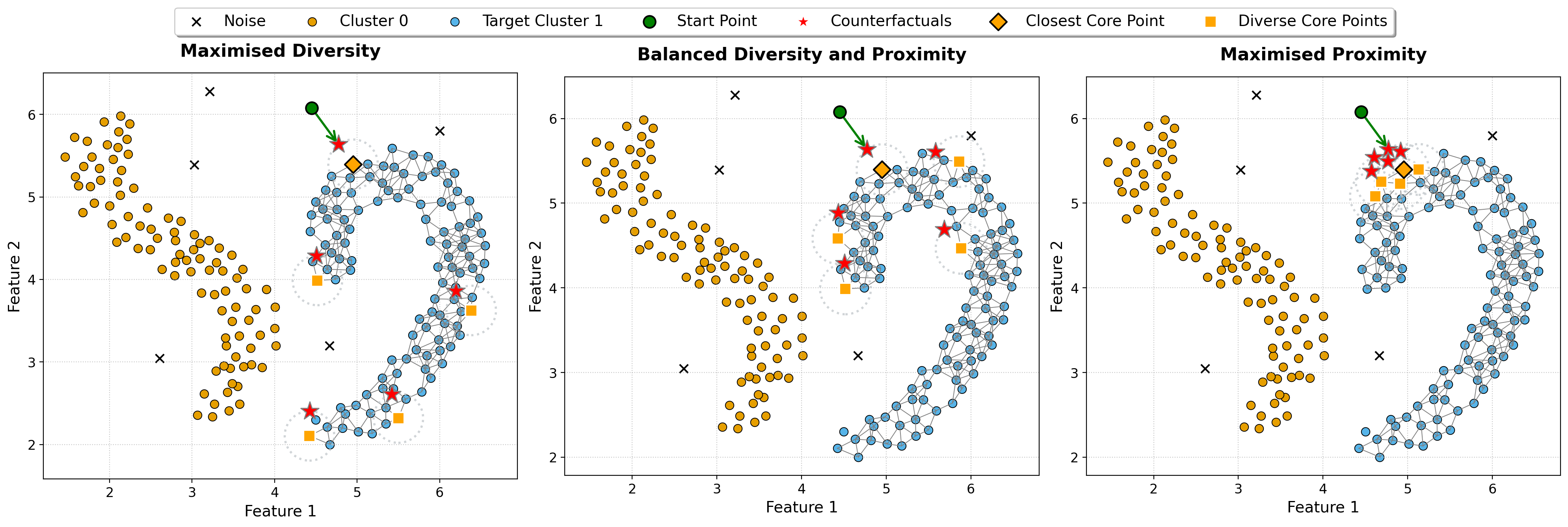}
            \caption{Optimising diversity (left), balanced (middle), optimising proximity (right). Cluster points in yellow and blue, resp.; noise marked with X; green point to be explained; red stars counterfactuals.}
            \Description{Conceptual image with three panels, that provide examples for optimising only for diversity (left), only proximity (right), and a combination of both (middle).}
            \label{fig:toyDataset}
        \end{figure*}

        The aim is to find the system's stable configuration, i.e., the one that minimises the energy of the physical system. Spring-like forces bias the system to select points closest to the original instance as springs push towards their rest state; i.e., when explained point and candidate core point have a null distance. Electrostatic-like repulsion favours core points to be pairwise distant (according to the shortest weighted path distance), as same-charged particles push each other away. The minimum energy configuration balances the springs' desire to go back to their rest configuration with the charges' desire to spread as much as possible. Hence, given a set $C'(p)$ of chosen core points, and point to explain $p$, the energy configuration of the system has the form:
        \begin{equation}\label{eq:energy}
            E_{C'} = \sum_{V_i\in C'}\sum_{V_j \in C' : j>i}\frac{1}{\mathcal{D}(V_i, V_j)} + \sum_{V_i \in C'}d^{2}(p, V_i)
        \end{equation} 
        $V_i$ is the $i$th vertex of graph $G$, $\mathcal{D}(V_i, V_j)$ denotes the weighted shortest path distance between vertices $V_i$ and $V_j$. Lastly, $d(p, V_i)$ is the Euclidean distance between the point to explain $p$ and vertex $V_i$.\footnote{Depending on data and cluster structure, distances $d(p, V_i)$ and $\mathcal{D}(V_i, V_j)$ might have different scales. Thus, we use normalisation as in App.~\ref{subsec:normalisation}.} We evaluate physical constants characterising spring and electrostatic terms in Eq.~\ref{eq:energy} as $1$, as we are only interested in how energy terms scale with distance. To find the optimal configuration of the system and thus, the set of $\mathcal{C}$, we find the subset $C'$ of vertices $V$ minimising system energy:
        \begin{equation}\label{eq:optimisationProblem}
            C_k'(p) = \operatorname*{argmin}_{B\in V \, : \, |B| = k} E_{B} \quad .
        \end{equation}
        The energy term in Eq.~\ref{eq:energy} can be seen as ExDBSCAN's cost function in the formalism introduced in Eq.~\ref{eq:cfexp}.
        Fig.~\ref{fig:toyDataset} shows how a solution of Eq.~\ref{eq:optimisationProblem} can be visualized on a toy dataset where counterfactuals are produced for a noise point. On the left of Fig.~\ref{fig:toyDataset}, considering the electrostatic term only leads to maximally diverse counterfactuals. On the right, using just spring-like terms make counterfactuals maximally proximal. In the middle, balancing repulsion and attraction considering the cluster's density structure, prompts proximal and spread-out counterfactuals. We observe the following:
        \begin{proposition}\label{prop:knn}
            Aligning with our desiderata on proximity, if considering only the spring-like term in Eq.~\ref{eq:energy}, solving the optimisation problem in Eq.~\ref{eq:optimisationProblem} is equivalent to selecting the $k$ nearest neighbours of the point to explain $p$.
        \end{proposition}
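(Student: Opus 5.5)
With only the spring-like term retained, the objective in Eq.~\ref{eq:optimisationProblem} reduces to
\[
E_B = \sum_{V_i \in B} d^{2}(p, V_i), \qquad B \subseteq V,\ |B| = k .
\]
The plan is to show that this separable sum is minimised exactly by the $k$ vertices of $V$ nearest to $p$. The key observation is that each vertex contributes a fixed non-negative weight $w(V_i) = d^{2}(p,V_i)$ that does not depend on which other vertices are chosen. The electrostatic coupling was the only interaction between selected vertices, and it has been dropped. The problem is therefore to choose $k$ elements of a finite weighted set with minimum total weight.

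First, sort the vertices so that $d(p,V_{(1)}) \le d(p,V_{(2)}) \le \dots \le d(p,V_{(|V|)})$. Since $t \mapsto t^2$ is strictly increasing on $[0,\infty)$, this is also the ordering by $w$. Let $N_k = \{V_{(1)},\dots,V_{(k)}\}$ be the $k$ nearest neighbours of $p$ among the candidate core points.

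Second, prove optimality by an exchange argument. Take any feasible $B$ with $B \neq N_k$. Then there exist $u \in B \setminus N_k$ and $v \in N_k \setminus B$, and the ordering gives $d(p,v) \le d(p,u)$. The swap $B' = (B \setminus \{u\}) \cup \{v\}$ satisfies
\[
E_{B'} = E_B - d^2(p,u) + d^2(p,v) \le E_B ,
\]
and $|B' \triangle N_k|$ is strictly smaller than $|B \triangle N_k|$. Iterating the swap reaches $N_k$ with no increase in energy, so $E_{N_k} \le E_B$ for every feasible $B$, i.e.\ $N_k$ is a minimiser.

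Third, prove the converse, that every minimiser is a set of $k$ nearest neighbours. Suppose $B$ is optimal. If $B$ contained some $u$ farther from $p$ than some $v \notin B$, the same single swap would strictly decrease the energy, contradicting optimality. Hence every element of $B$ is at least as close to $p$ as every element outside $B$, which is exactly the definition of a set of $k$ nearest neighbours. This direction is what justifies the word ``equivalent'' in Prop.~\ref{prop:knn}, rather than just ``a $k$-NN set is optimal''.

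The main point needing care is ties. If several core points are equidistant from $p$ at the $k$-th rank, the $k$-NN set is not unique, and the argmin in Eq.~\ref{eq:optimisationProblem} is a set of minimisers. I will state the equivalence as equality of the set of minimisers with the set of valid $k$-NN selections under any tie-breaking, which the two directions above establish. I will also note that $d$ is the metric used to fit DBSCAN, as stipulated under ``Proximity Through Attraction''. The argument uses only that $d \ge 0$ and that squaring is monotone, so it does not depend on the graph distance $\mathcal{D}$ at all.
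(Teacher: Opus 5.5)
Your proof is correct, but it takes a different route from the paper's. The paper argues by induction on the greedy construction. With no cores selected, the core minimising the elastic energy is the one nearest to $p$. Given $k'<k$ selected cores, the next core minimising the total elastic energy is the nearest unselected one.

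Strictly, that shows the greedy procedure of Algorithm~\ref{alg:core_selection} with the repulsion term dropped (the ``ExDBSCAN Nearest'' ablation) outputs the $k$ nearest cores. Identifying this output with the argmin over all $k$-subsets in Eq.~\ref{eq:optimisationProblem} tacitly relies on the spring term being a sum of independent per-vertex weights, so that greedy is exact. The paper leaves that step unstated.

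Your exchange argument compares an arbitrary feasible $B$ with $N_k$ directly. It therefore establishes global optimality without passing through the greedy steps. Your converse, that any minimiser is a $k$-NN set via a strictly improving swap, together with your treatment of ties, is what actually justifies reading ``equivalent'' as equality of solution sets. The paper's sketch does not address either point.

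The paper's formulation buys an immediate link to the algorithm that is actually run. You can recover that link from your result in one line: because the objective is modular, each greedy step adds the nearest unselected core, so greedy returns a $k$-NN set and hence a minimiser.
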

        \begin{proof} 
            From Eq.~\ref{eq:energy}, and by induction, when the set of core points is empty, the core minimising the elastic energy of the system is the closest to the point to explain $p$ (minimum  possible distance). As for the induction step, given a set of $k'< k$ selected core points, the next core point to be selected, i.e.,\ the one minimising the total elastic energy, is given by the closest non-selected core-point to $p$.
        \end{proof}
        Solving the optimisation problem in Eq.~\ref{eq:optimisationProblem} is computationally challenging.
        \begin{proposition}\label{prop:npHardness}
            Solving Eq.~\ref{eq:optimisationProblem}, i.e., the energy minimisation problem with both attraction and repulsion, is an NP-hard problem.
        \end{proposition}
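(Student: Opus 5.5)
We prove NP-hardness by a polynomial-time reduction from a classical NP-complete subset-selection problem. The natural choice is \emph{Independent Set} (equivalently $k$-Clique on the complement): given a graph $H=(U,F)$ and an integer $k$, decide whether $H$ contains $k$ pairwise non-adjacent vertices. The repulsive term $\sum 1/\mathcal{D}(V_i,V_j)$ is a sum of pairwise penalties, so selecting a set containing a ``close'' pair can be made to cost much more than any set without one. The attraction term can be neutralised by placing all candidate core points equidistant from $p$, which makes $\sum_{V_i\in B} d^2(p,V_i)$ a constant $k r^2$ for every feasible $B$. The decision version of Eq.~\ref{eq:optimisationProblem} (is there $B$ with $|B|=k$ and $E_B\le \tau$?) then encodes Independent Set.

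\textbf{Construction.} Given $H$ with $|U|=N$, we build an instance whose core points are in bijection with $U$. We want shortest-path distances $\mathcal{D}(V_u,V_w)=a$ when $\{u,w\}\in F$ and $\mathcal{D}(V_u,V_w)=b$ otherwise, with $a<b\le 2a$ so that the triangle inequality holds and the values are realisable as a metric. The cleanest realisation puts the core points in $\mathbb{R}^N$ (or $\mathbb{R}^{N+1}$), one per vertex, taking $\varepsilon$ just above $a$ so that exactly the pairs in $F$ are directly density-reachable with weight $a$. Every pair at graph distance $2$ in $H$ then has $\mathcal{D}=2a$. Non-adjacent pairs in different connected components would have $\mathcal{D}=\infty$, so we add one hub core point adjacent to everything at distance $a$, as an extra coordinate direction. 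This caps all non-adjacent distances at $b=2a$. We also keep the hub out of any good solution: we move it farther from $p$ so its spring cost is prohibitive, or simply argue that including it costs at least $1/a$. Place $p$ so that all vertex core points are at a common Euclidean distance $r$ from $p$; a simplex-like embedding with $p$ at the centroid, or offset along an extra axis, does this. Finally, set $minPts$ and add dummy neighbour points so that exactly the intended points are core, and keep $p$ itself a noise point.

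\textbf{Correctness.} A $k$-set $B$ that is independent in $H$ has energy $\binom{k}{2}\frac{1}{2a}+kr^2$. Any other $k$-set, one containing an edge or the hub, has energy at least $\left(\binom{k}{2}-1\right)\frac{1}{2a}+\frac{1}{a}+kr^2$, which is strictly larger. Setting $\tau=\binom{k}{2}\frac{1}{2a}+kr^2$, $H$ has an independent set of size $k$ iff the optimum satisfies $E\le\tau$. The construction is polynomial in $N$ with rational coordinates, where we choose $a$ and $r$ so that squared distances stay rational. The normalisation of App.~\ref{subsec:normalisation} rescales the two terms by constants and so preserves the ordering argument, or we apply the reduction to the unnormalised Eq.~\ref{eq:energy}.

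\textbf{Main obstacle.} The delicate step is geometric realisability. We must embed the core points in Euclidean space so that the DBSCAN $\varepsilon$-graph is exactly $H$ plus the hub, while all vertex cores stay equidistant from $p$ and the core/noise status comes out as intended. The first thing to try is an orthogonal-basis construction: put vertex $u$ at $\lambda e_u$, and realise the edges through auxiliary intermediate core points at the midpoints of $\lambda e_u$ and $\lambda e_w$. Here $\varepsilon=\lambda/\sqrt2$, and the midpoints are not candidates, for example because we bias them away from $p$. If that bookkeeping becomes messy, the fallback is to note that the combinatorial problem ``choose $k$ of $N$ points minimising $\sum 1/\mathcal{D}$ on an arbitrary metric'' is already hard. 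That argument needs only the graph $G$ as the input object, which matches how $\mathcal{D}$ is defined.
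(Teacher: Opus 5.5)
Your proof is correct in its combinatorial core, but it takes a different route from the paper. The paper argues in two steps. First, it observes that minimising the repulsion term $\sum 1/\mathcal{D}$ alone is a maximum-diversity problem with weights $-1/\mathcal{D}$, and cites NP-hardness of MDP. Second, it argues that adding springs cannot make the problem easier, since letting the spring constants tend to zero would recover MDP.

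You instead give a direct reduction from Independent Set. The hub core point makes the shortest-path metric two-valued ($a$ on edges of $H$, $2a$ otherwise). Placing all vertex cores equidistant from $p$ makes the attraction term the constant $kr^2$ on every candidate set. The gap $\frac{1}{2a}$ between $\binom{k}{2}\frac{1}{2a}$ and $\frac{1}{a}+(\binom{k}{2}-1)\frac{1}{2a}$ then separates yes-instances from no-instances.

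This route gains you two things.
\begin{itemize}
\item The paper maps its problem \emph{into} MDP, which does not by itself transfer hardness. One needs MDP to stay hard when the weights are $1/\mathcal{D}$ for a shortest-path metric of a DBSCAN cluster, and your hub gadget is exactly such a hard family.
\item The spring constant is fixed to $1$ in Eq.~\ref{eq:energy}, so the paper's ``constants to zero'' limit is not an instance of Eq.~\ref{eq:optimisationProblem}. Your equidistance trick switches attraction off within the problem as stated.
\end{itemize}
The paper's version gains brevity and stays at the abstract level of choosing vertices of a weighted graph.

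The part that does not yet work is the geometric realisation.
\begin{itemize}
\item \textbf{Midpoint construction.} The midpoints must be core points to carry density-connectivity, so they are vertices of $G$ and hence candidates. With $p$ offset by $t$ along a fresh axis, they are closer to $p$ than the $\lambda e_u$: squared distance $\lambda^2/2+t^2$ versus $\lambda^2+t^2$. Two midpoints sharing an endpoint also sit at distance exactly $\lambda/\sqrt{2}=\varepsilon$, which changes $\mathcal{D}$.
\item \textbf{Placement of $p$ and the hub.} Your two placements conflict. If $p$ is the hub plus $te_{N+1}$, the hub is at distance $t<r$, so including it saves $a^2$ of spring energy. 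You then need $(k-1)/(2a)>a^2$, which is a choice of scale, not ``moving the hub farther.''
\end{itemize}

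A clean realisation goes as follows. Put the hub at the origin, and choose $x_u\in\mathbb{R}^N$ with Gram matrix $(a^2/2-\delta)J+(a^2/2+\delta)I+\delta A_H$; this is positive definite for small $\delta>0$. Then $\|x_u\|=a$, adjacent cores are at distance exactly $a$, and non-adjacent ones are at $\sqrt{a^2+2\delta}$. Take $\varepsilon$ strictly between these two values and $minPts=2$, so no dummy points are needed. Set $p=te_{N+1}$ with $t>\varepsilon$, and take $a$ small.

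Alternatively, take $G$ and the values $d(p,V_i)$ directly as the input. That is the level at which the paper argues, and it also avoids your unjustified claim that the construction has rational coordinates.
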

        \begin{proof} (Sketch, full in Appendix~\ref{subsec:npHardness})

        Maximum diversity problems are NP-hard~\citep{parreno2021measuring}. Eq.~\ref{eq:optimisationProblem} restricted to the electrostatic term can be mapped to the MDP. Adding springs does not change NP-hardness: If it did, the MDP could be solved in polynomial time sending spring constants to zero.
        \end{proof}

         We approximate the solution using the following greedy algorithm. Given an incomplete set of selected core points $C_k'$ with $k'< k$, the next cluster point is chosen as the one minimising the total energy of the system according to Eq.~\ref{eq:energy}. When choosing the instance initialising the set of core points, $C'$ is still empty and thus no repulsive electrostatic-like force influences the choice. $C'$ will thus be initialised with the core point closest to the point to explain since it is the one greedily minimising the energy configuration, composed by a single spring-like term. 
         {\color{revision}Note in the case with only one counterfactual, the greedy optimisation corresponds to the optimal one according to Eq.~\ref{eq:energy}}.

        \textbf{Choosing the set of counterfactuals $C$.} 
        Our chosen set of core points $C'$ leverages the clusters' density structure to optimally balance proximity and diversity through the weighted undirected graph $G$. Given $C'$, we need to select where in each core point's $\varepsilon$-neighbourhood the counterfactual is going to be placed, practically mapping $C'$ to the final set of counterfactuals $C$.
        Given a single core point $q$ in $C'$, to further improve proximity and respecting our assignment function definition, we position the counterfactual $p' \in C$, $\varepsilon$ away in the direction of the original point $p$: 
        \begin{equation}\label{eq:cfgen}
            p'=p+(d_{pq}-\varepsilon) (q-p)/d_{pq}.
        \end{equation}
        %We illustrate this in Figure \ref{fig:concept_sketch}.
        The counterfactual point $p'$ is considered part of the cluster containing reference core point $q$, as it satisfies the density-based clustering criteria for the target cluster, effectively ensuring validity.

        \begin{theorem}\label{theo:1}
            Every counterfactual point $p'$ with reference core point $q$, such that $\mathcal{C}(q) = i,\, i\neq -1$, that is generated for a point $p 
             \in X \,:\, \mathcal{C}(p)\neq i$ toward cluster $i$ with the proposed method, is considered part of  cluster $i$, implying that $\mathcal{C}(p') \neq \mathcal{C}(p)$ and $C(p') = i$, and thus valid counterfactual.
        \end{theorem}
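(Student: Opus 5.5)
The argument only needs the fixed-clustering assignment rule and the placement formula in Eq.~\ref{eq:cfgen}. I will show $\|p'-q\|_2\le\varepsilon$. Then $q\in\mathcal{N}_{\varepsilon}(p')$, and since $q$ is a core point with $\mathcal{C}(q)=i$, the rule gives $\mathcal{C}(p')=i$. Validity then follows from the hypothesis $\mathcal{C}(p)\neq i$.

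\textbf{Step 1 (case split on $d_{pq}$).} Write $d_{pq}=\|p-q\|_2$. Note first that $d_{pq}>0$, since $p\neq q$: if $p=q$, then $\mathcal{C}(p)=\mathcal{C}(q)=i$, contradicting the hypothesis. This makes Eq.~\ref{eq:cfgen} well defined.

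If $d_{pq}\le\varepsilon$, then $q\in\mathcal{N}_{\varepsilon}(p)$ already, so $p$ itself satisfies the membership condition for cluster $i$. This would contradict $\mathcal{C}(p)\neq i$. Two caveats apply here. First, if clusters overlap at border regions, the assignment may be non-unique. Second, for a border point $p$ of another cluster, DBSCAN's original labelling may differ from the assignment rule. In those situations I would either take $p'=p$ or note that Eq.~\ref{eq:cfgen} then sets $p'$ on the segment beyond $p$ toward or away from $q$. In the main case I assume $d_{pq}>\varepsilon$, which is the situation the method is designed for: the reference core points are outside $p$'s $\varepsilon$-neighbourhood, otherwise $p$ would already be in cluster $i$.

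\textbf{Step 2 (distance computation).} Substituting Eq.~\ref{eq:cfgen} gives
\[
q-p' = (q-p)\Bigl(1-\frac{d_{pq}-\varepsilon}{d_{pq}}\Bigr) = \frac{\varepsilon}{d_{pq}}(q-p),
\]
so $\|p'-q\|_2=\varepsilon$. Hence $q\in\mathcal{N}_{\varepsilon}(p')$, using the closed ball $\le\varepsilon$ as in the definition of $\mathcal{N}_\varepsilon$.

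\textbf{Step 3 (conclude).} Because the clustering is held fixed, $q$ remains a core point of cluster $i$ regardless of $p'$, since $p'$ is not added to $X$. By the assignment function, $\mathcal{C}(p')=i\neq\mathcal{C}(p)$, which is exactly the constraint in Definition~\ref{def:cfexp}.

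\textbf{Main obstacle.} The computation itself is trivial. The delicate points are, first, making sure the boundary case is handled by the non-strict inequality in $\mathcal{N}_\varepsilon$: the graph construction uses $<\varepsilon$, but membership uses $\le\varepsilon$. Second, the assignment function only gives "belongs to $i$" as an existence condition. If $p'$ also falls within $\varepsilon$ of a core point of another cluster $j$, the label could be ambiguous. I would address this by interpreting $\mathcal{C}(p')=i$ as membership in $i$, which is what the theorem claims, and by remarking that $p'\neq$ membership of $p$ still holds whenever $p$ is noise or in a cluster $\neq i$.
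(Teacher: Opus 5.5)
Your proof is correct and follows the paper's own argument: $p'$ lies within $\varepsilon$ of the core point $q$ with $\mathcal{C}(q)=i$, so the fixed-clustering assignment rule gives $\mathcal{C}(p')=i\neq\mathcal{C}(p)$, and your computation $q-p'=(\varepsilon/d_{pq})(q-p)$ just makes that step explicit. That identity holds for every $d_{pq}>0$, so your case split on $d_{pq}\le\varepsilon$ is unnecessary, since the placement formula always puts $p'$ at distance exactly $\varepsilon$ from $q$.
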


        \renewcommand\qedsymbol{$\blacksquare$}
        \begin{proof}\label{proof:1}
        (Sketch) The definition in Section~\ref{subsec:cec} denotes that $p'$ is in the same cluster as its reference core point $q$ with $\mathcal{C}(q)=i$, therefore $\mathcal{C}(p')\neq \mathcal{C}(p)$ and specifically $\mathcal{C}(p') = i$.
    \end{proof}%
   App.~\ref{subsec:nonActionable} details how to retrieve counterfactuals $C$ when some features are not actionable, and shows how ExDBSCAN creates a graph only considering reference core points whose $\varepsilon$ neighbourhood is reachable without changing non-actionable features.

%     \subsection{Limitations}\label{subsec:limitation}
% In some cases, generating counterfactuals towards clusters may not be the optimal solution. It may be that closer to the outlier more outliers are found and generating a counterfactual towards noise and thereby introducing a new virtual cluster may be the better option. Our current approach may identify this by investigating if the number of noise points may be sufficient enough including the `virtually corrected' noise point to form its own cluster. However, we currently do not investigate this due to the sequential approach. In future works, simultaneously generating counterfactuals specifically for noise and identifying possible `newly' formed clusters is an interesting direction.

%% file: Sections/experiments.tex
\section{Experiments}\label{sec:experiments}
    \paragraph{Experimental Setup.} We evaluate counterfactual explanations on $30$ tabular datasets from OpenML~\citep{OpenML2013}\footnote{Dataset details and reproducible code: \\ \href{https://github.com/tommasoamico/ExDBSCAN}{https://github.com/tommasoamico/ExDBSCAN}.}. For each dataset, DBSCAN hyperparameters $(\varepsilon,\text{minPts})$ are selected through a grid search to maximise the DBCV clustering validity index~\citep{dbcv}.

    \paragraph{Baseline Methods.} \textsc{ExDBSCAN} is compared against seven baseline methods and variants for counterfactual generation: 
    
    \begin{enumerate}
        \item \textbf{BayCon}~\citep{romashov22baycon}, a model-agnostic Bayesian counterfactual generator. BayCon uses Bayesian optimisation to propose candidate points; to model the objective posterior, it fits a Random Forest surrogate internally.

        \item \textbf{DiCE}~\citep{mothilal2020dice} uses an optimisation-based counterfactual formulation and incorporates diversity into the objective. DiCE is evaluated in two variants: \emph{DiCE-Direct}, where DBSCAN’s discrete assignment is queried during optimisation, and \emph{DiCE-Surrogate}, where DiCE optimises against a fitted surrogate classifier to obtain a smoother search space.

        \item \textbf{Growing Spheres (GS)}~\citep{laugel2023achieving} is a greedy method that samples instances on hyperspheres of increasing radius until a point crossing the decision boundary is found. We evaluate two variants, \emph{GS-Direct} on DBSCAN’s discrete assignments and \emph{GS-Surrogate} on top of a fitted surrogate model.

        \item \textbf{ExDBSCAN Random} is a heuristic baseline which samples $k$ reference core points uniformly at random from the target cluster’s core set, and constructs counterfactuals from these cores using the same core-point-based construction as \textsc{ExDBSCAN} (Section~\ref{sec:method}). This isolates the contribution of \textsc{ExDBSCAN}'s structured core-point selection (proximity-diversity optimisation) from the final counterfactual construction step.
    \end{enumerate}

    \paragraph{Surrogate Models and Training.}
        DBSCAN produces discrete assignments (inliers/outliers) and exposes neither class probabilities nor usable gradients. As a consequence, supervised counterfactual methods like DiCE and Growing Spheres do not have a smooth objective to optimise when applied directly; leading to unstable optimisation and difficulty crossing the clustering boundary. For fair comparison against these methods, evaluation occurs through a surrogate setting that fits a multi-class classifier that predicts DBSCAN cluster labels and generates counterfactuals with respect to the surrogate’s probability output. This is a standard workaround of treating cluster labels as \emph{pseudo-classes}~\cite{laugel2019dangers, karra2025generating}. While acknowledging that the resulting decision regions may reflect the surrogate’s inductive bias rather than DBSCAN’s underlying density-reachability structure, inclusion of surrogates has proven to be necessary due to the lack of methods able to adequately compute valid DBSCAN counterfactuals. For surrogate-based variants, \mbox{DBSCAN} cluster labels are used as training targets; except for BayCon, which natively fits a Random Forest surrogate. We select the surrogate via grid search over feed-forward neural networks, Random Forests, and XGBoost classifiers~\cite{chen2016xgboost}, choosing the best-performing model per dataset.

    \paragraph{Sampling Strategy.}
        For each method, we use the following strategy to select the samples that will be explained. For each cluster $C_i$ or noise partition $N$, we sample $10$ points randomly. Each sampled point $p\in C_i$, targets every other cluster $C_j$ with $j\neq i$ yielding $10\!\times\!\sum_{i=1}^m (m-1)+10\!\times\!m$ queries per dataset. Each query asks to produce a set of $k=10$ CEs in the specified target cluster. The only method that cannot output a precise number of counterfactuals is BayCon, which inherently produces a large set of explanations. Thus, we evaluate BayCon by taking both $10$ random counterfactuals and the $10$ closest to the original instance. We refer to the two cases with \emph{BayCon Random} and \emph{BayCon Closest} respectively.

    %\textbf{Noise$\to$Cluster sampling:} Sample $10$ points randomly from $N$. For each sampled point $p\in N$, create a target for each cluster $C_j$. The sampling strategy 
    %All methods are evaluated on the same query sets. As \textsc{BayCon} does not offer a parameter to specify the number of counterfactuals, we evaluate its first $10$ CEs. CEs are evaluated using:
    \paragraph{Counterfactual Evaluation Metrics.} 
        To evaluate the quality of counterfactuals across methods we utilise three common counterfactual evaluation metrics: \emph{Validity}, \emph{Proximity} and \emph{Diversity}.
    
    \begin{enumerate}%[leftmargin=12pt,label=\arabic*)]
        \item \textbf{Validity ($\uparrow$ is better):} Proportion of returned $k$ counterfactuals that reach the target cluster (i.e., the CE is part of the target cluster). When no CE is found, validity is zero for that query. Higher values are better for validity, as the goal is to obtain as many valid CEs as possible. 
        
        \item \textbf{Average Proximity ($\downarrow$ is better):} Mean distance from original point $p$ to each valid returned CE, measured in DBSCAN's clustering feature space. The distance is the same as the one used to fit DBSCAN, here Euclidean distance. Lower values indicate that the CE more closely resembles the explained point.
        
        \item \textbf{Average Diversity ($\uparrow$ is better):} We want $k$ CEs to offer diverse solutions in the target cluster, not minor variants of the same information. To properly evaluate diversity, we use the diversity metric based on the determinantal point process used in ~\citet{mothilal2020dice} and argued for by ~\citet{kulesza2012determinantal}. Specifically, we compute a kernel matrix $K_{ij} = 1/(1+d(\mathit{cf}_i, \mathit{cf}_j))$ and use its determinant as a measure of diversity.  Higher determinant values indicate greater diversity among proposed counterfactuals. The more diverse the vectors inspected, the higher the volume computed by the determinant to estimate diversity.
    \end{enumerate}

        \begin{figure*}[tbp]
            \centering
            \includegraphics[width=.98\linewidth]{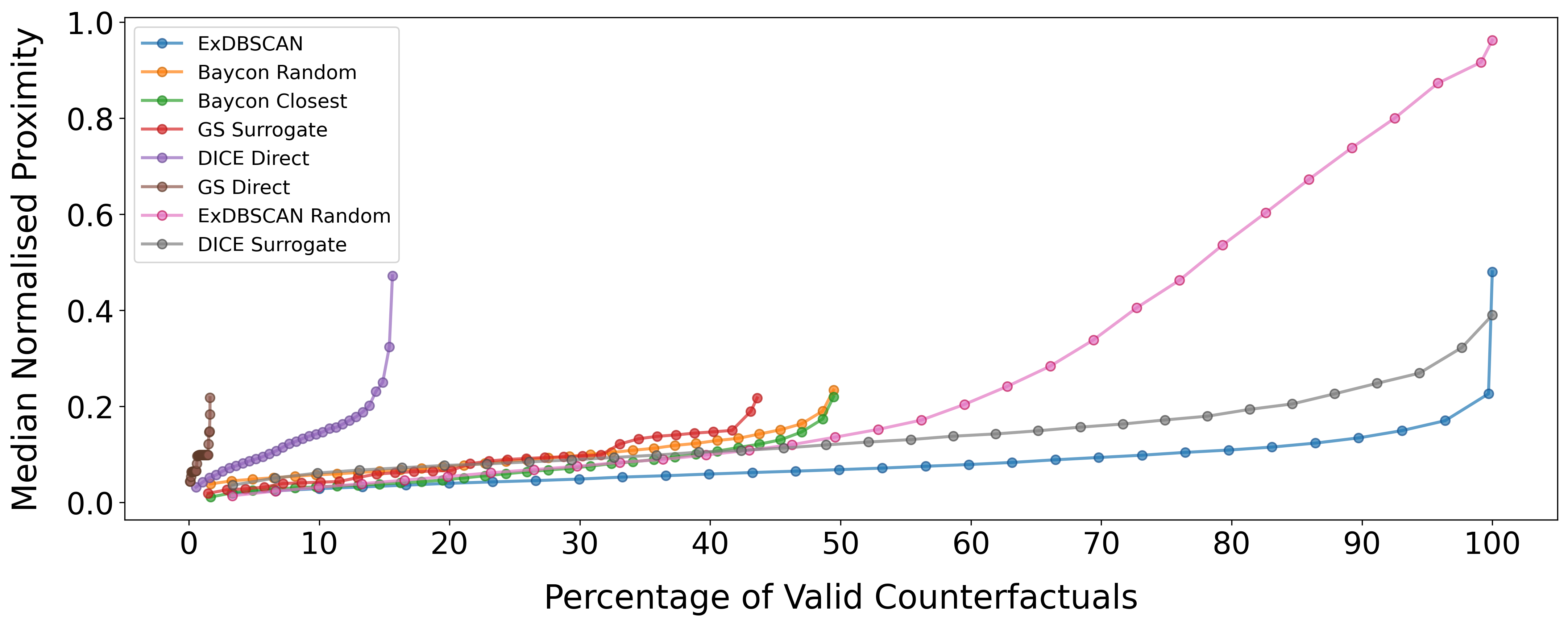}
            \caption{
            Validity/proximity. Cumulative percentage of successful queries (x-axis) vs. proximity (y-axis, $\downarrow$ \textit{lower is better}). ExDBSCAN achieves perfect validity with superior proximity. ExDBSCAN Random and GS-Direct show worse proximity. Surrogate-based methods and BayCon show poor validity ($< 50\%$) due to lack of alignment with DBSCAN model.}
            \Description{Empirical percentile curve showing the percentage of valid counterfactuals in relation to median normalised proximity.}
            \label{fig:proximity_validity_actionable}
        \end{figure*}
        
    %\subsection{Empirical Evaluation}
        We report counterfactual quality using empirical percentile curves, which visualise the distribution of each metric across queries while simultaneously reflecting method failure rates. For each method, per-query values are sorted and plotted with the metric value on the $y$-axis and the cumulative percentage of all queries on the $x$-axis, where the $x$-position of the $x$-axis is given by $100 \cdot r / N$ with $N$ denoting the total number of queries. Queries for which a method fails to return a valid counterfactual, yield \textit{undefined} metric values and do not contribute points to the curve. Consequently, curves that extend further to the right indicate higher percentage of validity i.e., more successful queries. Due to space, experiments for non-actionable scenarios can be found in Appendix~\ref{app:non_actionable_results}.

    \subsection{Validity and Proximity Evaluation}
    
        Fig.~\ref{fig:proximity_validity_actionable} reports results for \textit{validity} and \textit{proximity}. The $y$-axis indicates the \emph{proximity} metric values, while the $x$-axis reports cumulative percentages of queries for which a valid counterfactual has been obtained. Lower proximity values correspond to less changes from the original instance and thus better counterfactual quality. 

        \paragraph{Validity Performance.}
            ExDBSCAN achieves perfect validity ($100\%$) across all queries, successfully generating valid counterfactuals in every case. This is guaranteed by our theoretical analysis (Theorem~\ref{theo:1}), which ensures that every counterfactual obtained by Eq.~\ref{eq:cfgen} lies within $\varepsilon$ of a core point in the target cluster and thus satisfies DBSCAN's density-connectivity requirements. \emph{ExDBSCAN Random} and \emph{DiCE surrogate} also achieve $100\%$ validity. The full table of validity values are presented in the Appendix in Table~\ref{table:tabular_validity_results}. %For \emph{ExDBSCAN Random}, the method uses uniform sampling form valid core points, and for \emph{GS-Direct}, it's geometric expansion strategy eventually finds points within cluster boundaries. 
            In contrast, the surrogate-based methods, \emph{DiCE-Surrogate} and \emph{GS-Surrogate}, as well as \textit{BayCon} struggle severely with validity, mostly below $50\%$ successful counterfactual generation. This performance gap originates from the fundamental mismatch of these methods optimising against continuous decision boundaries learned by surrogate classifiers or probabilistic models, which do not faithfully capture DBSCAN's discrete, density-connectivity-based membership requirement. The surrogate's smooth decision regions may unintentionally exclude valid density-connected points or include invalid ones, which leads to generated counterfactuals that fail on DBSCAN's actual assignment function. \emph{DiCE-Direct} performs poorly due to the problem not being characterized by gradients or continuous probability functions. Its optimisation procedure struggles to navigate DBSCAN's discrete assignment space, meaning that it fails to find any valid transition to counterfactuals.
        
        \paragraph{Proximity Performance.} 
            Among methods achieving high validity, ExDBSCAN demonstrates superior proximity. ExDBSCAN's curve lies consistently below both ExDBSCAN Random and DiCE surrogate, indicating that ExDBSCAN produces counterfactuals closer to the original instances. This advantage arises directly from ExDBSCAN's physics-inspired optimisation approach where the spring-like attraction term in Eq.~\ref{eq:cfgen} explicitly biases the selection toward core points with a minimal weighted-path distance to the explained point $p$. By construction, ExDBSCAN's algorithm initialises with the nearest core points (Prop.~\ref{prop:knn}) and incrementally selects additional cores that minimise total system energy, balancing proximity against diversity constraints. 
       \begin{figure*}[tbp]
            \centering
            \includegraphics[width=.98\linewidth]{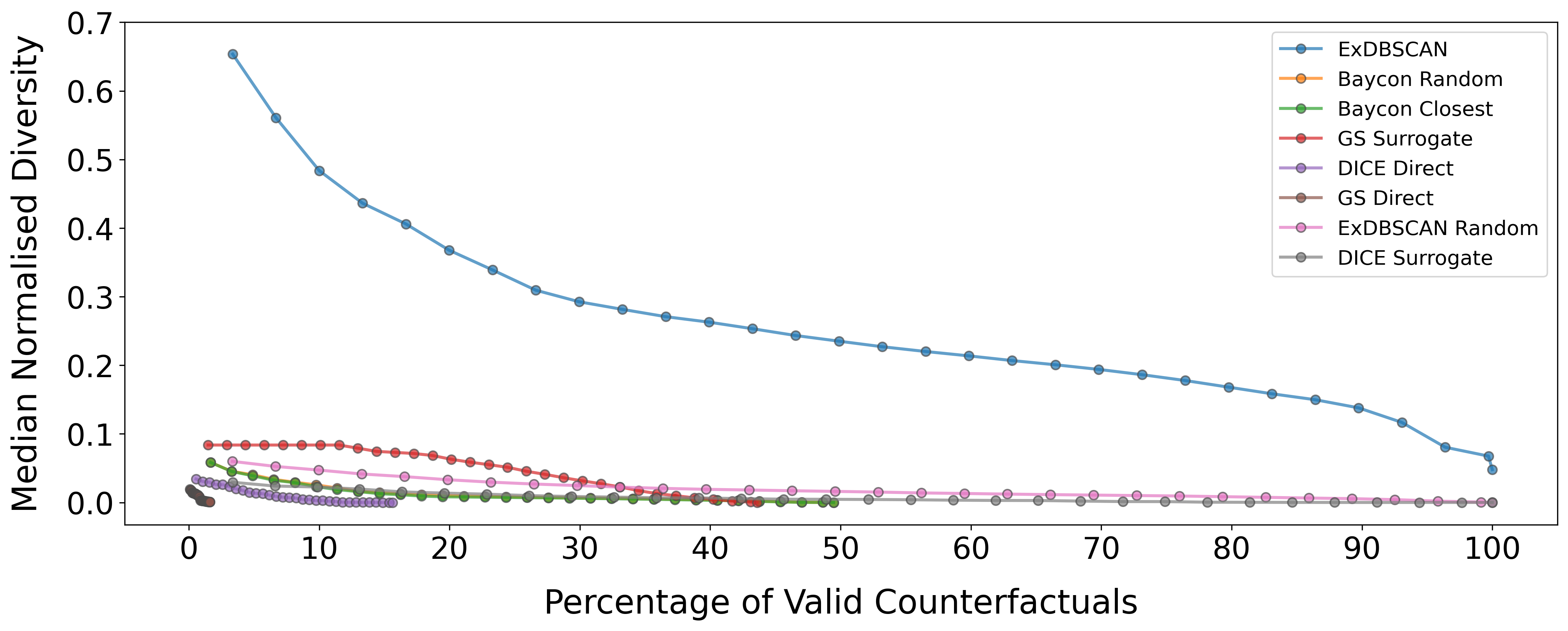}
            \caption{Diversity. Cumulative percentage of successful queries (x-axis) vs. diversity (y-axis, higher is better). ExDBSCAN achieves highest diversity while maintaining perfect validity, balancing proximity and diversity in its physics-inspired approach.}
            \Description{Empirical percentile curve showing the percentage of valid counterfactuals in relation to median normalised diversity.}
            \label{fig:diversity_actionable}
        \end{figure*}
            \emph{ExDBSCAN Random}, while achieving perfect validity, notably shows worse proximity because it samples core points uniformly at random from the target cluster without considering their distance to point $p$. This demonstrates that the structured core-point selection in ExDBSCAN is in fact essential for proximity optimisation.
            \emph{DiCE surrogate} achieves reasonable proximity. However, its optimisation approach does not take into consideration DBSCAN's density-connectivity structure, and thereby lacks a mechanism to select the most proximal valid points once multiple candidates are found. 
            Other \emph{surrogate-based} methods like \emph{BayCon}, when they do succeed, often produce counterfactuals with poor proximity values. This occurs because these methods must explore broadly to find any valid points at all, given their poor alignment with \emph{DBSCAN'}s true decision boundaries. Their successful counterfactuals tend to be those that happened to lie near large, easily-discoverable dense regions rather than the minimal changes that \textit{ExDBSCAN} identifies through its density-aware optimisation. The per-dataset tables showing raw proximity values are presented in the Appendix in Table~\ref{table:proximity_tabular_results}.
            
    \subsection{Diversity Evaluation}
        Fig.~\ref{fig:diversity_actionable} shows the corresponding results for the evaluation metric \emph{diversity}. As before, the $x$-axis represents the cumulative percentage of queries with successful counterfactual generation, while the $y$-axis reports the diversity score of the returned counterfactual set. Higher diversity values indicate more diverse counterfactual solutions, which is the desired outcome. Consequently, methods whose curves lie higher achieve greater diversity among the generated counterfactuals at the same level of validity, while curves that extend further to the right indicate successful counterfactual generation for a larger fraction of queries. 

        \paragraph{Diversity Performance.} ExDBSCAN achieves the highest diversity among all methods, with its curve lying consistently above all baselines. This superior diversity is because of its electrostatic-like repulsion mechanism (Eq.~\ref{eq:energy}) which explicitly enforces separation between selected core points. The repulsion force scales with the inverse of the weighted shortest-path distance $D(V_i, V_j)$ in the cluster's density-connectivity graph $G$, ensuring that selected counterfactuals are spread throughout the cluster structure rather than clustered together. Critically, this diversity measure respects DBSCAN's notion of similarity. If points appear close in Euclidean space, they may be far apart in terms of density-connectivity (also illustrated in App.~\ref{app:euclideanLimitation}), and ExDBSCAN's graph-based distance ensures that selected counterfactuals represent actually distinct alternatives. 
        \textit{ExDBSCAN Random} despite having perfect validity, does not match ExDBSCAN performances on diversity. Its random sampling from core points produces spread-out selections when the cluster contains many spread out cores. However, this comes at the cost of significantly worse proximity (as seen in Fig.~\ref{fig:proximity_validity_actionable}), highlighting the proximity-diversity trade-off that ExDBSCAN explicitly optimises for.% The random baseline's competitive diversity performance validates that the repulsion-based diversity modelling design choice prevents selection of nearby core points and is therefore an effective diversity strategy; and ExDBSCAN systematically achieves this while simultaneously optimising proximity. 
        \emph{GS-Direct} shows moderate diversity, considerably lower than both ExDBSCAN and \textit{ExDBSCAN Random}. \textit{GS-Direct}'s expanding sphere strategy tends to find counterfactuals along similar search directions, particularly when the original point is near a cluster boundary. Without an explicit diversity mechanism, \textit{GS-Direct}'s multiple counterfactuals often represent minor variations along the same transition path instead of truly alternative routes into the target cluster.
        The poor diversity performance of the surrogate-based methods and \textit{BayCon}, when they at all find valid counterfactuals, reflects that their fundamental approach is ill-aligned with DBSCAN. These methods optimise against surrogate decision boundaries that do not capture DBSCAN's density-connectivity structure. Even when they incorporate diversity mechanisms (e.g., \textit{DiCE}'s determinantal point process), the diversity is measured in Euclidean space or with respect to the surrogate's feature space, but not DBSCAN's density-connected distance that is the model to be explained here. Consequently, their counterfactuals may appear diverse under their internal metrics while being related or simply invalid from DBSCAN's perspective. The full per-dataset results are presented in in the Appendix in Table~\ref{table:diversity_tabular_results}.

        \paragraph{Balancing Proximity and Diversity.} The results from Fig.~\ref{fig:proximity_validity_actionable} and ~\ref{fig:diversity_actionable} demonstrate ExDBSCAN's success in optimising the inherent trade-off between proximity and diversity. By modelling this balance as a physical energy minimisation problem (Eq.~\ref{eq:optimisationProblem}), ExDBSCAN achieves superior performance on both metrics simultaneously. ExDBSCAN generates counterfactuals that are both close to the original instance (enabling realistic, minimal changes) and diverse from one another (providing multiple distinct alternatives). These empirical results validate that our physics-inspired approach for ExDBSCAN is effective for achieving multi-objective optimisation in this counterfactual generation problem. Thus, our counterfactuals truly support ``what-if'' scenarios even in the absence of differentiable loss functions or class memberships required for existing methods.

%% file: Sections/conclusions.tex
\section{Conclusion and Future Work}\label{sec:conclusion}
    We propose ExDBSCAN, a novel counterfactual method specifically designed for density-based clustering. ExDBSCAN addresses the interpretability gap in DBSCAN by generating noise-to-cluster and cluster-to-cluster counterfactual explanations with theoretical validity guarantee. Our novel physics-inspired optimisation approach models the proximity-diversity trade-off as an energy minimisation problem, taking into account the structure of the cluster.
    
   In extensive experiments on $30$ tabular datasets, ExDBSCAN demonstrates superior performance across all evaluation metrics, generating more proximal counterfactuals than all baselines, and producing more diverse counterfactual sets, while achieving perfect validity.

 In future work, ExDBSCAN could be extended to incremental or streaming scenarios with evolving clusters; particularly relevant to applications in continuous monitoring systems, fraud detection or real-time anomaly detection, where data arrive incrementally and cluster assignments may change over time. 

    Investigation into more sophisticated optimisation strategies beyond our greedy solution to the NP-hard energy minimisation problem might further improve counterfactual quality.

%We proposed ExDBSCAN, a novel counterfactual reasoning framework for density-based clustering. In extensive experiments, we showcase that ExDBSCAN  generates diverse, valid, and proximal counterfactuals. With our physics-inspired diversity model we balance diversity and proximity when generating sets of counterfactuals. ExDBSCAN enhances the application of DBSCAN to ensure interpretability and thus user trust.
% consider something?
%  \subsection{Limitations}\label{subsec:limitation}
% In some cases, generating counterfactuals towards clusters may not be the optimal solution. It may be that closer to the outlier more outliers are found and generating a counterfactual towards noise and thereby introducing a new virtual cluster may be the better option. Our current approach may identify this by investigating if the number of noise points may be sufficient enough including the `virtually corrected' noise point to form its own cluster. However, we currently do not investigate this due to the sequential approach. In future works, simultaneously generating counterfactuals specifically for noise and identifying possible `newly' formed clusters is an interesting direction.

%% file: Sections/appendix.tex
\section{Appendix}

\subsection{LLM statement}

No generative AI tools were used in the writing of the paper and at any stage of this research.

{\color{revision} \subsection{Algorithm Pseudocode}
To complement Section~\ref{subsec:cfexp} we include full pseudocode for the following two components of ExDBSCAN:

\begin{enumerate}
    \item Selection of $k$ reference core points via greedy minimisation of energy function in Eq.~\ref{eq:optimisationProblem}.

    \item Generation of counterfactual examples from the selected core points using Eq.~\ref{eq:cfgen}.
\end{enumerate}

These algorithms make explicit the operations described in the main text and show how proximity and diversity are jointly optimised.

\subsubsection{Greedy Selection of Core Points}

Given a point $p$ to be explained and a target cluster, the first step of
\textsc{ExDBSCAN} is to select a set of $k$ core points whose
$\varepsilon$-neighbourhoods will serve as the basis for constructing
counterfactuals.  
The energy of a candidate set $C'$ is:

\[
E_{C'} = 
\sum_{V_i\in C'}\sum_{V_j\in C': j>i} 
    \frac{1}{\mathcal{D}(V_i, V_j)} 
\;+\;
\sum_{V_i\in C'} d^2(p, V_i).
\]

The optimisation problem in Eq.~\ref{eq:optimisationProblem} is NP-hard
(Proposition~\ref{prop:npHardness}), so we use a greedy approximation.
For $k=1$, this procedure returns the optimal solution.
\begin{algorithm}[H]
\caption{Greedy Selection of Core Points}
\label{alg:core_selection}
\begin{algorithmic}[1]
\Require Point to explain $p$; set of core points $V$; number of counterfactuals $k$; weighted graph distance $\mathcal{D}(\cdot,\cdot)$; feature-space distance $d(\cdot,\cdot)$; target cluster $t$.
\Ensure Set $C'$ of $k$ selected core points
\State $C' \gets \emptyset$
\State $V^t = \{V_j : V_j \in t\}$
\While{$|C'| < k$}
    \State $\text{best\_core} \gets \text{None}$, \; $\text{best\_energy} \gets +\infty$
    \For{each core point $V_j \in V \setminus C'$}
        \State $S \gets C' \cup \{V_j\}$
        \State $E_S \gets 0$
        \Comment{Repulsion term (graph distances)}
        \ForAll{pairs $(V_i, V_\ell)$ in $S$ with $\ell > i$}
            \State $E_S \gets E_S + \dfrac{1}{\mathcal{D}(V_i, V_\ell)}$
        \EndFor
        \Comment{Attraction term (feature-space distance)}
        \ForAll{$V_i \in S$}
            \State $E_S \gets E_S + d^2(p, V_i)$
        \EndFor
        \If{$E_S < \text{best\_energy}$}
            \State $\text{best\_energy} \gets E_S$
            \State $\text{best\_core} \gets V_j$
        \EndIf
    \EndFor
    \State $C' \gets C' \cup \{\text{best\_core}\}$
\EndWhile
\State \Return $C'$
\end{algorithmic}
\end{algorithm}

%All shortest-path distances $\mathcal{D}(V_i, V_j)$ are precomputed once for the target cluster. Evaluating each candidate core point requires $O(|C'|)$, yielding a total complexity of $O(km)$ where $m = |V|$.

\subsubsection{Constructing Counterfactuals}
Once the set of reference core points $C' = \{q_1, \dots , q_k\}$ has been selected, each counterfactual $p'$ is placed inside the $\varepsilon$-neighbourhood of its corresponding core point $q$. To maximise proximity to the original point $p$, we move away from $p$ towards $q$ and stop at distance $\varepsilon$ from $q$. This corresponds exactly to Eq.~\ref{eq:cfgen} in the main text. Because each constructed point lies within $\varepsilon$ of a core point in the target cluster, Theorem~\ref{theo:1} guarantees that all counterfactuals are valid. Algorithm~\ref{alg:generate_cf} summarises the procedure through pseudocode.

\begin{algorithm}[H]
\caption{Generate Counterfactuals from Selected Core Points}
\label{alg:generate_cf}
\begin{algorithmic}[1]
\Require Point $p$; selected core points $C'=\{q_1,\dots,q_k\}$; DBSCAN radius $\varepsilon$; feature-space metric $d(\cdot,\cdot)$
\Ensure Set $C$ of $k$ counterfactuals
\State $C \gets \emptyset$
\ForAll{$q_j \in C'$}
    \State $d_{pq} \gets d(p, q_j)$
    \Comment{By construction, $d_{pq} > 0$ since $p$ and $q_j$ belong to different clusters}
    \State $\text{direction} \gets (q_j - p) / d_{pq}$
    \State $p' \gets p + (d_{pq} - \varepsilon)\cdot \text{direction}$ \Comment{Eq.~\ref{eq:cfgen}}
    \State $C \gets C \cup \{p'\}$
\EndFor
\State \Return $C$
\end{algorithmic}
\end{algorithm}

Because each $p'$ lies within distance $\varepsilon$ of a core point in the target cluster, the construction ensures all counterfactuals are valid.

}

\color{revision}\subsubsection{Target Cluster Not Specified}
Pseudocode~\ref{alg:core_selection} requires the target cluster in input. The user could prefer not to specify a certain target cluster, e.g.\ if the user wants to convert a noise point to a generic non-noise instance. In this case, pseudocode~\ref{alg:core_selection} would need to be slightly changed on line $8$. Specifically, instead of adding a repulsion term for each pair of core points already selected unconditionally, we add it only if they belong to the same cluster. Points belonging to different cluster are indeed already diverse by construction and they are not connected through any density connected path according to DBSCAN's algorithm.

\subsubsection{Complexity Analysis}
Given a DBSCAN clustering as input (with core points and weighted graph distances), ExDBSCAN has the following computational complexity. Core point selection requires $O(k^3* |V^t|)$, construction of CEs requires $O(kn)$, and  precomputing distances requires $O(n|V^t| + |V^t||E|)$, where $n$ denotes the number of features, $V^t$ the set of core points in target cluster $t$, $E$ the edges in connectivity graph, and $k$ the number of CEs. 

Validity is guaranteed by Theorem \ref{theo:1} and requires no additional verification step. Regarding scalability: high-dimensionality increases $n$, while large-scale datasets and the neighbourhood radius $\varepsilon$ do not directly influence scalability, but implicitly in $|V^t|\gg N$ and E, both of which scale linearly. Only $k$ enters cubically, however it is a small constant in practice, e.g., \citep{mothilal2020dice} uses 3-5. Overall, the runtime remains tractable in high-dimensional and large scale settings.

\subsection{Approximation Quality of the Greedy CE Selection}
The energy minimization problem (Eq. \eqref{eq:optimisationProblem}) is NP-hard and solved via a greedy strategy. To assess approximation quality, we additionally evaluate a local search variant (best-improvement swap initialized from greedy solution) and an exact solver (Gurobi MIQP on Eq. \eqref{eq:optimisationProblem}), across 10 datasets, 100 query points and $k\in\{4,10\}$.

For $k=4$, the greedy solution matches the exact optimum in 87\% cases (mean ratio 1.0014, worst-case 1.05, 98\% within 1\%). Local search initialized from the greedy solution converges to the exact optimum in 100\% cases, requiring a mean of only 0.2 swaps, which confirms that greedy is near-locally-optimal. 
For k=10, the gap shrinks, greedy achieves 90\% exact matches (mean ratio 1.0005, worst-case 1.02) and local search reaches 99\% with 0.1 mean swaps. Across both values of $k$, all cases fall within 5\% of optimum.

To assess whether objective differences translate to differences in the selected sets, we report mean Jaccard overlap between the greedy and exact solutions, for $k=4$ 0.94 and 0.97 for $k=10$. Even in cases where objective values differ, the selected CE sets are nearly identical. We attribute this to the sensitivity of the repulsion term to close pairs: once greedy avoids selecting nearby points, further optimization yields only marginal improvements.

\subsection{Normalisation Scheme}\label{subsec:normalisation}
The distance $d(p, V_i)$ from candidate core points $V_i$ to the explained point $p$ and the distances between candidate core points $\mathcal{D}(V_i, V_j)$, could have different scales depending on the the dataset, with $d(p, V_i)$ potentially being bigger as it connects points that are not in a cluster. In order to prevent the spring-like term to dominate, given the distance $d_c$ from $p$ to the closest core point, and the average graph weight $\bar{\mathcal{D}}$, we scale distances $\mathcal{D}(V_i, V_j)$ by $d_c/\bar{\mathcal{D}}$.

\subsection{Choice of Distance Metrics}\label{subsec:metricchoice}
    By design, ExDBSCAN is agnostic to metrics. That being said, if DBSCAN is fit on a metric different to euclidean, e.g., Manhattan, Cosine, Mahalanobis, or mixed-type metrics, ExDBSCAN can be easily adjusted to this. Instead of using euclidean distance, we can employ any other distance metric to determine reference core-points as well as solving the energy-minimisation to produce diverse and proximal counterfactuals. We also highly advise, to then also use the same metric in ExDBSCAN that was selected for DBSCAN clustering to maintain consistency in the concepts of density-reachability and graph connectivity. Furthermore, we advise employing this chosen metric to evaluate proximity and diversity, as the underlying structure of the data and clustering depends on it.

    For our evaluation, we use Euclidean distance to assess proximity, as it is the distance metric employed to fit DBSCAN. To measure diversity, we apply a Determinantal Point Process (DPP) kernel based on the weighted shortest-path distance within the cluster graph. For points that are density-connected, like our counterfactuals, graph distance effectively captures the similarity structure of DBSCAN. In contrast, Euclidean distance may not accurately represent density connectivity. Furthermore, between the explained point and its counterfactuals, there is no density connectivity, making Euclidean distance, the metric used for fitting DBSCAN, the most appropriate measure. 

    % In the upcoming subsubsection below, we show an example of utilising other distance metrics by showing results of ExDBSCAN for Manhattan distance.

    % \subsubsection{Manhattan Distance Results}\label{app:manhattan_experiments}
    %     xxxx
\subsection{Experimental Results Extended}\label{app:prox_val_tabels}
    In the following section we present the complete numerical results in tables for actionable experiments extending Section~\ref{subsec:nafResults}.
    %and results without non-actionable features (Sect.~\ref{subsec:afResults}).
    
    \paragraph{Validity}
        To extend the experimental Section~\ref{sec:experiments}, we show the individual dataset results for each method in Table~\ref{table:tabular_validity_results}. While the results in the table isolatedly look like ExDBSCAN is equivalently as good as ExDBSCAN Random or DiCE Surrogate, one must remember that ExDBSCAN performs better in both proximity and diversity.

    %% Validity
    \begin{table*}[tbp]
        \centering
        \caption{Validity scores across $30$ datasets. Values represent the proportion of queries (0.0-1.0) for which each method successfully generated valid counterfactuals that satisfy DBSCAN's density-connectivity criteria. Higher values indicate better performance. Error margins represent the standard error of the mean. ExDBSCAN, ExDBSCAN Random, and DiCE Surrogate achieve perfect validity (1.00) across all datasets. Model-agnostic baselines (BayCon, GS-Surrogate) show poor validity (<0.50 on average) due to misalignment between their optimisation objectives and DBSCAN's discrete assignment function. DiCE-Direct and GS-Direct perform poorly because they cannot effectively navigate DBSCAN's discrete landscape without continuous probabilities or gradients. Bold indicates best performance; while ExDBSCAN matches some baselines on validity, it substantially outperforms on proximity and diversity (Tables~\ref{table:proximity_tabular_results}, \ref{table:diversity_tabular_results}). The best-performing method is highlighted in bold while the second-best is underlined.}
        \label{table:tabular_validity_results}
        \footnotesize
        \begin{tabular}{lcccccccc}
        \toprule
        Dataset & ExDBSCAN & BayCon Random & BayCon Closest & GS Surrogate & DiCE Direct & GS Direct & ExDBSCAN Random & DiCE Surrogate \\
        \midrule
        autoPrice & \textbf{1.00} & 0.46 & 0.46 & 0.14 & 0.00 & 0.00 & \textbf{1.00} & \textbf{1.00} \\
        baskball & \textbf{1.00} & 0.78 & 0.78 & 0.65 & 0.50 & 0.00 & \textbf{1.00} & \textbf{1.00} \\
        blood-transfusion & \textbf{1.00} & 0.55 & 0.55 & 0.23 & 0.50 & 0.00 & \textbf{1.00} & \textbf{1.00} \\
        bodyfat & \textbf{1.00} & 0.36 & 0.36 & 0.15 & 0.49 & 0.00 & \textbf{1.00} & \textbf{1.00} \\
        breast-w & \textbf{1.00} & 0.93 & 0.93 & 0.45 & 0.50 & 0.00 & \textbf{1.00} & \textbf{1.00} \\
        chscase census2 & \textbf{1.00} & 0.58 & 0.58 & 0.00 & 0.47 & 0.00 & \textbf{1.00} & \textbf{1.00} \\
        chscase census6 & \textbf{1.00} & 0.00 & 0.00 & 0.00 & 0.50 & 0.00 & \textbf{1.00} & \textbf{1.00} \\
        chscase vine1 & \textbf{1.00} & 0.62 & 0.62 & 0.35 & 0.50 & 0.00 & \textbf{1.00} & \textbf{1.00} \\
        confidence & \textbf{1.00} & 0.35 & 0.35 & 0.00 & 0.00 & \textbf{1.00} & \textbf{1.00} & 0.00 \\
        diabetes & \textbf{1.00} & 0.42 & 0.42 & 0.35 & 0.00 & 0.00 & \textbf{1.00} & \textbf{1.00} \\
        diabetes numeric & \textbf{1.00} & 0.72 & 0.72 & 0.61 & 0.44 & 0.00 & \textbf{1.00} & \textbf{1.00} \\
        diggle table a1 & \textbf{1.00} & 0.18 & 0.18 & 0.68 & 0.54 & 0.00 & \textbf{1.00} & N/A \\
        disclosure x noise & \textbf{1.00} & 0.70 & 0.70 & 0.00 & 0.50 & 0.00 & \textbf{1.00} & \textbf{1.00} \\
        ecoli & \textbf{1.00} & 0.17 & 0.17 & 0.00 & 0.48 & 0.00 & \textbf{1.00} & \textbf{1.00} \\
        glass & \textbf{1.00} & 0.57 & 0.57 & 0.17 & 0.50 & 0.00 & \textbf{1.00} & \textbf{1.00} \\
        hayes-roth & \textbf{1.00} & 0.81 & 0.81 & 0.73 & 0.00 & 0.00 & \textbf{1.00} & \textbf{1.00} \\
        heart-statlog & \textbf{1.00} & 0.93 & 0.93 & 0.62 & 0.50 & 0.00 & \textbf{1.00} & \textbf{1.00} \\
        iris & \textbf{1.00} & 0.80 & 0.80 & 0.60 & 0.00 & \textbf{1.00} & \textbf{1.00} & \textbf{1.00} \\
        liver-disorders & \textbf{1.00} & 0.90 & 0.90 & 0.00 & 0.00 & 0.00 & \textbf{1.00} & 0.00 \\
        longley & \textbf{1.00} & 0.16 & 0.16 & 0.08 & 0.52 & 0.00 & \textbf{1.00} & \textbf{1.00} \\
        machine cpu & \textbf{1.00} & 0.62 & 0.62 & 0.47 & 0.50 & 0.00 & \textbf{1.00} & \textbf{1.00} \\
        mu284 & \textbf{1.00} & 0.48 & 0.48 & 0.23 & 0.00 & 0.00 & \textbf{1.00} & \textbf{1.00} \\
        no2 & \textbf{1.00} & 0.59 & 0.59 & 0.62 & 0.49 & 0.00 & \textbf{1.00} & \textbf{1.00} \\
        pm10 & \textbf{1.00} & 0.61 & 0.61 & 0.68 & 0.00 & 0.00 & \textbf{1.00} & 0.00 \\
        prnn fglass & \textbf{1.00} & 0.68 & 0.68 & 0.15 & 0.50 & 0.00 & \textbf{1.00} & 0.00 \\
        rabe 131 & \textbf{1.00} & 0.85 & 0.85 & 0.75 & 0.50 & 0.00 & \textbf{1.00} & \textbf{1.00} \\
        sleep & \textbf{1.00} & 0.19 & 0.19 & 0.04 & 0.52 & 0.00 & \textbf{1.00} & \textbf{1.00} \\
        strikes & \textbf{1.00} & 0.42 & 0.42 & 0.49 & 0.00 & 0.00 & \textbf{1.00} & 0.00 \\
        vehicle & \textbf{1.00} & 0.20 & 0.20 & 0.07 & 0.50 & 0.00 & \textbf{1.00} & 0.00 \\
        wine & \textbf{1.00} & 0.55 & 0.55 & 0.40 & 0.50 & 0.00 & \textbf{1.00} & \textbf{1.00} \\
        \bottomrule
        \end{tabular}
    \end{table*}
    
    \paragraph{Proximity}\label{subsec:proximity}
        %%% Proximity
        Per dataset proximity results for each method are shown in Table~\ref{table:proximity_tabular_results}. To properly interpret the results presented, Table~\ref{table:proximity_tabular_results} should be read jointly with Table~\ref{table:tabular_validity_results}. Indeed, the methods that are often the closest in terms of proximity w.r.t. to ExDBSCAN, have in general a low validity as displayed in Table~\ref{table:tabular_validity_results}. Low validity methods find counterfactuals only for ''easy'' instances, whose counterfactuals can be found relatively close-by. Therefore, taking into consideration also Figure~\ref{fig:proximity_validity_actionable}, low validity methods do not find more proximal counterfactual, but on the other end find counterfactuals only in the case in which they are relatively close to the initial instance.
        \begin{table*}[tbp]
            \centering
            \caption{Values represent the average Euclidean distance from original instances to their valid counterfactuals. Lower values indicate better proximity (more realistic, minimal changes). Error margins represent the standard error of the mean. ExDBSCAN achieves best or near-best proximity on the majority of datasets (bold indicates best, underline indicates second-best), demonstrating that our spring-like attraction mechanism effectively identifies the most proximal valid counterfactuals. BayCon Closest performs competitively by selecting the closest from its many generated counterfactuals, but this comes at the cost of poor diversity and high computational overhead. ExDBSCAN Random shows notably worse proximity, validating that our structured core-point selection via energy minimisation is essential. Surrogate-based methods show high proximity values when successful, indicating they find valid points through broad exploration rather than targeted minimal changes. N/A indicates datasets where the method found no valid counterfactuals.}
            \label{table:proximity_tabular_results}
            \footnotesize
            \begin{tabular}{lcccccccc}
            \toprule
            Dataset & ExDBSCAN & BayCon Random & BayCon Closest & GS Surrogate & DiCE Direct & GS Direct & ExDBSCAN Random & DiCE Surrogate \\
            \midrule
            autoPrice & 3.1 $\pm$ 0.2 & 3.4 $\pm$ 0.2 & \underline{2.7 $\pm$ 0.1} & \textbf{2.4 $\pm$ 0.4} & N/A & N/A & 3.3 $\pm$ 0.2 & 6.6 $\pm$ 0.2 \\
            baskball & 1.34 $\pm$ 0.08 & 1.92 $\pm$ 0.09 & \underline{1.29 $\pm$ 0.09} & 1.7 $\pm$ 0.1 & 2.4 $\pm$ 0.1 & N/A & \textbf{1.0 $\pm$ 0.1} & 2.33 $\pm$ 0.08 \\
            blood-transfusion-service-center & \underline{1.6 $\pm$ 0.1} & 1.8 $\pm$ 0.2 & \textbf{1.2 $\pm$ 0.2} & 1.8 $\pm$ 0.4 & 1.8 $\pm$ 0.3 & N/A & 2.1 $\pm$ 0.2 & 2.3 $\pm$ 0.2 \\
            bodyfat & 2.5 $\pm$ 0.4 & 1.7 $\pm$ 0.2 & \underline{0.9 $\pm$ 0.2} & \textbf{0.8 $\pm$ 0.2} & 3.3 $\pm$ 0.1 & N/A & 2.8 $\pm$ 0.2 & 4.8 $\pm$ 0.4 \\
            breast-w & \textbf{1.1 $\pm$ 0.1} & 3.0 $\pm$ 0.2 & 2.3 $\pm$ 0.2 & 1.9 $\pm$ 0.2 & 3.7 $\pm$ 0.2 & N/A & \underline{1.6 $\pm$ 0.2} & 3.4 $\pm$ 0.1 \\
            chscase census2 & \underline{1.4 $\pm$ 0.1} & 1.6 $\pm$ 0.1 & \textbf{1.1 $\pm$ 0.1} & N/A & 2.4 $\pm$ 0.2 & N/A & 1.9 $\pm$ 0.2 & 2.4 $\pm$ 0.1 \\
            chscase census6 & \textbf{1.1 $\pm$ 0.1} & N/A & N/A & N/A & 1.4 $\pm$ 0.2 & N/A & 2.0 $\pm$ 0.2 & \underline{1.4 $\pm$ 0.1} \\
            chscase vine1 & \underline{1.7 $\pm$ 0.2} & 3.3 $\pm$ 0.3 & 2.7 $\pm$ 0.3 & \textbf{1.7 $\pm$ 0.2} & 4.5 $\pm$ 0.2 & N/A & 1.9 $\pm$ 0.2 & 4.3 $\pm$ 0.2 \\
            confidence & \underline{2.49 $\pm$ 0.05} & 3.09 $\pm$ 0.04 & 2.76 $\pm$ 0.07 & N/A & N/A & 2.8 $\pm$ 0.1 & \textbf{0.59 $\pm$ 0.09} & N/A \\
            diabetes & \underline{2.9 $\pm$ 0.2} & 3.3 $\pm$ 0.2 & \textbf{1.8 $\pm$ 0.2} & 3.3 $\pm$ 0.4 & N/A & N/A & 3.7 $\pm$ 0.2 & 5.0 $\pm$ 0.2 \\
            diabetes numeric & 1.30 $\pm$ 0.10 & 1.5 $\pm$ 0.1 & \textbf{1.0 $\pm$ 0.1} & \underline{1.3 $\pm$ 0.1} & 1.7 $\pm$ 0.1 & N/A & 1.3 $\pm$ 0.2 & 1.70 $\pm$ 0.10 \\
            diggle table a1 & 1.7 $\pm$ 0.1 & 1.9 $\pm$ 0.3 & \underline{1.5 $\pm$ 0.3} & 2.1 $\pm$ 0.2 & 2.6 $\pm$ 0.2 & N/A & \textbf{1.3 $\pm$ 0.2} & N/A \\
            disclosure x noise & 1.03 $\pm$ 0.10 & 1.4 $\pm$ 0.1 & \textbf{0.8 $\pm$ 0.1} & N/A & \underline{0.84 $\pm$ 0.06} & N/A & 1.25 $\pm$ 0.09 & 1.3 $\pm$ 0.1 \\
            ecoli & \underline{4.2 $\pm$ 0.1} & 5.90 $\pm$ 0.08 & 5.7099 $\pm$ 0.0003 & N/A & 7.4 $\pm$ 1.3 & N/A & \textbf{3.8 $\pm$ 0.9} & 7.6 $\pm$ 0.8 \\
            glass & \underline{2.8 $\pm$ 0.3} & 3.9 $\pm$ 0.4 & 3.3 $\pm$ 0.5 & \textbf{2.1 $\pm$ 0.5} & 4.3 $\pm$ 0.4 & N/A & 2.9 $\pm$ 0.3 & 4.9 $\pm$ 0.3 \\
            hayes-roth & 1.40 $\pm$ 0.05 & 1.75 $\pm$ 0.06 & \textbf{1.02 $\pm$ 0.06} & \underline{1.29 $\pm$ 0.08} & N/A & N/A & 2.0 $\pm$ 0.1 & 2.48 $\pm$ 0.05 \\
            heart-statlog & \textbf{1.22 $\pm$ 0.10} & 3.4 $\pm$ 0.2 & 2.6 $\pm$ 0.2 & 2.7 $\pm$ 0.2 & 4.2 $\pm$ 0.1 & N/A & \underline{1.6 $\pm$ 0.2} & 4.23 $\pm$ 0.10 \\
            iris & \textbf{1.5 $\pm$ 0.2} & 2.4 $\pm$ 0.1 & 2.1 $\pm$ 0.1 & 2.2 $\pm$ 0.1 & N/A & 2.0 $\pm$ 0.2 & \underline{1.6 $\pm$ 0.2} & 3.2 $\pm$ 0.2 \\
            liver-disorders & \textbf{1.6 $\pm$ 0.4} & 2.5 $\pm$ 0.4 & \underline{1.7 $\pm$ 0.5} & N/A & N/A & N/A & 2.0 $\pm$ 0.3 & N/A \\
            longley & 2.1 $\pm$ 0.2 & 1.7 $\pm$ 0.2 & \underline{1.4 $\pm$ 0.2} & \textbf{0.9 $\pm$ 0.2} & 3.4 $\pm$ 0.3 & N/A & 1.8 $\pm$ 0.3 & 3.1 $\pm$ 0.2 \\
            machine cpu & 2.8 $\pm$ 0.3 & 2.2 $\pm$ 0.2 & \textbf{1.5 $\pm$ 0.2} & \underline{1.6 $\pm$ 0.2} & 2.9 $\pm$ 0.5 & N/A & 2.8 $\pm$ 0.3 & 3.4 $\pm$ 0.3 \\
            mu284 & 1.7 $\pm$ 0.1 & 1.67 $\pm$ 0.09 & \underline{1.2 $\pm$ 0.1} & \textbf{1.2 $\pm$ 0.1} & N/A & N/A & 3.4 $\pm$ 0.4 & 3.5 $\pm$ 0.4 \\
            no2 & \underline{1.5 $\pm$ 0.2} & 1.9 $\pm$ 0.2 & \textbf{1.0 $\pm$ 0.2} & 1.7 $\pm$ 0.3 & 2.5 $\pm$ 0.2 & N/A & 1.9 $\pm$ 0.2 & 3.2 $\pm$ 0.2 \\
            pm10 & \textbf{1.52 $\pm$ 0.09} & 2.4 $\pm$ 0.1 & \underline{1.6 $\pm$ 0.1} & 2.6 $\pm$ 0.2 & N/A & N/A & 2.7 $\pm$ 0.2 & N/A \\
            prnn fglass & \underline{3.1 $\pm$ 0.4} & 4.5 $\pm$ 0.5 & 4.0 $\pm$ 0.5 & \textbf{2.1 $\pm$ 0.6} & 4.7 $\pm$ 0.5 & N/A & 3.1 $\pm$ 0.4 & N/A \\
            rabe 131 & \underline{1.18 $\pm$ 0.09} & 2.0 $\pm$ 0.1 & 1.4 $\pm$ 0.1 & 1.6 $\pm$ 0.2 & 2.6 $\pm$ 0.1 & N/A & \textbf{1.1 $\pm$ 0.1} & 2.63 $\pm$ 0.10 \\
            sleep & 2.3 $\pm$ 0.2 & 1.9 $\pm$ 0.6 & \underline{1.8 $\pm$ 0.6} & N/A & 3.1 $\pm$ 0.3 & N/A & 2.0 $\pm$ 0.2 & 2.8 $\pm$ 0.2 \\
            strikes & \underline{2.08 $\pm$ 0.03} & 2.38 $\pm$ 0.04 & \textbf{1.84 $\pm$ 0.04} & 2.30 $\pm$ 0.04 & N/A & N/A & 11.8 $\pm$ 0.2 & N/A \\
            vehicle & 4.9 $\pm$ 0.7 & 3.7 $\pm$ 1.9 & \underline{3.1 $\pm$ 2.0} & \textbf{1.2 $\pm$ 1.0} & 5.5 $\pm$ 0.9 & N/A & 5.5 $\pm$ 0.5 & N/A \\
            wine & \textbf{1.7 $\pm$ 0.2} & 2.6 $\pm$ 0.3 & 1.9 $\pm$ 0.3 & 1.8 $\pm$ 0.3 & 4.5 $\pm$ 0.2 & N/A & \underline{1.7 $\pm$ 0.2} & 4.8 $\pm$ 0.1 \\
            \bottomrule
            \end{tabular}
        \end{table*}

        %Table~\ref{tab:allresults} shows how \textsc{ExDBSCAN} consistently outperforms \textsc{BayCon} in terms of proximity. Leveraging DBSCAN's structure, \textsc{ExDBSCAN} is able to find the best counterfactual across all datasets. The more proximal the counterfactual, the more the counterfactual probes the local decision boundary of the model, increasing the probability the the explanation reflects the real reason why the model made its decision.

    \paragraph{Diversity}\label{subsec:diversity}
        Per dataset diversity results for each method are shown in Table~\ref{table:diversity_tabular_results}. As with proximity, these results should be interpreted jointly with Table~\ref{table:tabular_validity_results} to fully understand each method's performance. 
        
        ExDBSCAN consistently achieves substantially higher diversity scores than all baseline methods across nearly all datasets. This superior performance directly results from our physics-inspired energy minimisation approach (Equation~\ref{eq:energy}), where the electrostatic-like repulsion term actively spreads selected core points throughout the target cluster's density-connectivity structure.

        The results in Table~\ref{table:diversity_tabular_results} demonstrate ExDBSCAN's unique ability to simultaneously optimise both proximity and diversity through our physics-inspired energy minimisation framework, a balance that no baseline method achieves.
    
    % In contrast, BayCon methods show extremely poor diversity (often $<0.1$) despite generating many counterfactual candidates. This occurs because BayCon's optimisation does not account for DBSCAN's density-connected distance structure—it optimises in Euclidean space without considering the cluster's graph topology. Consequently, BayCon tends to generate many similar counterfactuals clustered in the same region, failing to explore the full range of valid alternatives.
    
    % GS-Surrogate occasionally achieves competitive diversity when it successfully finds valid counterfactuals. However, as shown in Table~\ref{table:tabular_validity_results}, its low validity ($<50\%$ on average) severely limits its practical utility. When GS-Surrogate does find valid counterfactuals, the broad exploration inherent to the growing spheres approach can lead to diverse solutions, but this comes at the cost of unreliable validity and often poor proximity.
    
    % ExDBSCAN Random performs reasonably well on some datasets, validating that our repulsion-based approach contributes significantly to diversity. However, without the attraction term to ensure proximity to the original point $p$, ExDBSCAN Random sacrifices proximity for diversity. This demonstrates that both components of our energy function are essential: the repulsion term ensures diversity while the spring-like attraction term maintains proximity.

        %% Diversity
        \begin{table*}[tbp]
            \centering
            \caption{Table shows all diversity results per dataset. Higher values indicate greater diversity (more distinct, non-redundant alternatives). Error margins represent the standard error of the mean. ExDBSCAN achieves substantially higher diversity than all baselines across nearly all datasets; bold indicates best, underline indicates second-best. BayCon shows extremely poor diversity despite generating many counterfactuals, because its optimisation does not account for DBSCAN's density-connected distance. GS-Surrogate occasionally shows competitive diversity when successful, but its low validity limits practical utility. The results demonstrate ExDBSCAN's unique ability to balance both proximity and diversity through physics-inspired energy minimisation. The best-performing method is highlighted in bold while the second-best is underlined.}
            \label{table:diversity_tabular_results}
            \footnotesize
            \begin{tabular}{lcccccccc}
            \toprule
            Dataset & ExDBSCAN & BayCon Random & BayCon Closest & GS Surrogate & DiCE Direct & GS Direct & ExDBSCAN Random & DiCE Surrogate \\
            \midrule
            autoPrice & \textbf{7.2 $\pm$ 0.2} & 0.58 $\pm$ 0.01 & 0.57 $\pm$ 0.01 & \underline{0.92 $\pm$ 0.03} & N/A & N/A & 0.727 $\pm$ 0.008 & 0.13 $\pm$ 0.01 \\
            baskball & \textbf{1.63 $\pm$ 0.05} & 0.031 $\pm$ 0.002 & 0.033 $\pm$ 0.004 & \underline{0.24 $\pm$ 0.06} & 0.003 $\pm$ 0.000 & N/A & 0.045 $\pm$ 0.003 & 0.0087 $\pm$ 0.0010 \\
            blood-transfusion & \textbf{1.6 $\pm$ 0.1} & 0.0057 $\pm$ 0.0009 & 0.0036 $\pm$ 0.0006 & \underline{0.5 $\pm$ 0.2} & 0.00038 $\pm$ 0.00008 & N/A & 0.027 $\pm$ 0.006 & 0.00024 $\pm$ 0.00004 \\
            bodyfat & \textbf{4.8 $\pm$ 0.2} & 0.29 $\pm$ 0.01 & 0.31 $\pm$ 0.02 & \underline{0.66 $\pm$ 0.09} & 0.26 $\pm$ 0.02 & N/A & 0.44 $\pm$ 0.02 & 0.220 $\pm$ 0.010 \\
            breast-w & \textbf{5.0 $\pm$ 0.2} & 0.30 $\pm$ 0.01 & 0.27 $\pm$ 0.02 & \underline{0.65 $\pm$ 0.08} & 0.13 $\pm$ 0.01 & N/A & 0.42 $\pm$ 0.02 & 0.128 $\pm$ 0.009 \\
            chscase census2 & \textbf{2.29 $\pm$ 0.07} & 0.09 $\pm$ 0.01 & 0.091 $\pm$ 0.009 & N/A & 0.015 $\pm$ 0.000 & N/A & \underline{0.147 $\pm$ 0.009} & 0.028 $\pm$ 0.002 \\
            chscase census6 & \textbf{0.84 $\pm$ 0.02} & 0.060 $\pm$ 0.000 & 0.084 $\pm$ 0.000 & N/A & 0.158 $\pm$ 0.000 & N/A & \underline{0.41 $\pm$ 0.01} & 0.151 $\pm$ 0.001 \\
            chscase vine1 & \textbf{5.1 $\pm$ 0.2} & 0.42 $\pm$ 0.03 & 0.39 $\pm$ 0.03 & \underline{0.81 $\pm$ 0.06} & 0.088 $\pm$ 0.003 & N/A & 0.45 $\pm$ 0.02 & 0.127 $\pm$ 0.007 \\
            confidence & \textbf{1.81 $\pm$ 0.05} & \underline{0.09 $\pm$ 0.03} & 0.08 $\pm$ 0.02 & N/A & N/A & 0.022 $\pm$ 0.002 & 0.051 $\pm$ 0.002 & N/A \\
            diabetes & \textbf{5.3 $\pm$ 0.1} & 0.30 $\pm$ 0.01 & 0.30 $\pm$ 0.01 & \underline{0.79 $\pm$ 0.05} & N/A & N/A & 0.53 $\pm$ 0.01 & 0.062 $\pm$ 0.004 \\
            diabetes numeric & \textbf{1.4 $\pm$ 0.1} & 0.0006 $\pm$ 0.0001 & 0.0023 $\pm$ 0.0006 & \underline{0.05 $\pm$ 0.03} & 0.00006 $\pm$ 0.00002 & N/A & 0.035 $\pm$ 0.004 & 0.0026 $\pm$ 0.0004 \\
            diggle table a1 & \textbf{1.60 $\pm$ 0.06} & 0.41 $\pm$ 0.09 & 0.42 $\pm$ 0.09 & 0.40 $\pm$ 0.07 & 0.310 $\pm$ 0.000 & N/A & \underline{0.49 $\pm$ 0.03} & N/A \\
            disclosure x noise & \textbf{1.16 $\pm$ 0.08} & 0.0031 $\pm$ 0.0005 & 0.0006 $\pm$ 0.0002 & N/A & 0.00026 $\pm$ 0.00008 & N/A & \underline{0.011 $\pm$ 0.002} & 0.00014 $\pm$ 0.00005 \\
            ecoli & \textbf{6.9 $\pm$ 0.1} & 0.03 $\pm$ 0.02 & 0.02 $\pm$ 0.02 & N/A & 0.032 $\pm$ 0.006 & N/A & \underline{0.630 $\pm$ 0.009} & 0.058 $\pm$ 0.006 \\
            glass & \textbf{5.4 $\pm$ 0.2} & 0.39 $\pm$ 0.02 & 0.39 $\pm$ 0.02 & \underline{0.8 $\pm$ 0.1} & 0.10 $\pm$ 0.01 & N/A & 0.44 $\pm$ 0.02 & 0.056 $\pm$ 0.009 \\
            hayes-roth & \textbf{3.9 $\pm$ 0.1} & 0.084 $\pm$ 0.005 & 0.069 $\pm$ 0.005 & \underline{0.29 $\pm$ 0.04} & N/A & N/A & 0.221 $\pm$ 0.010 & 0.058 $\pm$ 0.002 \\
            heart-statlog & \textbf{7.7 $\pm$ 0.4} & 0.51 $\pm$ 0.02 & 0.50 $\pm$ 0.02 & \underline{0.94 $\pm$ 0.03} & 0.32 $\pm$ 0.03 & N/A & 0.61 $\pm$ 0.02 & 0.29 $\pm$ 0.02 \\
            iris & \textbf{2.90 $\pm$ 0.07} & 0.19 $\pm$ 0.05 & 0.19 $\pm$ 0.05 & \underline{0.36 $\pm$ 0.10} & N/A & 0.159 $\pm$ 0.009 & 0.207 $\pm$ 0.007 & 0.0014 $\pm$ 0.0004 \\
            liver-disorders & \textbf{2.3 $\pm$ 0.2} & 0.034 $\pm$ 0.006 & 0.022 $\pm$ 0.004 & N/A & N/A & N/A & \underline{0.10 $\pm$ 0.02} & N/A \\
            longley & \textbf{2.1 $\pm$ 0.1} & 0.039 $\pm$ 0.004 & 0.037 $\pm$ 0.005 & \underline{0.70 $\pm$ 0.03} & 0.352 $\pm$ 0.000 & N/A & 0.63 $\pm$ 0.03 & 0.307 $\pm$ 0.009 \\
            machine cpu & \textbf{1.6 $\pm$ 0.1} & 0.015 $\pm$ 0.001 & 0.015 $\pm$ 0.002 & \underline{0.28 $\pm$ 0.07} & 0.0016 $\pm$ 0.0005 & N/A & 0.038 $\pm$ 0.007 & 0.0009 $\pm$ 0.0003 \\
            mu284 & \textbf{2.24 $\pm$ 0.09} & 0.077 $\pm$ 0.004 & 0.068 $\pm$ 0.003 & \underline{0.79 $\pm$ 0.07} & N/A & N/A & 0.26 $\pm$ 0.01 & 0.056 $\pm$ 0.005 \\
            no2 & \textbf{4.1 $\pm$ 0.3} & 0.18 $\pm$ 0.02 & 0.15 $\pm$ 0.02 & \underline{0.65 $\pm$ 0.07} & 0.14 $\pm$ 0.02 & N/A & 0.28 $\pm$ 0.02 & 0.08 $\pm$ 0.01 \\
            pm10 & \textbf{4.0 $\pm$ 0.1} & 0.153 $\pm$ 0.007 & 0.136 $\pm$ 0.006 & \underline{0.69 $\pm$ 0.04} & N/A & N/A & 0.30 $\pm$ 0.01 & N/A \\
            prnn fglass & \textbf{5.3 $\pm$ 0.2} & 0.44 $\pm$ 0.03 & 0.42 $\pm$ 0.03 & \underline{0.8 $\pm$ 0.1} & 0.08 $\pm$ 0.01 & N/A & 0.43 $\pm$ 0.02 & N/A \\
            rabe 131 & \textbf{2.50 $\pm$ 0.08} & 0.131 $\pm$ 0.008 & 0.12 $\pm$ 0.01 & \underline{0.51 $\pm$ 0.06} & 0.034 $\pm$ 0.000 & N/A & 0.148 $\pm$ 0.008 & 0.0279 $\pm$ 0.0010 \\
            sleep & \textbf{1.03 $\pm$ 0.04} & 0.19 $\pm$ 0.06 & 0.19 $\pm$ 0.06 & N/A & 0.050 $\pm$ 0.000 & N/A & 0.41 $\pm$ 0.02 & 0.071 $\pm$ 0.004 \\
            strikes & \textbf{2.76 $\pm$ 0.02} & 0.152 $\pm$ 0.006 & 0.139 $\pm$ 0.006 & \underline{0.65 $\pm$ 0.01} & N/A & N/A & 0.161 $\pm$ 0.002 & N/A \\
            vehicle & \textbf{4.7 $\pm$ 0.3} & 0.36 $\pm$ 0.04 & 0.31 $\pm$ 0.04 & \underline{0.6 $\pm$ 0.2} & 0.18 $\pm$ 0.02 & N/A & 0.38 $\pm$ 0.02 & N/A \\
            wine & \textbf{7.9 $\pm$ 0.4} & 0.53 $\pm$ 0.01 & 0.51 $\pm$ 0.01 & \underline{0.75 $\pm$ 0.05} & 0.33 $\pm$ 0.02 & N/A & 0.60 $\pm$ 0.02 & 0.30 $\pm$ 0.01 \\
            \bottomrule
            \end{tabular}
        \end{table*}

\subsection{Run-Time Analysis}\label{app:run_time_analysis}
        To show ExDBSCAN's run-time performance over the baselines we run all methods and measure the average time in seconds per-dataset needed to find counterfactuals.

        We present the run-time analysis in a distribution plot in Figure~\ref{fig:runTime}, showing the run-time over each method given their ability to find valid counterfactuals. If a counterfactual is not found, it is not included in the plot and the proportion of valid counterfactuals is thereby adjusted. 

        From Figure~\ref{fig:runTime}, it is evident how DiCE Surrogate and DiCE Direct are the two heaviest methods computationally. More detailed insights on run-time are deducible from Table~\ref{table:runTime}, in which average per-dataset run-time are shown, including run-times for instances in which the method fails to retrieve a counterfactual.

        Specifically, Table~\ref{table:runTime} shows how ExDBSCAN Random is consistently the highest performing method in terms of run-time. Its simple heuristic allow the method to produce counterfactuals without a proper optimisation, speeding-up counterfactual search. While ExDBSCAN and the Growing Spheres algorithms are often the second-fastest methods, DiCE and BayCon consistently rank at the bottom.
        \begin{figure}[ht]
            \centering
            \includegraphics[width=0.9\linewidth]{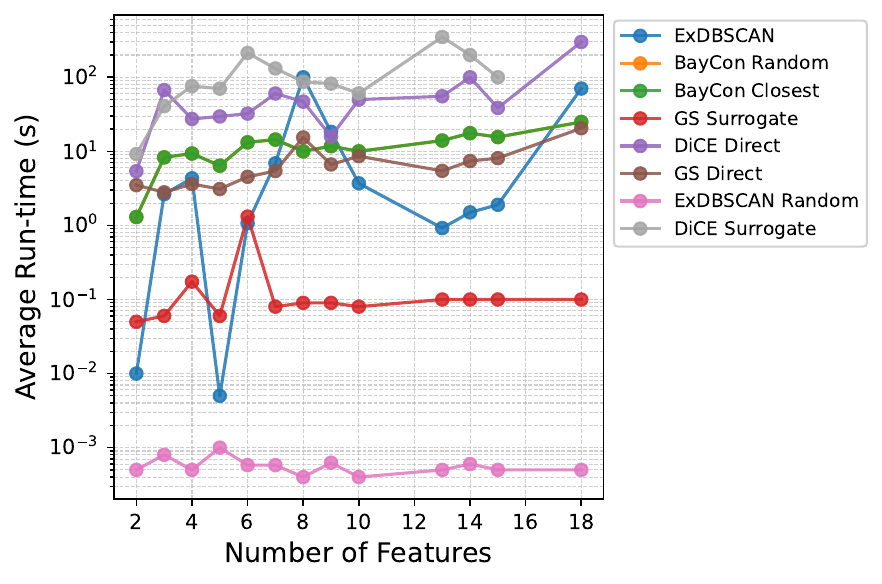}
            \caption{Lineplot showing the relationship between runtime in seconds (y-axis, log-scaled) and number of features (x-axis).}
            \Description{Lineplot showing the relationship between runtime in seconds (y-axis, log-scaled) and number of features (x-axis).}
            \label{fig:runtime_dimensions}
        \end{figure}
        \begin{figure*}[tbp]
            \centering
            \includegraphics[width=1\linewidth]{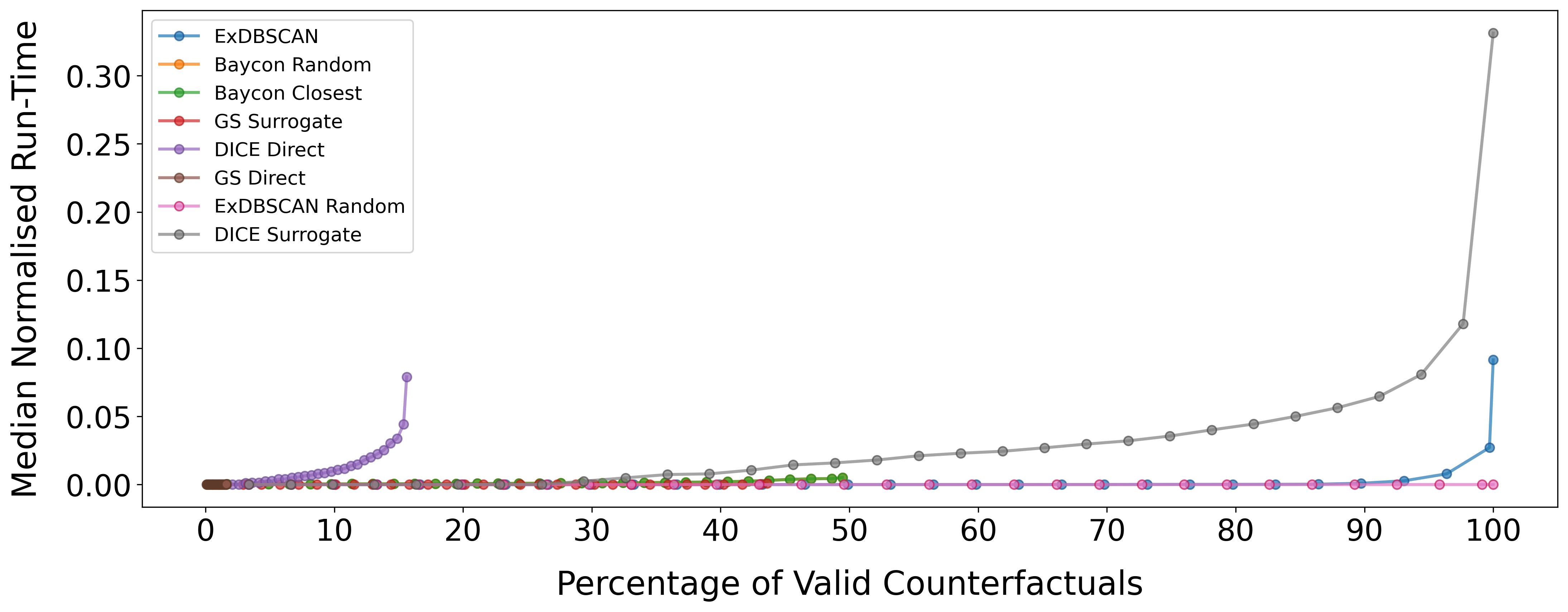}
            \caption{Run-time evaluation. Cumulative curves showing percentage of successful queries (x-axis) vs. median run-time in seconds (y-axis, lower is better). The two DiCE's variation are the heaviest methods computationally.}
            \Description{Empirical percentile curve showing the percentage of valid counterfactuals in relation to median normalised run time.}
            \label{fig:runTime}
        \end{figure*}

\begin{table*}[tbp]
    \centering
    \caption{The tables shows the average run-time in seconds per each-method across datasets. Lower values indicate faster run-time. Error margins represent the standard error of the mean. ExDBSCAN Random achieves the best run-time almost everywhere, as its simple heuristic does not involve an optimisation search. The Growing spheres algorithm and ExDBSCAN are the next fastest methods by average run-time. BayCon and DiCE are instead the methods consistently having slower run-times. N/A indicates queries where the method found no valid counterfactuals. The best-performing method is highlighted in bold while the second-best is underlined}
    \label{table:runTime}
    \scriptsize
    \begin{tabular}{lcccccccc}
    \toprule
    Dataset & ExDBSCAN & BayCon Random & BayCon Closest & GS Surrogate & DiCE Direct & GS Direct & ExDBSCAN Random & DiCE Surrogate \\
    \midrule
    autoPrice & 1.9 $\pm$ 0.4 & 15.6 $\pm$ 1.4 & 15.6 $\pm$ 1.4 & $\underline{1 \cdot 10^{-1} \pm 2 \cdot 10^{-3}}$ & 38.6 $\pm$ 5.4 & 8.1 $\pm$ 0.7 & $\boldsymbol{5 \cdot 10^{-4} \pm 3 \cdot 10^{-5}}$ & $1 \cdot 10^{2} \pm 1 \cdot 10^{1}$ \\
    baskball & $\underline{3 \cdot 10^{-3} \pm 9 \cdot 10^{-5}}$ & 8.3 $\pm$ 0.9 & 8.3 $\pm$ 0.9 & $6 \cdot 10^{-2} \pm 2 \cdot 10^{-3}$ & 35.9 $\pm$ 3.4 & 3.0 $\pm$ 0.4 & $\boldsymbol{7 \cdot 10^{-4} \pm 6 \cdot 10^{-6}}$ & $1 \cdot 10^{2} \pm 1 \cdot 10^{1}$ \\
    blood-transfusion & 19.5 $\pm$ 3.2 & 9.6 $\pm$ 1.2 & 9.6 $\pm$ 1.2 & \underline{0.63 $\pm$ 0.10} & 39.3 $\pm$ 4.0 & 8.31 $\pm$ 1.01 & $\boldsymbol{5 \cdot 10^{-4} \pm 2 \cdot 10^{-5}}$ & 74.5 $\pm$ 17.8 \\
    bodyfat & 1.5 $\pm$ 0.3 & 17.6 $\pm$ 1.7 & 17.6 $\pm$ 1.7 & $\underline{1 \cdot 10^{-1} \pm 6 \cdot 10^{-3}}$ & $1 \cdot 10^{2} \pm 7 \cdot 10^{0}$ & 7.4 $\pm$ 0.7 & $\boldsymbol{6 \cdot 10^{-4} \pm 3 \cdot 10^{-5}}$ & $2 \cdot 10^{2} \pm 2 \cdot 10^{1}$ \\
    breast-w & 52.0 $\pm$ 8.1 & 8.8 $\pm$ 1.0 & 8.8 $\pm$ 1.0 & $9 \cdot 10^{-2} \pm 2 \cdot 10^{-3}$ & $\underline{7 \cdot 10^{-2} \pm 2 \cdot 10^{-4}}$ & 6.5 $\pm$ 1.0 & $\boldsymbol{5 \cdot 10^{-4} \pm 2 \cdot 10^{-5}}$ & $1 \cdot 10^{2} \pm 2 \cdot 10^{1}$ \\
    chscase census2 & $\underline{2 \cdot 10^{-3} \pm 1 \cdot 10^{-4}}$ & 11.7 $\pm$ 1.2 & 11.7 $\pm$ 1.2 & N/A & $5 \cdot 10^{1} \pm 4 \cdot 10^{0}$ & 3.7 $\pm$ 0.5 & $\boldsymbol{8 \cdot 10^{-4} \pm 4 \cdot 10^{-5}}$ & $1 \cdot 10^{2} \pm 2 \cdot 10^{1}$ \\
    chscase census6 & $\boldsymbol{4 \cdot 10^{-4} \pm 4 \cdot 10^{-6}}$ & $2 \cdot 10^{1} \pm 3 \cdot 10^{-2}$ & $2 \cdot 10^{1} \pm 3 \cdot 10^{-2}$ & N/A & 32.7 $\pm$ 5.9 & 4.6 $\pm$ 0.4 & $\underline{5 \cdot 10^{-4} \pm 4 \cdot 10^{-5}}$ & 19.4 $\pm$ 4.6 \\
    chscase vine1 & $\underline{3 \cdot 10^{-2} \pm 3 \cdot 10^{-3}}$ & 14.8 $\pm$ 1.3 & 14.8 $\pm$ 1.3 & $9 \cdot 10^{-2} \pm 2 \cdot 10^{-3}$ & 23.1 $\pm$ 2.9 & 3.6 $\pm$ 0.4 & $\boldsymbol{10 \cdot 10^{-4} \pm 5 \cdot 10^{-5}}$ & $1 \cdot 10^{2} \pm 2 \cdot 10^{1}$ \\
    confidence & 0.18 $\pm$ 0.01 & 9.6 $\pm$ 1.4 & 9.6 $\pm$ 1.4 & $\underline{6 \cdot 10^{-2} \pm 6 \cdot 10^{-4}}$ & $1 \cdot 10^{2} \pm 7 \cdot 10^{0}$ & $7 \cdot 10^{-1} \pm 7 \cdot 10^{-3}$ & $\boldsymbol{10 \cdot 10^{-4} \pm 6 \cdot 10^{-5}}$ & N/A \\
    diabetes & $1 \cdot 10^{2} \pm 2 \cdot 10^{1}$ & $1 \cdot 10^{1} \pm 1 \cdot 10^{0}$ & $1 \cdot 10^{1} \pm 1 \cdot 10^{0}$ & $\underline{9 \cdot 10^{-2} \pm 2 \cdot 10^{-3}}$ & 46.9 $\pm$ 2.9 & 15.4 $\pm$ 1.1 & $\boldsymbol{4 \cdot 10^{-4} \pm 7 \cdot 10^{-6}}$ & 87.0 $\pm$ 9.5 \\
    diabetes numeric & $\underline{1 \cdot 10^{-2} \pm 2 \cdot 10^{-3}}$ & 1.3 $\pm$ 0.1 & 1.3 $\pm$ 0.1 & $5 \cdot 10^{-2} \pm 2 \cdot 10^{-3}$ & 5.4 $\pm$ 0.9 & 3.5 $\pm$ 0.5 & $\boldsymbol{5 \cdot 10^{-4} \pm 4 \cdot 10^{-5}}$ & 9.2 $\pm$ 4.2 \\
    diggle table a1 & $\underline{5 \cdot 10^{-4} \pm 4 \cdot 10^{-5}}$ & 15.3 $\pm$ 1.0 & 15.3 $\pm$ 1.0 & $6 \cdot 10^{-2} \pm 2 \cdot 10^{-3}$ & 15.0 $\pm$ 2.8 & 2.9 $\pm$ 0.5 & $\boldsymbol{4 \cdot 10^{-4} \pm 2 \cdot 10^{-5}}$ & N/A \\
    disclosure x noise & 5.1 $\pm$ 0.8 & 7.00 $\pm$ 1.07 & 7.00 $\pm$ 1.07 & N/A & 34.4 $\pm$ 2.5 & \underline{4.9 $\pm$ 0.7} & $\boldsymbol{6 \cdot 10^{-4} \pm 3 \cdot 10^{-5}}$ & 40.7 $\pm$ 11.5 \\
    ecoli & 15.3 $\pm$ 10.4 & $2 \cdot 10^{1} \pm 2 \cdot 10^{0}$ & $2 \cdot 10^{1} \pm 2 \cdot 10^{0}$ & $\underline{9 \cdot 10^{-2} \pm 7 \cdot 10^{-4}}$ & 32.6 $\pm$ 8.4 & 8.4 $\pm$ 1.6 & $\boldsymbol{5 \cdot 10^{-4} \pm 4 \cdot 10^{-5}}$ & $2 \cdot 10^{2} \pm 2 \cdot 10^{1}$ \\
    glass & 11.0 $\pm$ 1.8 & 12.5 $\pm$ 1.4 & 12.5 $\pm$ 1.4 & $\underline{9 \cdot 10^{-2} \pm 2 \cdot 10^{-3}}$ & 22.2 $\pm$ 2.2 & 8.5 $\pm$ 0.8 & $\boldsymbol{5 \cdot 10^{-4} \pm 2 \cdot 10^{-5}}$ & 46.5 $\pm$ 9.1 \\
    hayes-roth & $\underline{3 \cdot 10^{-2} \pm 3 \cdot 10^{-3}}$ & 5.4 $\pm$ 0.6 & 5.4 $\pm$ 0.6 & $6 \cdot 10^{-2} \pm 9 \cdot 10^{-4}$ & $3 \cdot 10^{-2} \pm 2 \cdot 10^{-4}$ & 3.1 $\pm$ 0.4 & $\boldsymbol{4 \cdot 10^{-4} \pm 3 \cdot 10^{-6}}$ & $1 \cdot 10^{2} \pm 6 \cdot 10^{0}$ \\
    heart-statlog & 1.2 $\pm$ 0.2 & 12.1 $\pm$ 1.3 & 12.1 $\pm$ 1.3 & $\underline{1 \cdot 10^{-1} \pm 2 \cdot 10^{-3}}$ & 26.3 $\pm$ 2.7 & 4.8 $\pm$ 0.6 & $\boldsymbol{5 \cdot 10^{-4} \pm 1 \cdot 10^{-5}}$ & $5 \cdot 10^{2} \pm 3 \cdot 10^{1}$ \\
    iris & 2.1 $\pm$ 0.4 & 8.2 $\pm$ 1.3 & 8.2 $\pm$ 1.3 & $\underline{6 \cdot 10^{-2} \pm 2 \cdot 10^{-3}}$ & 46.7 $\pm$ 4.4 & 0.79 $\pm$ 0.02 & $\boldsymbol{5 \cdot 10^{-4} \pm 5 \cdot 10^{-6}}$ & $3 \cdot 10^{1} \pm 1 \cdot 10^{1}$ \\
    liver-disorders & \underline{2.95 $\pm$ 0.04} & 5.1 $\pm$ 1.5 & 5.1 $\pm$ 1.5 & N/A & N/A & N/A & $\boldsymbol{10 \cdot 10^{-4} \pm 9 \cdot 10^{-5}}$ & N/A \\
    longley & $\boldsymbol{4 \cdot 10^{-4} \pm 2 \cdot 10^{-5}}$ & 17.5 $\pm$ 0.9 & 17.5 $\pm$ 0.9 & $7 \cdot 10^{-2} \pm 2 \cdot 10^{-3}$ & 15.0 $\pm$ 3.6 & 3.9 $\pm$ 0.5 & $\underline{4 \cdot 10^{-4} \pm 2 \cdot 10^{-5}}$ & 20.6 $\pm$ 6.5 \\
    machine cpu & \underline{2.2 $\pm$ 0.3} & 9.2 $\pm$ 1.3 & 9.2 $\pm$ 1.3 & 3.8 $\pm$ 0.1 & 39.2 $\pm$ 4.5 & N/A & $\boldsymbol{6 \cdot 10^{-4} \pm 5 \cdot 10^{-5}}$ & $6 \cdot 10^{2} \pm 2 \cdot 10^{2}$ \\
    mu284 & 3.7 $\pm$ 0.7 & $1 \cdot 10^{1} \pm 1 \cdot 10^{0}$ & $1 \cdot 10^{1} \pm 1 \cdot 10^{0}$ & $\underline{8 \cdot 10^{-2} \pm 2 \cdot 10^{-3}}$ & $5 \cdot 10^{1} \pm 6 \cdot 10^{0}$ & 8.6 $\pm$ 0.6 & $\boldsymbol{4 \cdot 10^{-4} \pm 2 \cdot 10^{-5}}$ & 60.6 $\pm$ 9.3 \\
    no2 & 15.2 $\pm$ 2.6 & 10.2 $\pm$ 1.4 & 10.2 $\pm$ 1.4 & $\underline{7 \cdot 10^{-2} \pm 3 \cdot 10^{-3}}$ & $1 \cdot 10^{2} \pm 5 \cdot 10^{0}$ & N/A & $\boldsymbol{5 \cdot 10^{-4} \pm 4 \cdot 10^{-6}}$ & $2 \cdot 10^{2} \pm 2 \cdot 10^{1}$ \\
    pm10 & 4.0 $\pm$ 0.6 & 11.2 $\pm$ 0.8 & 11.2 $\pm$ 0.8 & $\underline{8 \cdot 10^{-2} \pm 2 \cdot 10^{-3}}$ & $1 \cdot 10^{2} \pm 5 \cdot 10^{0}$ & 4.9 $\pm$ 0.4 & $\boldsymbol{5 \cdot 10^{-4} \pm 1 \cdot 10^{-5}}$ & N/A \\
    prnn fglass & 10.3 $\pm$ 1.7 & 10.9 $\pm$ 1.3 & 10.9 $\pm$ 1.3 & $\underline{9 \cdot 10^{-2} \pm 2 \cdot 10^{-3}}$ & 18.1 $\pm$ 1.7 & 8.0 $\pm$ 0.8 & $\boldsymbol{5 \cdot 10^{-4} \pm 3 \cdot 10^{-5}}$ & N/A \\
    rabe 131 & $\underline{5 \cdot 10^{-3} \pm 3 \cdot 10^{-4}}$ & 6.4 $\pm$ 0.9 & 6.4 $\pm$ 0.9 & $6 \cdot 10^{-2} \pm 2 \cdot 10^{-3}$ & 29.6 $\pm$ 2.6 & 3.1 $\pm$ 0.4 & $\boldsymbol{10 \cdot 10^{-4} \pm 4 \cdot 10^{-5}}$ & 70.3 $\pm$ 11.7 \\
    sleep & $\boldsymbol{3 \cdot 10^{-4} \pm 5 \cdot 10^{-6}}$ & 19.0 $\pm$ 0.8 & 19.0 $\pm$ 0.8 & $8 \cdot 10^{-2} \pm 2 \cdot 10^{-3}$ & 19.5 $\pm$ 5.8 & 4.7 $\pm$ 0.4 & $\underline{6 \cdot 10^{-4} \pm 2 \cdot 10^{-5}}$ & 26.6 $\pm$ 6.8 \\
    strikes & $2 \cdot 10^{-1} \pm 8 \cdot 10^{-3}$ & 14.3 $\pm$ 0.2 & 14.3 $\pm$ 0.2 & $\underline{8 \cdot 10^{-2} \pm 4 \cdot 10^{-4}}$ & 42.5 $\pm$ 0.9 & 5.1 $\pm$ 0.1 & $\boldsymbol{4 \cdot 10^{-4} \pm 1 \cdot 10^{-6}}$ & N/A \\
    vehicle & 70.5 $\pm$ 11.5 & 24.9 $\pm$ 1.3 & 24.9 $\pm$ 1.3 & $\underline{1 \cdot 10^{-1} \pm 5 \cdot 10^{-3}}$ & $3 \cdot 10^{2} \pm 2 \cdot 10^{1}$ & 20.5 $\pm$ 2.0 & $\boldsymbol{5 \cdot 10^{-4} \pm 5 \cdot 10^{-6}}$ & N/A \\
    wine & 0.64 $\pm$ 0.08 & 15.9 $\pm$ 1.5 & 15.9 $\pm$ 1.5 & $\underline{1 \cdot 10^{-1} \pm 2 \cdot 10^{-3}}$ & 84.6 $\pm$ 5.3 & 6.1 $\pm$ 0.7 & $\boldsymbol{5 \cdot 10^{-4} \pm 3 \cdot 10^{-6}}$ & $2 \cdot 10^{2} \pm 2 \cdot 10^{1}$ \\
    \bottomrule
    \end{tabular}
    \end{table*}

\subsection{Non-Actionable Features}\label{subsec:nonActionable}
    Actionability ensures that the generated CEs are actionable, i.e., a user can actually realistically change the features. Non-actionable features such as genetic information or inherent personal attributes, including age and gender, make explanations unrealistic and prevent human stake-holders to simulate and apply the proposed changes.

    \subsubsection{Non-Actionable Features in ExDBSCAN}
        To account for non-actionable features, we restrict the set of core points to consider, when identifying the set $C'$. The constrained set of core points includes all core points that are not further than $\varepsilon$ away from the original point when measuring distances in the subspace defined by the non-actionable features $\mathbb{R}^{n'}$, where $n' < n$ denotes the number of non-actionable features. In this way, we take into consideration only core points whose $\varepsilon$ neighbourhood is reachable without changing any of the non-actionable features.
    
        The undirected weighted graph $G$ is then built and the set of reference core points chosen $C'$ as described in Section~\ref{subsec:cfexp}.

        %\begin{equation}
         %   Q_{\text{constrained}} = \{ q \in Q \mid \|p_{dim},q_{dim}\| \leq \varepsilon, \forall \text{non actionable dimensions} \}.
        %\end{equation}
        
        When moving from the set of core points $C'$ to the set of counterfactuals $C$, it is not guaranteed that we can move in the direction of the point to explain $p$, as we now have the constraint given by the non-actionable features. Therefore, we move toward point $p$ only in the subspace $\mathbb{R}^{n-n'}$ formed by the actionable features, still putting the counterfactual $\varepsilon$ away from the reference core point, which is always possible given our core point filtering procedure.
        We visualise the approach in Figure~\ref{fig:actionabilityToy} where we show how a single counterfactual is produced for a toy dataset when feature $Y$ is non-actionable.

        \begin{figure*}[ht]
            \centering
            \includegraphics[width=1\linewidth]{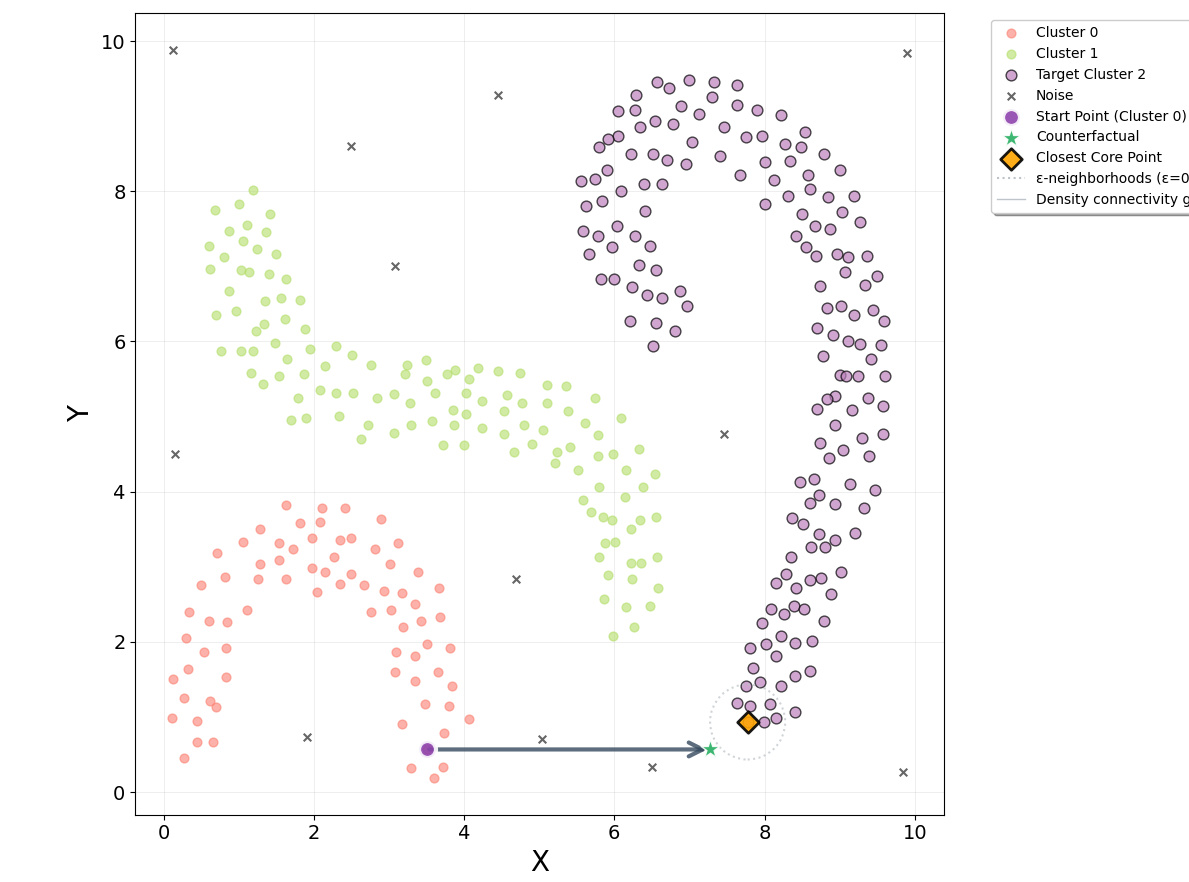}
            \caption{Conceptual Figure illustrating non-actionability. Coloured points denote cluster points assigned to different clusters. The x-axis shows the actionable and y-axis shows non-actionable feature. The orange rhombus shows the closest core point, that is in the non-actionable direction not further than $\varepsilon$ away from the point to explain (purple). The green star is the identified counterfactual.}
            \Description{Conceptual figure with three clusters showing how a counterfactual explanation is computed when some features are non-actionable. The search direction is restricted and reduces the search space for valid counterfactuals.}
            \label{fig:actionabilityToy}
        \end{figure*}

        \subsubsection{Real-world Constraints}
            There are different strategies to account for real-world constraints. First of all, if the non-actionable features include categorical features, we suggest changing the metric employed in DBSCAN as well as ExDBSCAN, for more details to employing different metrics see Section \ref{subsec:metricchoice}.
    
            Other constraints, i.e., monotonicity or feature bounds can be easily incorporated through further filtering of the core point set. Instead of restricting the set of reference core points to actionable dimensions only, we can additionally restrict this to be a specific direction or range depending on the constraints defined by the user. This allows us to also incorporate already known domain knowledge into our model.
        
    \subsubsection{Non-Actionable features}\label{subsec:nafResults}
        We here evaluate \textsc{ExDBSCAN} in the setting where a subset of features are non-actionable. We utilise the experimental pipeline used in Section~\ref{sec:experiments}. How many features are non actionable is a parameter chosen at random between one $1$ and $\frac{n}{2}$ features, with $n$ the total number of features. 

    \subsubsection{Non-Actionable Results}\label{app:non_actionable_results}
    We present here the experimental analysis benchmarking ExDBSCAN performance when a subset of the features is non-actionable, i.e.\ it cannot be modified by the counterfactual method. We use the same experimental setting used in Section~\ref{sec:experiments}. Differently from Section~\ref{sec:experiments}, we randomly select a subset of the features of cardinality between $0$ and half of the number of features, to be non-actionable.

     Empirical percentile curves, visualising the distribution of proximity while simultaneously reflecting method failure rates are presented in Figure~\ref{fig:proximity_validity_non_actionable}; diversity information is instead visualised in Figure~\ref{fig:diversity_non_actionable}. Results mirror the ones presented in Section~\ref{sec:experiments}, with ExDBSCAN outperforming competitors in validity, proximity and diversity. Full per-dataset tables are presented in Tables~\ref{table:tabular_non_actionability_validity_results},~\ref{table:diversity_tabular_non_actionability_results}, and~\ref{table:tabular_proximity_non_actionable_results}.

    \begin{figure*}[tbp]
        \centering
        \includegraphics[width=1\linewidth]{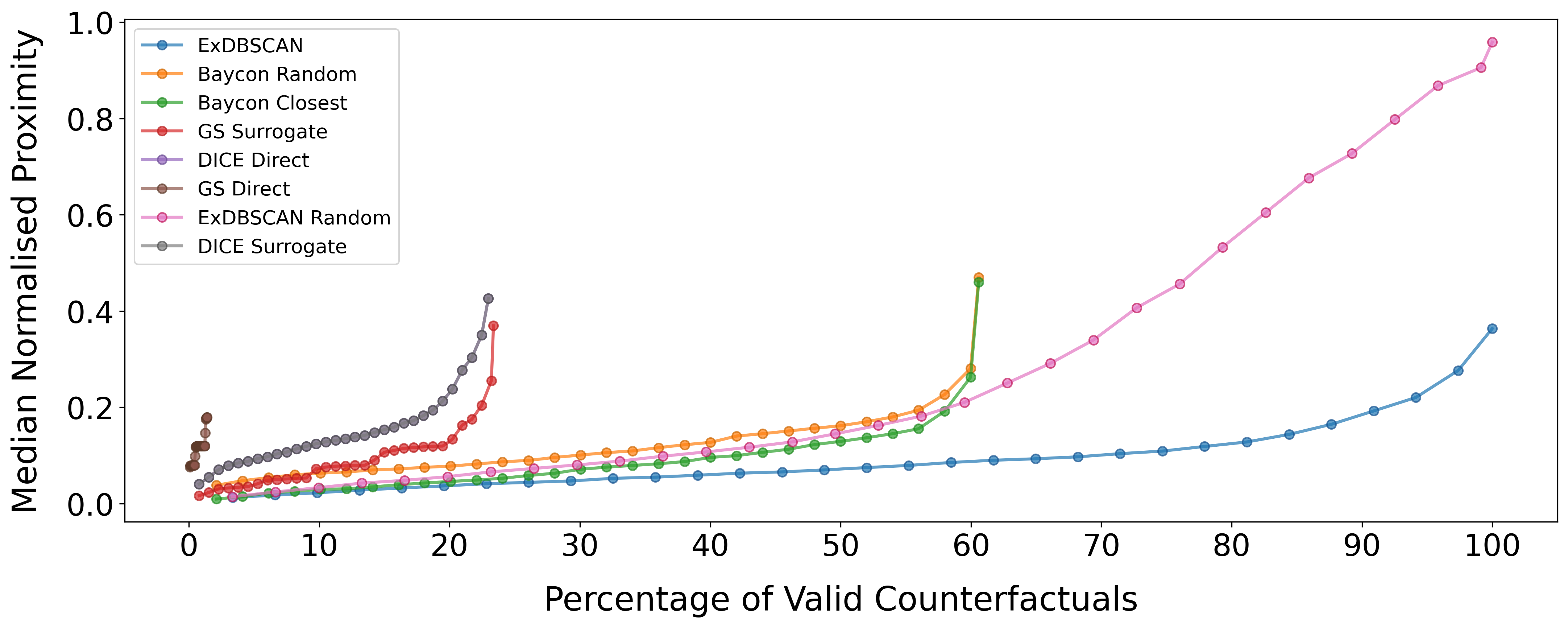}
        \caption{
        Validity/proximity when a subset of the features is non-actionable. Cumulative percentage of successful queries (x-axis) vs. proximity (y-axis, $\downarrow$ \textit{lower is better}). ExDBSCAN achieves perfect validity with superior proximity. ExDBSCAN Random and GS-Direct show worse proximity. Surrogate-based methods and BayCon show poor validity ($< 50\%$) due to lack of alignment with DBSCAN model.}
        \Description{Empirical percentile curve showing the percentage of valid counterfactuals in relation to median normalised proximity when the set of features is restricted due to non-actionable features.}
        \label{fig:proximity_validity_non_actionable}
    \end{figure*}

    \begin{figure*}[tbp]
        \centering
        \includegraphics[width=1\linewidth]{Figures/diversity_plot.png}
        \caption{Diversity when a subset of the features is non-actionable. Cumulative percentage of successful queries (x-axis) vs. diversity (y-axis, higher is better). ExDBSCAN achieves highest diversity while maintaining perfect validity, balancing proximity and diversity in its physics-inspired approach.}
        \Description{Empirical percentile curve showing the percentage of valid counterfactuals in relation to median normalised diversity when the set of features is restricted due to non-actionable features.}
        \label{fig:diversity_non_actionable}
    \end{figure*}

\begin{table*}[tbp]
    \centering
    \caption{Proximity scores across all datasets when a subset of the features is non-actionable. Values represent the proportion of queries for which each method successfully generated valid counterfactuals that satisfy DBSCAN's density-connectivity criteria. Higher values indicate better performance. Error margins represent the standard error of the mean. ExDBSCAN, ExDBSCAN Random, and DiCE Surrogate achieve perfect validity across all datasets. Model-agnostic baselines (BayCon, GS-Surrogate) show poor validity due to misalignment between their optimisation objectives and DBSCAN's discrete assignment function. DiCE-Direct and GS-Direct perform poorly because they cannot effectively navigate DBSCAN's discrete landscape without continuous probabilities or gradients. Bold indicates best performance, while the second-best is underlined.}
    \label{table:tabular_proximity_non_actionable_results}
    \footnotesize
        \begin{tabular}{lcccccccc}
        \toprule
        Dataset & ExDBSCAN & BayCon Random & BayCon Closest & GS Surrogate & DiCE Surrogate & GS Direct & DiCE Direct & ExDBSCAN Random \\
        \midrule
        autoPrice & 3.1 $\pm$ 0.2 & 3.4 $\pm$ 0.2 & \textbf{2.7 $\pm$ 0.1} & \underline{2.7 $\pm$ 0.5} & 5.0 $\pm$ 0.3 & N/A & N/A & 3.0 $\pm$ 0.2 \\
        baskball & 1.34 $\pm$ 0.08 & 1.92 $\pm$ 0.09 & \underline{1.29 $\pm$ 0.09} & 1.6 $\pm$ 0.2 & 2.0 $\pm$ 0.2 & N/A & 2.1 $\pm$ 0.2 & \textbf{0.9 $\pm$ 0.1} \\
        blood-transfusion & 1.5 $\pm$ 0.2 & 1.8 $\pm$ 0.2 & \textbf{1.2 $\pm$ 0.2} & 1.8 $\pm$ 0.6 & 2.1 $\pm$ 0.7 & N/A & \underline{1.2 $\pm$ 0.1} & 1.7 $\pm$ 0.2 \\
        bodyfat & 2.5 $\pm$ 0.4 & 1.7 $\pm$ 0.2 & \underline{0.9 $\pm$ 0.2} & \textbf{0.8 $\pm$ 0.1} & N/A & N/A & N/A & 2.3 $\pm$ 0.1 \\
        breast-w & \textbf{1.1 $\pm$ 0.1} & 3.0 $\pm$ 0.2 & 2.3 $\pm$ 0.2 & 2.0 $\pm$ 0.2 & 3.4 $\pm$ 0.1 & N/A & 3.7 $\pm$ 0.2 & \underline{1.3 $\pm$ 0.1} \\
        chscase census2 & \underline{1.2 $\pm$ 0.2} & 1.2 $\pm$ 0.1 & \textbf{0.9 $\pm$ 0.1} & 1.3 $\pm$ 0.2 & N/A & N/A & N/A & 1.9 $\pm$ 0.2 \\
        chscase census6 & 1.3 $\pm$ 0.6 & N/A & N/A & 0.74 $\pm$ 0.01 & N/A & N/A & N/A & 1.6 $\pm$ 0.2 \\
        chscase vine1 & 2.0 $\pm$ 0.2 & 2.9 $\pm$ 0.2 & 2.1 $\pm$ 0.3 & \textbf{1.3 $\pm$ 0.3} & N/A & N/A & N/A & \underline{1.8 $\pm$ 0.2} \\
        confidence & \underline{2.29 $\pm$ 0.05} & 3.09 $\pm$ 0.04 & 2.76 $\pm$ 0.07 & N/A & 2.99 $\pm$ 0.05 & 2.7 $\pm$ 0.1 & N/A & \textbf{0.59 $\pm$ 0.09} \\
        diabetes & 2.9 $\pm$ 0.2 & 3.3 $\pm$ 0.2 & \textbf{1.8 $\pm$ 0.2} & \underline{2.9 $\pm$ 0.3} & 4.0 $\pm$ 0.3 & N/A & N/A & 3.7 $\pm$ 0.2 \\
        diabetes numeric & \underline{1.19 $\pm$ 0.10} & 1.5 $\pm$ 0.1 & \textbf{1.0 $\pm$ 0.1} & 1.4 $\pm$ 0.2 & 1.70 $\pm$ 0.10 & N/A & 1.7 $\pm$ 0.1 & 1.3 $\pm$ 0.2 \\
        diggle table a1 & 1.7 $\pm$ 0.1 & 1.9 $\pm$ 0.3 & \underline{1.5 $\pm$ 0.3} & 1.8 $\pm$ 0.2 & 2.1 $\pm$ 0.2 & N/A & 2.3 $\pm$ 0.1 & \textbf{1.3 $\pm$ 0.2} \\
        disclosure x noise & \textbf{0.8 $\pm$ 0.1} & 1.4 $\pm$ 0.1 & \underline{0.8 $\pm$ 0.1} & 1.3 $\pm$ 0.2 & 1.3 $\pm$ 0.1 & N/A & 0.84 $\pm$ 0.06 & 1.25 $\pm$ 0.09 \\
        ecoli & 4.4 $\pm$ 0.2 & 5.91 $\pm$ 0.05 & 5.710 $\pm$ 0.000 & N/A & 6.19 $\pm$ 0.07 & N/A & 6.19 $\pm$ 0.07 & \underline{3.7 $\pm$ 0.9} \\
        glass & 2.8 $\pm$ 0.3 & 3.9 $\pm$ 0.4 & 3.3 $\pm$ 0.5 & \textbf{1.7 $\pm$ 0.6} & 4.9 $\pm$ 0.3 & N/A & 4.3 $\pm$ 0.4 & \underline{2.5 $\pm$ 0.3} \\
        hayes-roth & 1.39 $\pm$ 0.05 & 1.75 $\pm$ 0.06 & \textbf{1.02 $\pm$ 0.06} & \underline{1.27 $\pm$ 0.08} & 2.48 $\pm$ 0.05 & N/A & N/A & 2.0 $\pm$ 0.1 \\
        heart-statlog & \underline{1.8 $\pm$ 0.1} & 2.9 $\pm$ 0.2 & 2.3 $\pm$ 0.3 & 2.6 $\pm$ 0.6 & 4.3 $\pm$ 0.3 & N/A & 4.06 $\pm$ 0.03 & \textbf{1.3 $\pm$ 0.2} \\
        iris & \textbf{1.3 $\pm$ 0.2} & 2.4 $\pm$ 0.1 & 2.1 $\pm$ 0.1 & 2.4 $\pm$ 0.2 & N/A & 2.0 $\pm$ 0.2 & N/A & \underline{1.6 $\pm$ 0.2} \\
        liver-disorders & \underline{1.6 $\pm$ 0.5} & 2.4 $\pm$ 0.6 & \textbf{1.6 $\pm$ 0.7} & N/A & N/A & N/A & N/A & 1.8 $\pm$ 0.3 \\
        longley & \underline{1.0 $\pm$ 0.1} & 1.7 $\pm$ 0.2 & 1.4 $\pm$ 0.2 & \textbf{0.9 $\pm$ 0.2} & 3.1 $\pm$ 0.2 & N/A & 3.4 $\pm$ 0.3 & 1.5 $\pm$ 0.2 \\
        machine cpu & 1.9 $\pm$ 0.2 & 2.3 $\pm$ 0.2 & \underline{1.7 $\pm$ 0.2} & \textbf{1.5 $\pm$ 0.2} & 3.4 $\pm$ 0.3 & N/A & 2.9 $\pm$ 0.5 & 2.7 $\pm$ 0.3 \\
        mu284 & 1.7 $\pm$ 0.1 & 1.62 $\pm$ 0.09 & \underline{1.17 $\pm$ 0.09} & \textbf{0.8 $\pm$ 0.1} & N/A & N/A & N/A & 3.4 $\pm$ 0.4 \\
        no2 & \underline{1.2 $\pm$ 0.2} & 1.8 $\pm$ 0.2 & \textbf{0.9 $\pm$ 0.1} & 1.2 $\pm$ 0.4 & N/A & N/A & N/A & 1.5 $\pm$ 0.1 \\
        pm10 & \textbf{1.47 $\pm$ 0.10} & 2.4 $\pm$ 0.1 & \underline{1.6 $\pm$ 0.1} & 1.8 $\pm$ 0.2 & N/A & N/A & N/A & 2.3 $\pm$ 0.1 \\
        prnn fglass & 3.1 $\pm$ 0.4 & 4.5 $\pm$ 0.5 & 4.0 $\pm$ 0.5 & \underline{2.8 $\pm$ 1.0} & N/A & N/A & N/A & \textbf{2.6 $\pm$ 0.3} \\
        rabe 131 & \underline{1.17 $\pm$ 0.09} & 2.0 $\pm$ 0.1 & 1.4 $\pm$ 0.1 & 1.6 $\pm$ 0.2 & 2.63 $\pm$ 0.10 & N/A & 2.6 $\pm$ 0.1 & \textbf{0.9 $\pm$ 0.1} \\
        sleep & \textbf{1.1 $\pm$ 0.2} & 1.5 $\pm$ 0.2 & \underline{1.3 $\pm$ 0.2} & N/A & N/A & N/A & N/A & 1.9 $\pm$ 0.2 \\
        strikes & \underline{1.71 $\pm$ 0.04} & 2.12 $\pm$ 0.05 & \textbf{1.59 $\pm$ 0.05} & 1.90 $\pm$ 0.06 & 2.64 $\pm$ 0.08 & N/A & N/A & 9.7 $\pm$ 0.2 \\
        vehicle & 2.3 $\pm$ 0.3 & 1.4 $\pm$ 0.3 & \underline{0.6 $\pm$ 0.2} & \textbf{0.20 $\pm$ 0.08} & N/A & N/A & N/A & 4.9 $\pm$ 0.5 \\
        wine & 1.7 $\pm$ 0.2 & 2.6 $\pm$ 0.3 & 1.9 $\pm$ 0.3 & \underline{1.4 $\pm$ 0.3} & N/A & N/A & N/A & \textbf{1.4 $\pm$ 0.1} \\
        \bottomrule
        \end{tabular}
        \end{table*}

        \begin{table*}[tbp]
            \centering
            \caption{Values represent the determinant of a kernel matrix based on pairwise distances between counterfactuals in each set, where higher values indicate greater diversity (more distinct, non-redundant alternatives). Error margins represent the standard error of the mean. ExDBSCAN achieves substantially higher diversity than all baselines across nearly all datasets (bold indicates best, underline indicates second-best), with typical improvements of $5-20 \times$ over model-agnostic methods. This superior performance stems from our electrostatic-like repulsion mechanism that spreads selected core points throughout the cluster's density-connectivity structure (Equation~\ref{eq:energy}). BayCon shows extremely poor diversity (often $<0.1$) despite generating many counterfactuals, because its optimisation does not account for DBSCAN's density-connected distance. GS-Surrogate occasionally shows competitive diversity when successful, but its low validity ($<50\%$) limits practical utility. ExDBSCAN Random performs well on some datasets, validating our repulsion-based approach, but sacrifices proximity. The results demonstrate ExDBSCAN's unique ability to balance both proximity and diversity through physics-inspired energy minimization. The best-performing method is highlighted in bold while the second-best is underlined.}
            \label{table:diversity_tabular_non_actionability_results}
            \footnotesize

    \begin{tabular}{lcccccccc}
    \toprule
    Dataset & ExDBSCAN & BayCon Random & BayCon Closest & GS Surrogate & DiCE Surrogate & GS Direct & DiCE Direct & ExDBSCAN Random \\
    \midrule
    autoPrice & \textbf{7.2 $\pm$ 0.2} & 0.58 $\pm$ 0.02 & 0.57 $\pm$ 0.01 & 0.95 $\pm$ 0.03 & \underline{1.000 $\pm$ 0.000} & N/A & N/A & 0.42 $\pm$ 0.08 \\
    baskball & \textbf{1.63 $\pm$ 0.05} & 0.031 $\pm$ 0.002 & 0.033 $\pm$ 0.004 & 0.27 $\pm$ 0.08 & 0.84 $\pm$ 0.06 & N/A & \underline{0.89 $\pm$ 0.10} & 0.046 $\pm$ 0.009 \\
    blood-transfusion & \textbf{1.3 $\pm$ 0.1} & 0.0057 $\pm$ 0.0009 & 0.0036 $\pm$ 0.0006 & 0.4 $\pm$ 0.2 & \underline{0.6 $\pm$ 0.2} & N/A & 0.5 $\pm$ 0.2 & 0.004 $\pm$ 0.001 \\
    bodyfat & \textbf{4.7 $\pm$ 0.2} & 0.33 $\pm$ 0.01 & 0.30 $\pm$ 0.02 & \underline{0.98 $\pm$ 0.02} & N/A & N/A & N/A & 0.30 $\pm$ 0.01 \\
    breast-w & \textbf{4.8 $\pm$ 0.2} & 0.30 $\pm$ 0.01 & 0.27 $\pm$ 0.02 & \underline{0.65 $\pm$ 0.07} & 0.128 $\pm$ 0.009 & N/A & 0.13 $\pm$ 0.01 & 0.28 $\pm$ 0.01 \\
    chscase census2 & \textbf{2.2 $\pm$ 0.1} & 0.081 $\pm$ 0.006 & 0.071 $\pm$ 0.008 & \underline{0.35 $\pm$ 0.07} & N/A & N/A & N/A & 0.079 $\pm$ 0.008 \\
    chscase census6 & \textbf{0.84 $\pm$ 0.04} & N/A & N/A & 0.3 $\pm$ 0.1 & N/A & N/A & N/A & N/A \\
    chscase vine1 & \textbf{5.3 $\pm$ 0.2} & 0.38 $\pm$ 0.02 & 0.36 $\pm$ 0.02 & \underline{0.8 $\pm$ 0.1} & N/A & N/A & N/A & 0.28 $\pm$ 0.02 \\
    confidence & \textbf{1.44 $\pm$ 0.05} & 0.023 $\pm$ 0.005 & \underline{0.035 $\pm$ 0.003} & N/A & 0.000008 $\pm$ 0.000003 & 0.026 $\pm$ 0.003 & N/A & N/A \\
    diabetes & \textbf{5.1 $\pm$ 0.1} & 0.27 $\pm$ 0.02 & 0.26 $\pm$ 0.02 & 0.74 $\pm$ 0.05 & \underline{1.000 $\pm$ 0.000} & N/A & N/A & 0.15 $\pm$ 0.02 \\
    diabetes numeric & \textbf{1.17 $\pm$ 0.08} & 0.0004 $\pm$ 0.0001 & 0.0025 $\pm$ 0.0006 & \underline{0.03 $\pm$ 0.02} & 0.0026 $\pm$ 0.0004 & N/A & 0.00006 $\pm$ 0.00002 & 0.006 $\pm$ 0.001 \\
    diggle table a1 & \textbf{1.60 $\pm$ 0.06} & 0.2 $\pm$ 0.2 & 0.2 $\pm$ 0.2 & 0.4 $\pm$ 0.1 & 0.93 $\pm$ 0.07 & N/A & \underline{1.000 $\pm$ 0.000} & N/A \\
    disclosure x noise & \textbf{0.97 $\pm$ 0.07} & 0.0033 $\pm$ 0.0006 & 0.0007 $\pm$ 0.0002 & \underline{0.12 $\pm$ 0.06} & 0.00014 $\pm$ 0.00005 & N/A & 0.00026 $\pm$ 0.00008 & 0.00021 $\pm$ 0.00005 \\
    ecoli & \textbf{6.6 $\pm$ 0.2} & 0.08 $\pm$ 0.02 & 0.019 $\pm$ 0.010 & N/A & 0.7 $\pm$ 0.1 & N/A & 0.7 $\pm$ 0.1 & N/A \\
    glass & \textbf{4.8 $\pm$ 0.2} & 0.38 $\pm$ 0.03 & 0.37 $\pm$ 0.03 & \underline{0.7 $\pm$ 0.1} & 0.056 $\pm$ 0.009 & N/A & 0.10 $\pm$ 0.01 & 0.18 $\pm$ 0.03 \\
    hayes-roth & \textbf{3.6 $\pm$ 0.1} & 0.084 $\pm$ 0.005 & 0.069 $\pm$ 0.005 & \underline{0.31 $\pm$ 0.04} & 0.058 $\pm$ 0.002 & N/A & N/A & 0.062 $\pm$ 0.006 \\
    heart-statlog & \textbf{7.0 $\pm$ 0.3} & 0.49 $\pm$ 0.03 & 0.49 $\pm$ 0.03 & 0.85 $\pm$ 0.09 & \underline{0.993 $\pm$ 0.007} & N/A & 0.99 $\pm$ 0.01 & 0.45 $\pm$ 0.02 \\
    iris & \textbf{2.92 $\pm$ 0.05} & 0.21 $\pm$ 0.06 & 0.19 $\pm$ 0.05 & \underline{0.46 $\pm$ 0.08} & N/A & 0.21 $\pm$ 0.03 & N/A & 0.022 $\pm$ 0.007 \\
    liver-disorders & \textbf{1.36 $\pm$ 0.08} & \underline{0.024 $\pm$ 0.005} & 0.021 $\pm$ 0.005 & N/A & N/A & N/A & N/A & 0.009 $\pm$ 0.003 \\
    longley & \textbf{2.1 $\pm$ 0.2} & 0.07 $\pm$ 0.03 & 0.08 $\pm$ 0.03 & \underline{0.46 $\pm$ 0.03} & 0.307 $\pm$ 0.009 & N/A & 0.352 $\pm$ 0.000 & N/A \\
    machine cpu & \textbf{1.08 $\pm$ 0.07} & 0.017 $\pm$ 0.002 & 0.015 $\pm$ 0.003 & \underline{0.33 $\pm$ 0.08} & 0.0009 $\pm$ 0.0003 & N/A & 0.0016 $\pm$ 0.0005 & 0.031 $\pm$ 0.002 \\
    mu284 & \textbf{2.15 $\pm$ 0.07} & 0.10 $\pm$ 0.03 & 0.10 $\pm$ 0.03 & \underline{0.7 $\pm$ 0.1} & N/A & N/A & N/A & 0.038 $\pm$ 0.004 \\
    no2 & \textbf{3.3 $\pm$ 0.3} & 0.17 $\pm$ 0.01 & 0.16 $\pm$ 0.02 & \underline{0.6 $\pm$ 0.1} & N/A & N/A & N/A & 0.17 $\pm$ 0.02 \\
    pm10 & \textbf{3.7 $\pm$ 0.1} & 0.151 $\pm$ 0.009 & 0.138 $\pm$ 0.008 & \underline{0.55 $\pm$ 0.07} & N/A & N/A & N/A & 0.156 $\pm$ 0.010 \\
    prnn fglass & \textbf{4.7 $\pm$ 0.2} & 0.44 $\pm$ 0.04 & 0.42 $\pm$ 0.04 & \underline{0.83 $\pm$ 0.10} & N/A & N/A & N/A & 0.25 $\pm$ 0.02 \\
    rabe 131 & \textbf{2.50 $\pm$ 0.08} & 0.134 $\pm$ 0.008 & 0.13 $\pm$ 0.01 & \underline{0.47 $\pm$ 0.07} & 0.0279 $\pm$ 0.0010 & N/A & 0.034 $\pm$ 0.000 & 0.21 $\pm$ 0.02 \\
    sleep & \textbf{0.90 $\pm$ 0.07} & 0.04 $\pm$ 0.04 & 0.04 $\pm$ 0.04 & N/A & N/A & N/A & N/A & N/A \\
    strikes & \textbf{2.30 $\pm$ 0.02} & 0.13 $\pm$ 0.01 & 0.12 $\pm$ 0.01 & 0.58 $\pm$ 0.02 & \underline{0.96 $\pm$ 0.01} & N/A & N/A & 0.062 $\pm$ 0.007 \\
    vehicle & \textbf{4.7 $\pm$ 0.3} & 0.26 $\pm$ 0.04 & 0.21 $\pm$ 0.05 & \underline{0.44 $\pm$ 0.05} & N/A & N/A & N/A & 0.27 $\pm$ 0.04 \\
    wine & \textbf{7.6 $\pm$ 0.4} & 0.50 $\pm$ 0.01 & 0.51 $\pm$ 0.01 & \underline{0.71 $\pm$ 0.06} & N/A & N/A & N/A & 0.43 $\pm$ 0.02 \\
    \bottomrule
    \end{tabular}
    \end{table*}

    \begin{table*}[tbp]
        \centering
        \caption{Validity scores across $30$ datasets when a subset of the features is non-actionable. Values represent the proportion of queries ($0.0-1.0$) for which each method successfully generated valid counterfactuals that satisfy DBSCAN's density-connectivity criteria. Higher values indicate better performance. Error margins represent the standard error of the mean. ExDBSCAN, ExDBSCAN Random, and DiCE Surrogate achieve perfect validity ($1.00$) across all datasets. Model-agnostic baselines (BayCon, GS-Surrogate) show poor validity (<0.50 on average) due to misalignment between their optimisation objectives and DBSCAN's discrete assignment function. DiCE-Direct and GS-Direct perform poorly because they cannot effectively navigate DBSCAN's discrete landscape without continuous probabilities or gradients. Bold indicates best performance and the second-best is underlined.}
        \label{table:tabular_non_actionability_validity_results}
        \footnotesize
        \begin{tabular}{lcccccccc}
        \toprule
        Dataset & ExDBSCAN & BayCon Random & BayCon Closest & GS Surrogate & DiCE Surrogate & GS Direct & DiCE Direct & ExDBSCAN Random \\
        \midrule
        autoPrice & \textbf{1.00} & 0.46 & 0.46 & 0.18 & 0.03 & 0.00 & 0.00 & \textbf{1.00} \\
        baskball & \textbf{1.00} & 0.78 & 0.78 & 0.33 & 0.45 & 0.00 & 0.25 & \textbf{1.00} \\
        blood-transfusion & \textbf{1.00} & 0.55 & 0.55 & 0.12 & 0.20 & 0.00 & 0.12 & \textbf{1.00} \\
        bodyfat & \textbf{1.00} & 0.36 & 0.36 & 0.08 & 0.00 & 0.00 & 0.00 & \textbf{1.00} \\
        breast-w & \textbf{1.00} & 0.93 & 0.93 & 0.45 & \textbf{1.00} & 0.00 & 0.50 & 1.00 \\
        chscase census2 & \textbf{1.00} & 0.29 & 0.29 & 0.39 & 0.00 & 0.00 & 0.00 & \textbf{1.00} \\
        chscase census6 & \textbf{1.00} & 0.04 & 0.04 & 0.12 & 0.00 & 0.00 & 0.00 & \textbf{1.00} \\
        chscase vine1 & \textbf{1.00} & 0.47 & 0.47 & 0.12 & 0.00 & 0.00 & 0.00 & \textbf{1.00} \\
        confidence & \textbf{1.00} & 0.35 & 0.35 & 0.00 & \textbf{1.00} & 1.00 & 0.00 & \textbf{1.00} \\
        diabetes & \textbf{1.00} & 0.42 & 0.42 & 0.40 & 0.05 & 0.00 & 0.00 & \textbf{1.00} \\
        diabetes numeric & \textbf{1.00} & 0.72 & 0.72 & 0.61 & \textbf{1.00} & 0.00 & 0.44 & 1.00 \\
        diggle table a1 & \textbf{1.00} & 0.18 & 0.18 & 0.43 & 0.18 & 0.00 & 0.11 & \textbf{1.00} \\
        disclosure x noise & \textbf{1.00} & 0.70 & 0.70 & 0.55 & \textbf{1.00} & 0.00 & 0.50 & 1.00 \\
        ecoli & \textbf{1.00} & 0.17 & 0.17 & 0.05 & 0.33 & 0.00 & 0.33 & \textbf{1.00} \\
        glass & \textbf{1.00} & 0.57 & 0.57 & 0.12 & \textbf{1.00} & 0.00 & 0.50 & \textbf{1.00} \\
        hayes-roth & \textbf{1.00} & 0.81 & 0.81 & 0.73 & \textbf{1.00} & 0.00 & 0.00 & \textbf{1.00} \\
        heart-statlog & \textbf{1.00} & 0.53 & 0.53 & 0.23 & 0.10 & 0.00 & 0.05 & \textbf{1.00} \\
        iris & \textbf{1.00} & 0.80 & 0.80 & 0.50 & 0.00 & 0.75 & 0.00 & \textbf{1.00} \\
        liver-disorders & \textbf{1.00} & 0.70 & 0.70 & N/A & N/A & N/A & N/A & \textbf{1.00} \\
        longley & \textbf{1.00} & 0.16 & 0.16 & 0.08 & \textbf{1.00} & 0.00 & 0.52 & \textbf{1.00} \\
        machine cpu & \textbf{1.00} & 0.65 & 0.65 & 0.50 & \textbf{1.00} & 0.00 & 0.50 & \textbf{1.00} \\
        mu284 & \textbf{1.00} & 0.48 & 0.48 & 0.09 & 0.00 & 0.00 & 0.00 & \textbf{1.00} \\
        no2 & \textbf{1.00} & 0.59 & 0.59 & 0.21 & 0.00 & 0.00 & 0.00 & \textbf{1.00} \\
        pm10 & \textbf{1.00} & 0.61 & 0.61 & 0.27 & 0.00 & 0.00 & 0.00 & \textbf{1.00} \\
        prnn fglass & \textbf{1.00} & 0.68 & 0.68 & 0.12 & 0.00 & 0.00 & 0.00 & \textbf{1.00} \\
        rabe 131 & \textbf{1.00} & 0.85 & 0.85 & 0.70 & \textbf{1.00} & 0.00 & 0.50 & 1.00 \\
        sleep & \textbf{1.00} & 0.19 & 0.19 & 0.04 & 0.00 & 0.00 & 0.00 & \textbf{1.00} \\
        strikes & \textbf{1.00} & 0.23 & 0.23 & 0.17 & 0.06 & 0.00 & 0.00 & \textbf{1.00} \\
        vehicle & \textbf{1.00} & 0.12 & 0.12 & 0.05 & 0.00 & 0.00 & 0.00 & \textbf{1.00} \\
        wine & \textbf{1.00} & 0.55 & 0.55 & 0.25 & 0.00 & 0.00 & 0.00 & \textbf{1.00} \\
        \bottomrule
        \end{tabular}
\end{table*}

\subsection{Euclidean distance's limitations}\label{app:euclideanLimitation}
    
\begin{figure*}[tbp]
\centering
\includegraphics[width=0.7\linewidth]{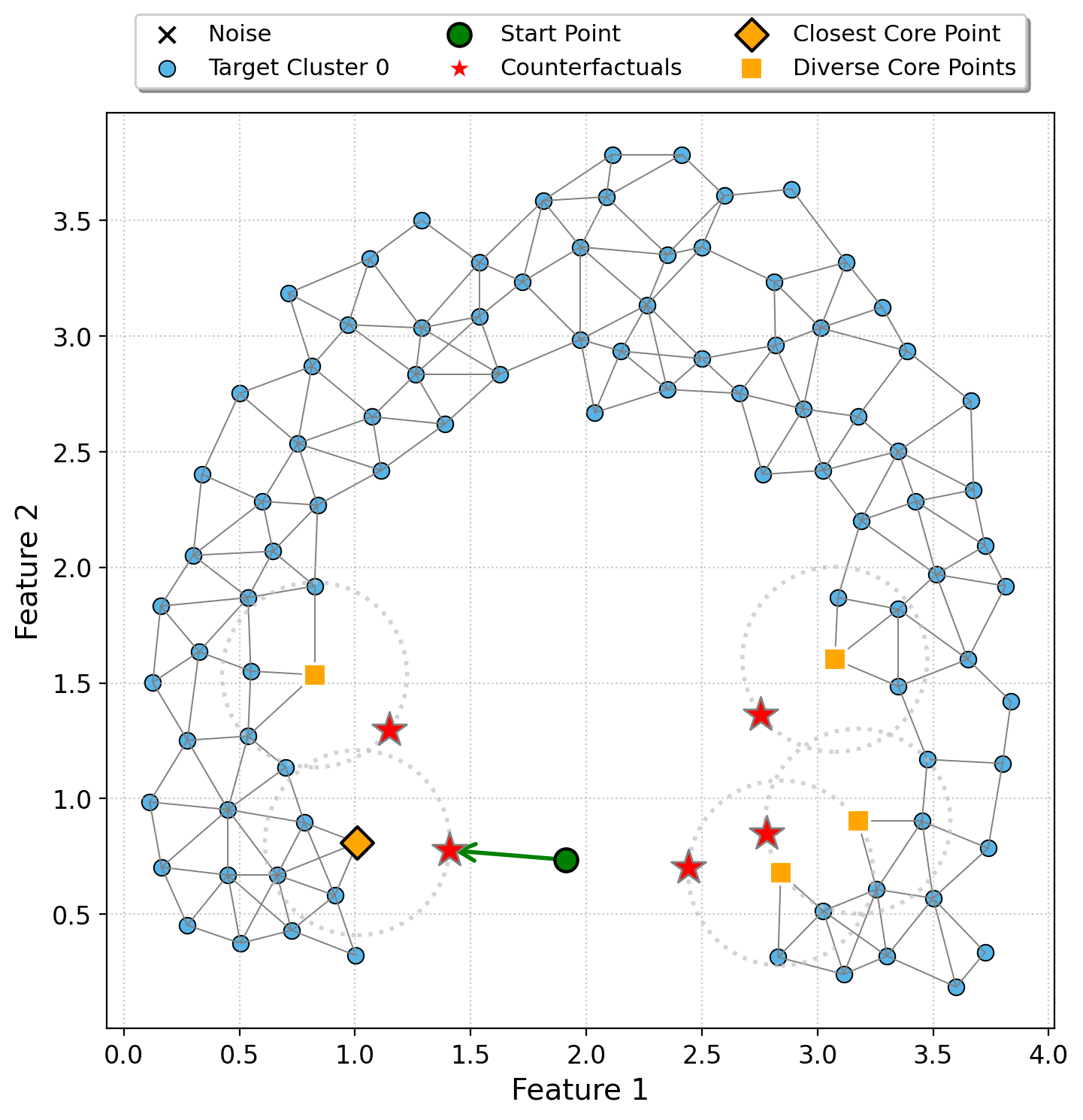}
\caption{Counterfactuals (red stars) generated for a noise point (green circle) targeting a half-circle cluster. Orange squares indicate the reference core points for each counterfactual. Points at opposite ends of the half-circle arc (e.g., leftmost and rightmost counterfactuals) appear close in Euclidean space but are far apart in terms of DBSCAN's density-connected path distance and traversing from one end to the other requires many intermediate density-connected steps along the arc. This illustrates why ExDBSCAN uses weighted shortest-path distance in the cluster graph $G$ rather than Euclidean distance for measuring diversity: counterfactuals that appear redundant under Euclidean distance actually represent distinct alternatives when considering the cluster's true density-connectivity structure. Using Euclidean distance for diversity measurement would incorrectly penalise these counterfactuals as redundant, failing to recognise that they explore different density-connected regions of the target cluster.}
\Description{Counterfactuals (red stars) generated for a noise point (green circle) targeting a half-circle cluster. Orange squares indicate the reference core points for each counterfactual. Points at opposite ends of the half-circle arc (e.g., leftmost and rightmost counterfactuals) appear close in Euclidean space but are far apart in terms of DBSCAN's density-connected path distance and traversing from one end to the other requires many intermediate density-connected steps along the arc. This illustrates why ExDBSCAN uses weighted shortest-path distance in the cluster graph $G$ rather than Euclidean distance for measuring diversity: counterfactuals that appear redundant under Euclidean distance actually represent distinct alternatives when considering the cluster's true density-connectivity structure. Using Euclidean distance for diversity measurement would incorrectly penalise these counterfactuals as redundant, failing to recognise that they explore different density-connected regions of the target cluster.}
\label{fig:half_circle}
\end{figure*}

Instances that appear redundant based on Euclidean distance, might add valuable diversity when considering the cluster's structure. Using our shortest path weighted distance properly takes into account the clusters' structure.
We exemplify this in Figure~\ref{fig:half_circle}, where counterfactuals are proposed for a noise point toward a half-circle cluster. Points near the opposite ends of the arc appear close in the Euclidean space, but under DBSCAN they are density distant; a critical distinction when evaluating diversity for CEs.

            %      \caption{
            %  %Counterfactuals (red stars) generated for a noise point (green circle) targeting a half-circle cluster. Orange squares indicate the reference core points for each counterfactual. 
            %  %Points at opposite ends of the half-circle arc (e.g., leftmost and rightmost counterfactuals) appear close in Euclidean space but are far apart in terms of DBSCAN's density-connected path distance and traversing from one end to the other requires many intermediate density-connected steps along the arc. This illustrates why ExDBSCAN uses weighted shortest-path distance in the cluster graph $G$ rather than Euclidean distance for measuring diversity: counterfactuals that appear redundant under Euclidean distance actually represent distinct alternatives when considering the cluster's true density-connectivity structure. Using Euclidean distance for diversity measurement would incorrectly penalize these counterfactuals as redundant, failing to recognize that they explore different density-connected regions of the target cluster.}
            % }

 \subsection{NP-hardness}~\label{subsec:npHardness}
     Proposition~\ref{prop:npHardness}, states that the optimisation problem in Equation~\ref{eq:optimisationProblem} is \textit{NP-hard}, we present here a more extended proof.
 
     \begin{proof}
         Consider the energy landscape in Equation~\ref{eq:energy}, the electrostatic term reads (in contracted notation) 
         $\sum\limits_{i, j >i} \frac{1}{\mathcal{D}(V_i, V_j)}$
        , which needs to be minimised to find the optimal energy configuration.
        Minimising the sum of the energy terms is equivalent to maximise their negative. Thus, the problem can be mapped to a maximum diversity problem where the object summed is the inverse distance, MDP optimisation is known to be NP-hard~\citep{parreno2021measuring}. Consider now adding the spring terms, considering Equation~\ref{eq:energy} in full. We can again map the minimisation problem to a maximisation one by taking the negative energy. If this new optimisation problem would be solvable in polynomial time, we could move the spring constants (which we consider equal to $1$) to $0$. In this limit, the MDP could be also solved in polynomial time, contradicting its NP-hardness.
     \end{proof}

\subsection{Ablation: Isolating Attraction and Repulsion} 
    Our objective (Eq.~\ref{eq:energy}) combines a spring-like \emph{attraction} term which biases selection toward proximal core points, and an electrostatic-like \emph{repulsion} term, spreading the selected cores for diversity. To isolate the contribution of each term, we evaluate two ablations that share ExDBSCAN's optimisation and construction steps, but retain only one term:
    
    \begin{itemize}
        \item \textbf{ExDBSCAN Nearest} keeps only the spring-like attraction term, dropping repulsion from Eq.~\ref{eq:energy}; the optimisation thus targets proximity alone. By Prop.~\ref{prop:knn}, this reduces to selecting the $k$ nearest core points of $p$.
    
        \item \textbf{ExDBSCAN Furthest} keeps only the repulsion term, dropping attraction. Selection then maximises spread across the target cluster, pursuing diversity while disregarding proximity.
    \end{itemize}

    Together with \emph{ExDBSCAN Random} (random core selection, no optimisation; Sec.~\ref{subsec:cfexp}) and full \emph{ExDBSCAN} (both terms), these variants give a term-by-term decomposition of the objective. As all three reuse ExDBSCAN's validity-guaranteeing construction (Eq.~\ref{eq:cfgen}, Theo.~\ref{theo:1}), each attains perfect validity; the comparison thus isolates how proximity and diversity responds to each energy term in isolation. 

    \paragraph{Proximity (Fig.~\ref{fig:proximity_new_baselines}).} As expected, ExDBSCAN Nearest yields the most proximal counterfactuals and consistently improves over ExDBSCAN Furthest. Notably, full ExDBSCAN tracks ExDBSCAN Nearest closely across the entire range of queries, showing that weighting diversity costs little proximity as ExDBSCAN remains highly competitive on this metric. 

    \paragraph{Diversity (Fig.~\ref{fig:proximity_new_baselines}).} ExDBSCAN Furthest is more diverse than ExDBSCAN Nearest throughout, confirming that the repulsion term drives diversity. More interestingly, full ExDBSCAN attains the highest diversity across the majority of queries, although the two single-term variants are marginally higher on the most diverse $20\%$ of queries, ExDBSCAN overtakes both as more queries are considered and remains highest thereafter. We attribute this to ExDBSCAN's anchored initialisation. The attraction term seeds the greedy selection with the core point nearest to $p$ (Sec.~\ref{sec:method}), a stable, well-positioned starting point from which repulsion spreads the remaining cores. Pure repulsion lacks such an anchor and is more variable, excelling on a few queries but spreading less consistently across the rest.

    Overall, the ablation shows the two terms are complementary rather than a mere compromise. Attraction recovers near-optimal proximity (matching ExDBSCAN Nearest), while the combined objective achieves the best aggregate diversity, surpassing even the repulsion-only variant. 

    \begin{figure*}[tbp]
        \centering
        \includegraphics[width=.98\linewidth]{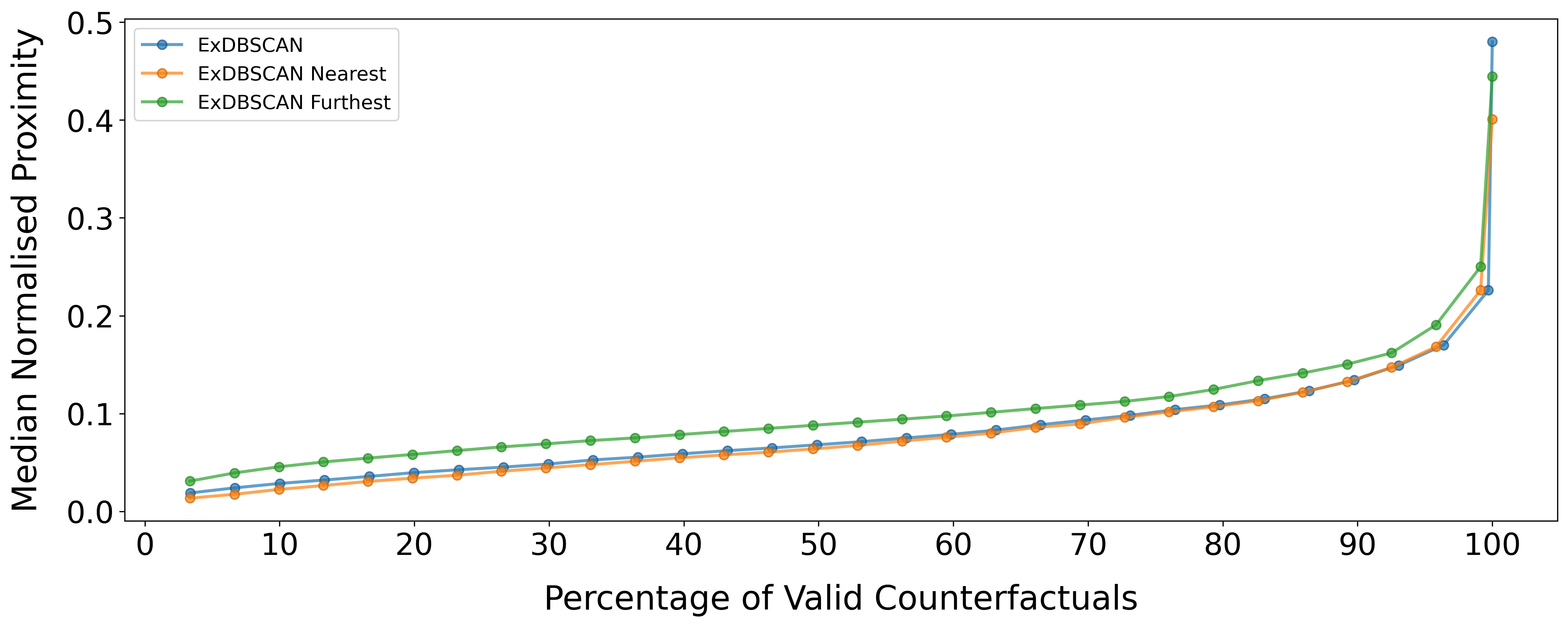}
        \caption{Proximity ablation. Cumulative percentage of successful queries (x-axis) vs.\ diversity (y-axis, $\downarrow$ is better). ExDBSCAN and ExDBSCAN Nearest achieve near-identical proximity, both improving over ExDBSCAN Furthest.}
        \Description{Empirical percentile curve showing the percentage of valid counterfactuals in relation to median normalised proximity.}
        \label{fig:proximity_new_baselines}
    \end{figure*}

    \begin{figure*}[tbp]
        \centering
        \includegraphics[width=.98\linewidth]{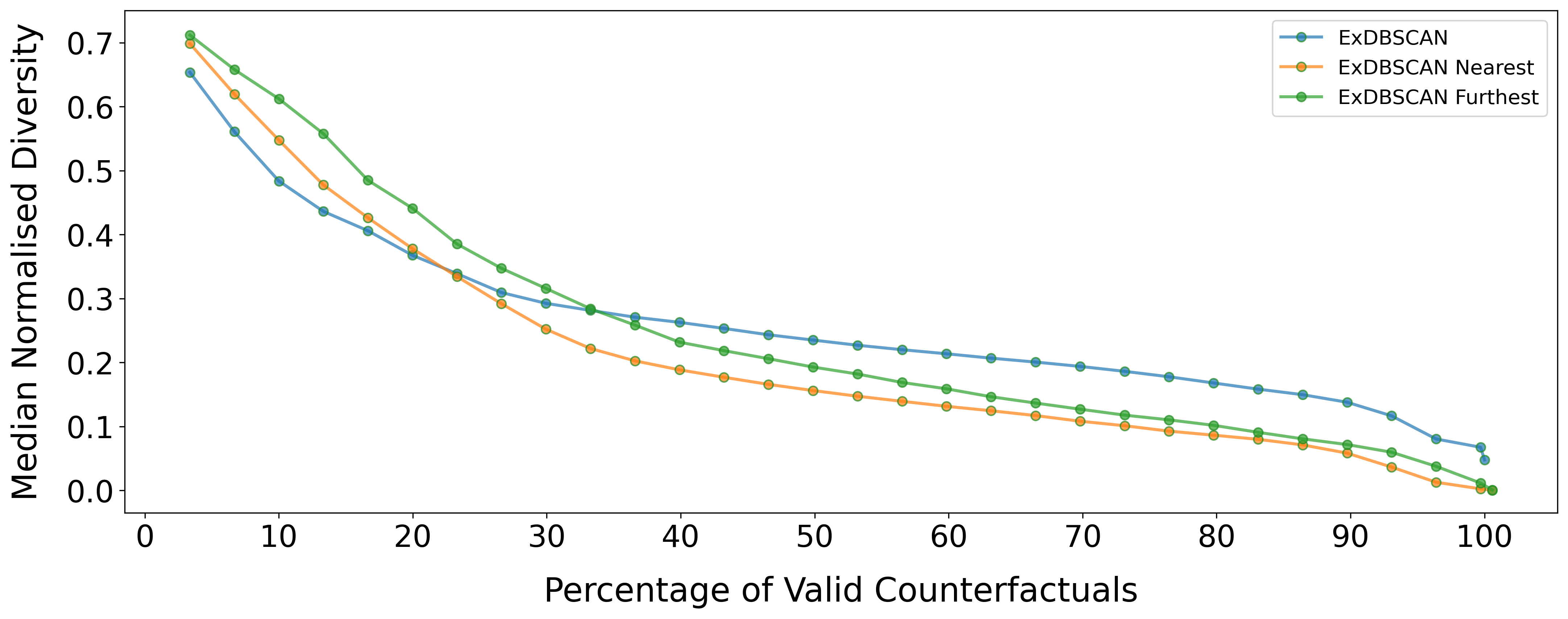}
        \caption{Diversity ablation. Cumulative percentage of successful queries (x-axis) vs.\ diversity (y-axis, $\uparrow$ is better). ExDBSCAN Furthest is more diverse than ExDBSCAN Nearest. On the most diverse queries both single-term variants slightly exceed ExDBSCAN, but across the majority of queries ExDBSCAN attains the highest diversity.}
        \Description{Empirical percentile curve showing the percentage of valid counterfactuals in relation to median normalised diversity.}
        \label{fig:diversity_new_baselines}
    \end{figure*}

\subsection{A Study On The California Housing Dataset}

In this section, we will first discuss how ExDBSCAN can be simply further expanded by implementing feature constraints like monotonicity and positive correlation. Afterwards, we will implement these feature constraints on the California Housing Dataset~\cite{tensorflow2015CaliforniaHousing, pace1997sparse}. 

We show that both constraints can be implemented by a straightforward filtering of set of core points that can be selected by the energy minimisation optimisation (Eq.~\ref{eq:optimisationProblem}).
\begin{itemize}
    \item \textbf{Monotonicity}; when a feature can only increase or only decrease e.g. Age, we filter out every core point whose entire $\varepsilon$ neighbourhood is outside the valid region from the set of core points that are inspected by the greedy algorithm solving Eq.~\ref{eq:optimisationProblem}. We consider the valid region the points in the data space where the feature of interest increases (or respectively decreases) with respect to its starting point value. When there is a core point whose $\varepsilon$ neighbourhood is partially inside the valid region, the core point can be selected from the greedy algorithm with the starting point that can be moved only inside the valid region of the $\varepsilon$ neighbourhood. 

    \item \textbf{Positive/Negative Correlation}; when two or more features are positively or negatively correlated, we adopt a filtering strategy analogous the above monotonicity case. When two features are for example positively correlated we consider the valid region the portion of the data space where both attributes either both increase or both decrease.
\end{itemize}

Next, we employ the California Housing Dataset as a use case to practically shoe the effect of applying the above mentioned constraints.

California Housing is a dataset derived from the $1990$ U.S. census reporting the following per-block features.

\begin{itemize}
    \item \textit{Latitude}: east-west position of the block.

    \item \textit{Longitude}: north-south position of the block.

    \item \textit{Housing Median Age}: the median age of houses within a block.

    \item \textit{Rooms}: number of total rooms in a block.

    \item \textit{Bedrooms}: number of total bedrooms in a block. 

    \item \textit{Population}: the total number of people living in the block.

    \item \textit{Households}: number of total households in a block.

    \item \textit{Median Income}: median income of households in tens of thousands of dollars.
    
\end{itemize}

We fit DBSCAN on the California Housing Dataset and we impose two constraints. First of all, $Housing Median Age$ can increase at will but it's difficult to decrease it, thus we impose that the median house age can decrease of maximum $2$ years, allowing wiggle room in case new buildings are built in the block. Furthermore, it is unrealistic that one of \textit{Rooms} and \textit{Bedrooms} increases with the other decreasing. Hence, we impose \textit{Rooms} and \textit{Bedrooms} to be positively correlated.

We run ExDBSCAN with and without constraints to better highlight the constraints' effect on the proposed counterfactuals. We analyse two data point, both labelled as noise by ExDBSCAN. 

The first noise point $[-122.1, 37.8, 28, 794, 111, 329, 109, 7.7]$, has as the first counterfactual toward cluster $1$ from the unconstrained ExDBSCAN the point $[-122.1,37.8,25.7,542,69,212,66,8.1]$. We notice how, while the positive correlation constraint is respected as both decrease, the constraint on the \textit{Age} feature is not as it drops by more than $2$ years. The constrained ExDBSCAN returns instead as the first counterfactual $[-122.1,37.7,28,2193,342,977,350,5.5]$ where we can see the \textit{Age} feature is brought back to the original value.

Similarly, the unconstrained ExDBSCAN produces as its second counterfactual for the noise point $[-122.3,37.9,52,1087,370,3337,350,1.4]$ toward cluster $2$, the point $[-122.2,37.8,31,2137,330,1484,330,5.7]$ violating the positive correlation constraint as the total number of rooms increase while he total number of bedrooms decrease. Constraint ExDBSCAN proposes instead as the fourth counterfactual the point $[-122.1,37.7,17,2779,371,351,318,8.4]$ with all the constraints that are now respected.

\subsection{Sparsity and Plausibility}

We present here sparsity and plausibility results for ExDBSCAN, BayCon Random and BayCon Closest. We represent sparsity as the proportion of the feature that change from the original point to the counterfactual. Thus, the lower the sparsity value reported the better. In line with counterfactual literature~\cite{zhou2025eace, kanamori2020dace}, we measure sparsity through the LOF score. LOF~\cite{Breunig00LOF} is a method scoring the outlierness of points based on density consideration: if a point is in a dense area its outlier score will be low and vice-versa. An LOF value of $1$ signals a point that is in-distribution and thus has high plausibility. As the LOF score increases, the point increasingly looks more like an outlier and its plausibility decreases.

Table~\ref{table:sparisty_plausibility} reports sparsity and plausibility for ExDBSCAN, BayCon Random and BayCon closest. The two BayCon-based baselines achieve  better sparsity. BayCon, in its algorithmic formulation optimises explicitly for sparsity, while ExDBSCAN does not, focusing on proximity and validity.

As far as plausibility, all three methods have very often an LOF score below $1.5$, which is usually taken as the threshold to decide whether a point is an outlier or not, signalling good plausibility. BayCon achieves better plausibility on a higher number of datasets. Analysing plausibility, it is important to take into consideration proximity as well. As shown in Figure~\ref{fig:proximity_validity_actionable}, ExDBSCAN outermors BayCon in terms of proximity. Thus, proposed counterfactuals will be more on the outside edge of target clusters. ExDBSCAN LOF scores, i.e.\ often below $1.5$, signal that while ExDBSCAN produces highly proximal counterfactuals, it does not sacrifice their plausibility.

\begin{table*}[tbp]
        \centering
        \caption{Plausibility and Sparsity scores across $30$ datasets. Values for sparsity represent the average proportion of features ($0-1$ range) that change from the original point of the counterfactual. Lower values indicate better sparsity. Values for plausibility represent the average LOF score of the proposed counterfactuals, values close to $1$ indicate that the sample has similar density to that of its neighbours, making it an inlier. The higher the LOF the higher the outlierness of the counterfactuals. For both metrics error margins represent the standard error of the mean. The best value for both plausibility and sparsity is highlighted in bold while the second-best value is underlined.}
        \label{table:sparisty_plausibility}
        \footnotesize
\begin{tabular}{lcccccc}
\toprule
Dataset & \multicolumn{2}{c}{ExDBSCAN} & \multicolumn{2}{c}{BayCon Random} & \multicolumn{2}{c}{BayCon Closest} \\
\cmidrule(lr){2-3}\cmidrule(lr){4-5}\cmidrule(lr){6-7}
 & Sparsity & Plausibility & Sparsity & Plausibility & Sparsity & Plausibility \\
\midrule
autoPrice & 0.950 $\pm$ 0.006 & \textbf{2.0 $\pm$ 0.2} & \underline{0.23 $\pm$ 0.02} & 2.2 $\pm$ 0.3 & \textbf{0.19 $\pm$ 0.02} & \underline{2.0 $\pm$ 0.2} \\
baskball & 0.960 $\pm$ 0.006 & 1.030 $\pm$ 0.003 & \underline{0.55 $\pm$ 0.01} & \underline{0.988 $\pm$ 0.002} & \textbf{0.53 $\pm$ 0.02} & \textbf{0.991 $\pm$ 0.002} \\
blood-transfusion & 0.946 $\pm$ 0.009 & 1.17 $\pm$ 0.02 & \underline{0.49 $\pm$ 0.02} & \textbf{1.12 $\pm$ 0.01} & \textbf{0.46 $\pm$ 0.02} & \underline{1.16 $\pm$ 0.03} \\
bodyfat & 0.985 $\pm$ 0.002 & \textbf{1.090 $\pm$ 0.010} & \textbf{0.172 $\pm$ 0.008} & 1.14 $\pm$ 0.02 & \underline{0.20 $\pm$ 0.02} & \underline{1.14 $\pm$ 0.02} \\
breast-w & 0.81 $\pm$ 0.01 & 1.43 $\pm$ 0.05 & \underline{0.43 $\pm$ 0.03} & \underline{1.36 $\pm$ 0.06} & \textbf{0.39 $\pm$ 0.03} & \textbf{1.33 $\pm$ 0.05} \\
chscase census2 & 1.000 $\pm$ 0.000 & 1.048 $\pm$ 0.006 & \underline{0.38 $\pm$ 0.02} & \underline{1.03 $\pm$ 0.01} & \textbf{0.36 $\pm$ 0.02} & \textbf{1.03 $\pm$ 0.01} \\
chscase census6 & \textbf{1.000 $\pm$ 0.000} & \textbf{1.01 $\pm$ 0.01} & -- & -- & -- & -- \\
chscase vine1 & 0.950 $\pm$ 0.005 & 1.046 $\pm$ 0.006 & \underline{0.48 $\pm$ 0.04} & \textbf{1.013 $\pm$ 0.005} & \textbf{0.46 $\pm$ 0.04} & \underline{1.019 $\pm$ 0.006} \\
confidence & 1.000 $\pm$ 0.000 & 5.5 $\pm$ 0.8 & \textbf{0.667 $\pm$ 0.000} & \textbf{2.2 $\pm$ 0.4} & \underline{0.667 $\pm$ 0.000} & \underline{3.0 $\pm$ 0.4} \\
diabetes & 0.81 $\pm$ 0.01 & \underline{1.39 $\pm$ 0.03} & \underline{0.30 $\pm$ 0.01} & \textbf{1.26 $\pm$ 0.02} & \textbf{0.19 $\pm$ 0.02} & 1.41 $\pm$ 0.04 \\
diabetes numeric & 0.994 $\pm$ 0.003 & \underline{1.11 $\pm$ 0.03} & \textbf{0.500 $\pm$ 0.000} & \textbf{1.06 $\pm$ 0.02} & \underline{0.500 $\pm$ 0.000} & 1.14 $\pm$ 0.04 \\
diggle table a1 & 0.96 $\pm$ 0.01 & \textbf{1.000 $\pm$ 0.006} & \textbf{0.49 $\pm$ 0.02} & \underline{1.01 $\pm$ 0.02} & \underline{0.53 $\pm$ 0.02} & 0.99 $\pm$ 0.02 \\
disclosure x noise & 1.000 $\pm$ 0.000 & 1.12 $\pm$ 0.01 & \underline{0.59 $\pm$ 0.02} & \textbf{1.047 $\pm$ 0.008} & \textbf{0.55 $\pm$ 0.02} & \underline{1.10 $\pm$ 0.02} \\
ecoli & 0.842 $\pm$ 0.005 & 1.8 $\pm$ 0.1 & \underline{0.33 $\pm$ 0.04} & \textbf{1.4 $\pm$ 0.4} & \textbf{0.24 $\pm$ 0.09} & \underline{1.4 $\pm$ 0.5} \\
glass & 0.88 $\pm$ 0.01 & 1.89 $\pm$ 0.07 & \underline{0.37 $\pm$ 0.03} & \textbf{1.53 $\pm$ 0.08} & \textbf{0.36 $\pm$ 0.03} & \underline{1.53 $\pm$ 0.10} \\
hayes-roth & 0.706 $\pm$ 0.009 & \underline{1.097 $\pm$ 0.004} & \underline{0.506 $\pm$ 0.009} & \textbf{1.096 $\pm$ 0.007} & \textbf{0.47 $\pm$ 0.01} & 1.14 $\pm$ 0.01 \\
heart-statlog & 0.63 $\pm$ 0.01 & 1.13 $\pm$ 0.01 & \underline{0.40 $\pm$ 0.03} & \textbf{1.050 $\pm$ 0.005} & \textbf{0.35 $\pm$ 0.03} & \underline{1.062 $\pm$ 0.007} \\
iris & 0.986 $\pm$ 0.004 & 3.7 $\pm$ 0.1 & \underline{0.65 $\pm$ 0.02} & \textbf{2.3 $\pm$ 0.2} & \textbf{0.63 $\pm$ 0.02} & \underline{2.4 $\pm$ 0.2} \\
liver-disorders & 0.956 $\pm$ 0.008 & 1.27 $\pm$ 0.02 & \textbf{0.45 $\pm$ 0.05} & \textbf{1.11 $\pm$ 0.02} & \underline{0.47 $\pm$ 0.05} & \underline{1.16 $\pm$ 0.03} \\
longley & 1.000 $\pm$ 0.000 & 1.04 $\pm$ 0.02 & \underline{0.49 $\pm$ 0.08} & \underline{0.97 $\pm$ 0.02} & \textbf{0.48 $\pm$ 0.08} & \textbf{0.97 $\pm$ 0.02} \\
machine cpu & 0.85 $\pm$ 0.02 & \underline{1.43 $\pm$ 0.04} & \underline{0.311 $\pm$ 0.009} & \textbf{1.31 $\pm$ 0.02} & \textbf{0.291 $\pm$ 0.008} & 1.59 $\pm$ 0.04 \\
mu284 & 0.951 $\pm$ 0.006 & \textbf{1.13 $\pm$ 0.02} & \underline{0.33 $\pm$ 0.02} & \underline{1.19 $\pm$ 0.04} & \textbf{0.31 $\pm$ 0.02} & 1.21 $\pm$ 0.05 \\
no2 & 0.973 $\pm$ 0.004 & \underline{1.12 $\pm$ 0.02} & \underline{0.30 $\pm$ 0.02} & \textbf{1.10 $\pm$ 0.02} & \textbf{0.26 $\pm$ 0.02} & 1.14 $\pm$ 0.03 \\
pm10 & 0.972 $\pm$ 0.002 & 1.135 $\pm$ 0.009 & \underline{0.37 $\pm$ 0.02} & \textbf{1.08 $\pm$ 0.01} & \textbf{0.34 $\pm$ 0.02} & \underline{1.10 $\pm$ 0.02} \\
prnn fglass & 0.88 $\pm$ 0.01 & 1.95 $\pm$ 0.08 & \underline{0.43 $\pm$ 0.03} & \textbf{1.58 $\pm$ 0.07} & \textbf{0.43 $\pm$ 0.03} & \underline{1.61 $\pm$ 0.09} \\
rabe 131 & 0.996 $\pm$ 0.001 & \textbf{1.016 $\pm$ 0.005} & \underline{0.45 $\pm$ 0.02} & \underline{1.019 $\pm$ 0.008} & \textbf{0.43 $\pm$ 0.02} & 1.02 $\pm$ 0.01 \\
sleep & 0.94 $\pm$ 0.02 & 1.32 $\pm$ 0.05 & \underline{0.49 $\pm$ 0.05} & \textbf{1.2 $\pm$ 0.1} & \textbf{0.49 $\pm$ 0.05} & \underline{1.2 $\pm$ 0.1} \\
strikes & 0.968 $\pm$ 0.002 & \underline{1.215 $\pm$ 0.005} & \underline{0.455 $\pm$ 0.005} & \textbf{1.210 $\pm$ 0.007} & \textbf{0.424 $\pm$ 0.006} & 1.233 $\pm$ 0.009 \\
vehicle & 0.961 $\pm$ 0.006 & 1.27 $\pm$ 0.03 & \underline{0.34 $\pm$ 0.10} & \textbf{1.16 $\pm$ 0.04} & \textbf{0.3 $\pm$ 0.1} & \underline{1.17 $\pm$ 0.04} \\
wine & 0.987 $\pm$ 0.002 & 1.180 $\pm$ 0.008 & \underline{0.30 $\pm$ 0.04} & \textbf{1.14 $\pm$ 0.01} & \textbf{0.28 $\pm$ 0.04} & \underline{1.16 $\pm$ 0.01} \\
\bottomrule
\end{tabular}
\end{table*}

\subsection{Spatial Datasets}
DBSCAN is often used to cluster spatial datasets~\cite{varghese2013spatial} as clustering permits a generalization of the spatial component like explicit location and extension of spatial objects which define implicit relations of spatial neighbourhood~\cite{varghese2013spatial}. Furthermore, geographical data often has a density-based structure, with positions clustered in hubs, making DBSCAN a natural fit. We compare here ExDBSCAN with both BayCon baselines on two datasets: UrbanGb~\cite{urbangb} and GPS trajectories~\cite{gps_trajectories_354} reporting longitude and latitude for car accidents and vehicles trajectories respectively.

We report average and standard error of the mean aggregated on the instances where both instances find a counterfactual. On \textit{UrbanGb}, ExDBSCAN achieves a proximity of $1.48 \pm 0.08$, BayCon Random has a proximity if $1.78 \pm 0.08$ and BayCon closest a proximity of $1.5 \pm 0.1$. The three methods have a validity of $0.74 \pm 0.03$, $0.44 \pm 0.02$ and $0.27 \pm 0.2$ respectively. Finally, BayCon achieves a validity of $0.31$ while ExDBSCAN achieves a validity of $1$.

In the GPS trajectories datasets instead, BayCon is unable to find a counterfactuals for any point and we thus reports only ExDBSCAN results, achieving $19.7 \pm 0.08$ proximity, $0.93 \pm 0.01$ diversity and a validity of $1$.